\DeclareMathAlphabet{\mathpzc}{OT1}{pzc}{m}{it}
\newtheorem{propo}{Proposition}[section]
\newtheorem{lemma}[propo]{Lemma}
\newtheorem{coro}[propo]{Corollary}
\newtheorem{thm}[propo]{Theorem}
\newtheorem{theorem}[propo]{Theorem}
\def\cP{{\cal P}}
\def\cPO{{\cal P}_\Omega}
\def\cL{{\cal L}}
\def\bcL{\overline{\cal L}}
\def\reals{{\mathbb R}}
\def\int{{\mathbb Z}}
\def\prob{{\mathbb P}}
\def\E{\mathbb E}
\def\deg{{\rm deg}}
\def\ones{{\mathds 1}}
\def\id{{\mathds I}}
\def\be{{\bar e}}
\def\r{{r}}
\def\a{{a}}
\def\<{\langle}
\def\>{\rangle}
\def\tS{\widetilde S}
\def\cP{{\cal P}}
\def\hT{\widehat T}
\def\v{v}
\def\poly{{\rm poly}}
\def\tepsilon{{\tilde{\varepsilon}}}
\def\R{\reals}
\def\u{{\bf u}}
\def\v{{\bf v}}
\def\e{{\bf e}}
\def\uo{\u^*}
\def\vo{\v^*}
\def\uoi{u^*_i}
\def\uoj{u^*_j}
\def\ui{u_i}
\def\uj{u_j}
\def\uk{u_k}
\def\err{{\rm err}}
\def\eps{\epsilon}
\def\Tmax{T_{\rm max}}
\newcommand{\ip}[2]{\langle #1,#2\rangle}
\def\poly{{\rm poly}}
\def\uoi{\uo_i}
\def\uoj{\uo_j}
\def\uol{\uo_\ell}
\def\uoq{\uo_q}
\def\ui{\u_i}
\def\uj{\u_j}
\def\uk{\u_k}
\def\uq{\u_q}
\def\ul{\u_\ell}
\def\dv{{\bf d}^v}
\def\du{{\bf d}^u}
\def\dul{{\bf d}_\ell}
\def\dulj{\dul(j)}
\def\dulk{\dul(k)}
\def\duq{{\bf d}_q}
\def\uqj{\uq(j)}
\def\uqk{\uq(k)}
\def\uoqi{\uoq(i)}
\def\uoqj{\uoq(j)}
\def\uoqk{\uoq(k)}
\def\ulj{\ul(j)}
\def\ulk{\ul(k)}
\def\uli{\ul(i)}
\def\uoli{\uol(i)}
\def\uolj{\uol(j)}
\def\uolk{\uol(k)}
\def\sq{\sigma_q}
\def\sll{\sigma_\ell}
\def\soq{\sigma_q^*}
\def\sol{\sigma_\ell^*}
\def\sdl{\Delta^\sigma_\ell}
\def\sdq{\Delta^\sigma_q}
\begin{document}
\date{}
\title{Provable Tensor Factorization with Missing Data}

\author{
Prateek Jain\thanks{Microsoft Research, \texttt{prajain@microsoft.com}} 
 and Sewoong Oh\thanks{
 Department of Industrial and Enterprise Systems Engineering, 
 Univrsity of Illinois at Urbana-Champaign, \texttt{swoh@illinois.edu}}
}

\maketitle

\begin{abstract}
We study the problem of low-rank tensor factorization in the presence of  
missing data. 
We ask the following question: how many sampled entries do we need, to 
efficiently and exactly reconstruct a  tensor 
with a low-rank orthogonal decomposition?  
We propose a novel alternating minimization based method 
which iteratively refines estimates of the singular vectors. 
We show that under certain standard assumptions, 
our method can recover a three-mode $n\times n\times n$ dimensional rank-$r$  tensor exactly 
from $O(n^{3/2} r^5 \log^4 n)$ randomly sampled entries. 
In the process of proving this result, we 
solve two challenging sub-problems for tensors with missing data. 
First, in the process of analyzing the initialization step, 
we  prove a generalization of a celebrated result by Szemer\'edie et al. on the spectrum of random graphs. 
Next,  we prove global convergence of alternating minimization with a good initialization. 
Simulations suggest that the dependence of the sample size on dimensionality $n$ is indeed tight. 
\end{abstract}

\section{Introduction}
\label{sec:intro} 
Several real-world applications routinely encounter multi-way data with structure which can be modeled as low-rank tensors. Moreover, in several settings, many of the entries of the tensor are missing, which motivated us to study the problem of low-rank tensor factorization with missing entries. 
For example, when recording electrical activities of the brain, 
the electroencephalography (EEG) signal can be represented as a 
three-way array  (temporal, spectral, and spatial axis). 
Oftentimes signals are lost due to mechanical failure or loose connection. 
Given numerous motivating applications, several methods have been proposed for this tensor completion problem. 
However, with the exception of 2-way tensors (i.e., matrices), the existing methods for higher-order tensors do not have theoretical guarantees and typically suffer from the curse of local minima. 

In general, finding a factorization of a tensor is an NP-hard problem, even when all the entries are available. 
However, it was recently discovered that by restricting  attention to a sub-class of tensors such as 
low-CP rank orthogonal tensors \cite{AGHKT12} or low-CP rank incoherent\footnote{The notion of incoherence we assume in \eqref{eq:defmu} can be thought of as incoherence between the  fibers and the standard basis vectors.} tensors \cite{AGJ14}, one can efficiently find a provably approximate factorization. In particular, 
exact recovery of the factorization is possible for a tensor with a low-rank orthogonal CP decomposition \cite{AGHKT12}. 
%assumption has led to an interesting line of work, which shows that the standard power-method method can factorize a low-rank tensor exactly and efficiently. 
We ask the question of recovering such a CP-decomposition when only a small number of entries are revealed, 
and show that exact reconstruction is possible even when we do not observe any entry in most of the fibers. 

% ------------------------------------------------------------------------------
\noindent{\bf Problem formulation.}
We study tensors that have an orthonormal CANDECOMP/PARAFAC (CP) tensor decomposition with a small number of components. Moreover, for simplicity of notation and exposition, we only consider symmetric third order tensors. We would like to stress that our techniques generalizes easily to handle {\em non-symmetric tensors} as well as {\em higher-order} tensors.
Formally, we assume that the true tensor $T$ has the the following form:
\begin{eqnarray}
	T &=& \sum_{\ell=1}^\r \sigma_\ell (\u_\ell \otimes \u_\ell \otimes \u_\ellㅣ) \;\in\; \reals^{n \times n \times n}\;,\label{eq:defcp}
\end{eqnarray}
with $\r \ll n$, $u_\ell\in\reals^n$ with $\|u_\ell\|=1$, and $u_\ell$'s are orthogonal to each other.  
We let $U\in \reals^{n\times r}$ be a tall-orthogonal matrix where $u_\ell$'s is the $\ell$-th column of $U$ 
and $U_i \perp U_j$ for $i\neq j$.  We use  $\otimes$ to denote the standard outer product such that 
the $(i,j,k)$-th element of $T$ is given by:  $T_{ijk}=\sum_a \sigma_a U_{ia} U_{ja}  U_{ka}$. 
%Through out this paper, we assume that $u_i$'s are orthogonal, but the algorithm naturally generalizes to more general case when they are not strictly orthogonal and when $T$ is not symmetric. 
We further assume that the $u_i$'s are unstructured, which is formalized by the notion of {\em incoherence} 
commonly assumed in matrix completion problems. 
The {\em incoherence} of a symmetric tensor with orthogonal decomposition is 
\begin{eqnarray}
	\mu(T) &\equiv&  \max_{i\in[n],\ell\in[r]}\;\sqrt{n}\, |U_{i\ell}| \;,
\label{eq:defmu}
\end{eqnarray}
where $[n]=\{1,\ldots, n\}$ is the set of the first $n$ integers. 
Tensor completion becomes increasingly difficult for tensors with larger $\mu(T)$, because the `mass' of the tensor can be concentrated on 
a few entries that might not be revealed. 
Out of $n^3$ entries of $T$, 
a subset $\Omega\subseteq [n]\times [n] \times [n]$ is revealed. 
We use $\cP_\Omega(\cdot)$ to denote 
the projection of a matrix onto the revealed set such that 
\begin{eqnarray*}
	\cP_\Omega(T)_{ijk} &=& \left\{ 
	\begin{array}{rl}
		T_{ijk} & \text{ if }(i,j,k)\in\Omega \;,\\
		0 & \text{ otherwise }\;. 
	\end{array}
	\right.
\end{eqnarray*}
We want to recover $T$ exactly using the given entries ($P_\Omega(T)$). 
We assume that each $(i,j,k)$ for all $i\leq j\leq k$ is included in $\Omega$ with a {\em fixed probability $p$} (since $T$ is symmetric, we include all permutations of $(i,j,k)$). %he goal is to ensure that $\lim_{n\rightarrow \infty}p \rightarrow 0$. 
This is equivalent to fixing the total number of samples $|\Omega|$ and 
selecting $\Omega$ {\em uniformly at random} over all ${n^3 \choose |\Omega|}$ choices. 
The goal is to ensure exact recovery with high probability and for %We want to understand what happens for most of such instances byproviding a guarantee that holds with high probability.   
%Also, we would like to ensure that 
$|\Omega|$ that is sub-linear in the number of entries ($n^3$). 

\noindent{\bf Notations.} 
For a tensor $T\in\reals^{n\times n\times n}$, we define a linear mapping using $U\in\reals^{n\times m}$ as 
$T[U,U,U] \in\reals^{m\times m\times m}$ such that $T[U,U,U]_{ijk} = \sum_{a,b,c} T_{abc}U_{ai}U_{bj}U_{ck}$.
The operator norm of a tensor is $\|T\|_2=\max_{\|x\|=1} T[x,x,x]$. 
The Frobenius norm of a tensor is $\|T\|_F = (\sum_{i,j,k}T_{ijk}^2)^{1/2}$.
The Euclidean norm of a vector is $\|\u\|_2 = (\sum_i \u_i^2)^{1/2}$.
We use $C,C'$ to denote any positive numerical constants and the actual value might change from line to line. 

%------------------------------------------------------------------------------------------------------------------------
\subsection{Algorithm} 
\label{sec:algo}

Ideally, one would like to minimize the rank of a tensor that explains all the sampled entries. 
	\begin{eqnarray}
		\label{eq:prob}
		\underset{\hT}{\text{minimize}} && {\rm rank}(\hT)\\
		\text{subject to} && T_{ijk}=\hT_{ijk} \text{ for all } (i,j,k)\in\Omega\;.\nonumber
	\end{eqnarray}
However, even computing the rank of a tensor is NP-hard in general, 
where the rank is defined as the minimum $r$ for which CP-decomposition exists \cite{SL08}. 
Instead, we fix the rank of $\hT$ by explicitly modeling $\hT$ as $\hT=\sum_{\ell\in[r]}\sll ( \ul\otimes \ul\otimes \ul)$, and solve the following problem:%We now use the standard alternating minimization method, but performs multiple deflation-style steps, to minimize the error: 
\begin{eqnarray}\label{eq:prob_eq}
\underset{\hT, \text{rank}(\hT)=r }{\text{minimize}}\Big\|\cP_\Omega(T)- \cP_\Omega\big(\hT\big)\Big\|_F^2&=&\underset{\{\sigma_\ell,\ul\}_{\ell\in[r]} }{\text{minimize}} \Big\|\cP_\Omega(T)- \cP_\Omega\big(\sum_{\ell\in[r]}\sll ( \ul\otimes \ul\otimes \ul)\,\big)\Big\|_F^2
\end{eqnarray}
Recently, \cite{JNS13, hardt2013provable} showed that an alternating minimization technique, 
%which alternatively optimizes for left and right singular vectors of a matrix, 
can  recover a matrix with missing entries exactly. 
%Interestingly, generalization of 
%the alternating minimization method to the tensor case is fairly 
%straightfoward and the technique is in fact fairly popular with 
%practicioners. 
We generalize and modify the algorithm for the case of higher order tensors and 
study it rigorously for tensor completion. However, due to special structure in higher-order tensors, our algorithm as well as analysis is significantly different than the matrix case (see Section~\ref{sec:resultam} for more details). 

%\noindent{\bf Alternating minimization.}
To perform the minimization, we repeat the outer-loop getting refined estimates for all $r$ components. 
In the inner-loop, we loop over each component and solve for $\uq$ while fixing the others $\{\ul\}_{\ell\neq q}$. More precisely, we set $\hT=\uq^{t+1}\otimes \uq \otimes \uq + \sum_{\ell\neq q}\sigma_\ell\ul\otimes \ul\otimes \ul$ in \eqref{eq:prob_eq} and then find optimal $\uq^{t+1}$ by minimizing the least squares objective given by \eqref{eq:prob_eq}. 
%in each inner loop, we fix $U_{\backslash i}=\{\uj, j\in [r]\backslash i\}$ which is the current estimate for $U^*_{\backslash i}=\{\uoj, j\in [r]\backslash i\}$, and iterate over $\ui$. 
That is, each inner iteration is a simple least squares problem over the known entries, 
hence can be implemented efficiently and is also embarrassingly parallel. %trivially parallelizes. 

\begin{algorithm}[ht]
\caption{Alternating Minimization for Tensor Completion}
\label{algo:rkk}
  \begin{algorithmic}[1]
    \STATE Input: $P_\Omega(T)$, $\Omega$, $\r$, $\tau$
    \STATE Initialize with $[(\u_1^0,\sigma_1), (\u_2^0,,\sigma_2), \dots, (\u_r^0,\sigma_r)]={\sc RTPM}(P_\Omega(T), r)$ \hfill (RTPM of \cite{AGHKT12})
    \STATE $[\u_1, \u_2, \dots, \u_r]={\rm Threshold}([\u_1^0, \u_2^0, \dots, \u_r^0])$ \hfill (Clipping scheme of \cite{JNS13})
    \FORALL {$t=1, 2, \dots, \tau$} 
    \STATE /*OUTER LOOP */
    \FORALL {$q=1, 2, \dots, r$}
    \STATE /*INNER LOOP*/
    \STATE $\hat{\u}_1^{t+1}= \arg\min_{\uq^{t+1}} \| \cP_\Omega(T- \uq^{t+1}\otimes \uq\otimes \uq-\sum_{\ell\neq q}\sll\cdot \ul\otimes \ul\otimes \ul)\|_F^2$
    \STATE $\sq^{t+1}=\|\hat{\uq}^{t+1}\|_2$
    \STATE $\uq^{t+1}=\hat{\u}_1^{t+1}/\|\hat{\u}_q^{t+1}\|_2$
    \ENDFOR 
    \STATE $[\u_1, \u_2, \dots, \u_r]\gets [\u_1^{t+1}, \u_2^{t+1}, \dots, \u_r^{t+1}]$
    \STATE $[\sigma_1, \sigma_2, \dots, \sigma_r]\gets [\sigma_1^{t+1}, \sigma_2^{t+1}, \dots, \sigma_r^{t+1}]$
    \ENDFOR
    \STATE Output: $\hT = \sum_{q\in[\r]}  \sq (\uq \otimes \uq \otimes \uq)$
  \end{algorithmic}
\end{algorithm}

The {\em main novelty} in our approach is that we refine all $r$ components iteratively as opposed to the sequential deflation technique used by the existing methods for tensor decomposition (for fully observed tensors). In sequential deflation methods, components $\{\u_1, \u_2, \dots, \u_r\}$ are estimated sequentially and estimate of say $\u_2$ is not used to refine  $\u_1$. In contrast, our algorithm  iterates over all $r$ estimates in the inner loop, so as to obtain refined estimates for all $\ui$'s in the outer loop. 
We believe that such a technique could be applied to improve the error bounds of (fully observed) tensor decomposition methods as well. %\cite{AGHKT12}. 

As our method is directly solving a non-convex problem, it can easily get stuck in local minima. The key reason our approach can overcome the curse of local minima is that we start with a provably good initial point which is only a small distance away from the optima. To obtain such an initial estimate, 
%For example, if $U^0=0$ then the algorithm cannot make any progress. 
%In particular, our algorithm requires that the initial estimate $U^0$ is at most a (small) constant distance from $U^*$. 
%To achieve this, 
we compute a low-rank approximation of the observed tensor using {\em Robust Tensor Power Method} (RTPM) \cite{AGHKT12}. 
RTPM is a generalization of the widely used 
power method for computing leading singular vectors of a matrix and can approximate the largest singular vectors up to the spectral norm of the ``error'' tensor. 
Hence, the challenge is to show that the error tensor has small spectral norm (see Theorem \ref{thm:tcinit}). We perform a thresholding step similar to \cite{JNS13} (see Lemma~\ref{lem:threshold}) after the RTPM step to ensure that the estimates we get are incoherent, which is critical for our analysis. 
%One of the main technical contribution of the paper is an analysis which shows that 
%if $p=\frac{\log^2 n}{\epsilon^2 n^{3/2}}$ entries, i.e., if $|\Omega|\geq \frac{n^{3/2}\log^2 n}{\epsilon^2}$, 
%then $\|P_{\Omega}(T)-T\|_2\leq \epsilon\cdot \|T\|_\infty $, 
%i.e., the spectral norm of the error tensor is bounded by $\epsilon\cdot T_{max},$ where $T_{max}=\|T\|_\infty$. 

Our analysis requires the sampled entries $\Omega$ to be independent of the current iterates $\ui, \forall i$, 
which in general is not possible as $\ui$'s are computed using $\Omega$. To avoid this issue, we divide the given samples 
($\Omega$) into equal $r\cdot \tau$ parts randomly where $\tau$ is the number of outer loops (see Algorithm~\ref{algo:rkk}). 
%See Algorithm~\ref{algo:rkk} for a detailed pseudo-code of our algorithm. 

%------------------------------------------------------------------------------------------------------------------------
\subsection{\bf Main Result}

\begin{thm}
 \label{thm:main}
	Consider any rank-$r$ symmetric tensor $T\in\reals^{n\times n\times n}$ with an orthogonal CP decomposition in \eqref{eq:defcp} 
	satisfying $\mu$-incoherence as defined in \eqref{eq:defmu}. 
	For any positive $\varepsilon>0$, there exists a positive numerical constant $C$ such that 
	if entries are revealed with probability 
	\begin{eqnarray*}
		p &\geq& C\, \frac{\mu^6\, r^5\, \sigma_{\rm max}^4\,(\log n)^4\,\log(r\|T\|_F/\varepsilon)}{\sigma_{\rm min}^4\,n^{3/2}} \;,
	\end{eqnarray*}
	then the following holds with probability at least $1-n^{-5}\log_2(4\sqrt{r}\,\|T\|_F/\varepsilon)$: 
	\begin{itemize}
	 \item the problem \eqref{eq:prob} has a unique optimal solution; and 
	 \item 	$\log_2(\frac{4\sqrt{r}\,\|T\|_F}{\varepsilon})$ iterations of Algorithm \ref{algo:rkk} 
	produces an estimate $\hT$ s.t. 	$\|T-\hT\|_F \leq \varepsilon\;.$  
	\end{itemize}
\end{thm}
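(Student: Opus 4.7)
The plan is to prove the theorem in two stages. First, I would establish that the RTPM initialization followed by the thresholding step produces estimates sufficiently close to the true components. Second, I would show that alternating minimization, starting from such an initialization and using a fresh sample set at each outer iteration, converges geometrically.

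\textbf{Initialization.} The RTPM subroutine effectively acts on the rescaled tensor $(1/p)\cP_\Omega(T)$, and its approximation guarantees depend on the spectral norm of the noise tensor $E := (1/p)\cP_\Omega(T)-T$. The central task here is a tensor concentration bound of the form $\|E\|_2 \leq C\,\mu^3 r^{3/2}\sigma_{\max}/\sqrt{pn}$ with high probability; this is precisely the promised generalization of the Feige--Ofek / Szemer\'edi-type spectral bound on random graphs to random tensors. Under the assumed lower bound on $p$, this noise level is small enough that RTPM returns $(\u_\ell^0,\sigma_\ell^0)$ with $\|\u_\ell^0-\u_\ell^*\|_2 = O(1/\mathrm{poly}(r,\mu,\kappa))$, where $\kappa := \sigma_{\max}/\sigma_{\min}$. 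The subsequent thresholding step clips any coordinate of magnitude larger than $O(\mu\sqrt{r/n})$, ensuring the initial iterates remain incoherent in the sense of \eqref{eq:defmu}; this incoherence is essential throughout the convergence analysis. I would package these facts by invoking Theorem~\ref{thm:tcinit} for the noise bound together with the thresholding Lemma~\ref{lem:threshold}.

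\textbf{Inductive convergence.} Given iterates $\{\u_\ell^t\}_{\ell\in[r]}$ with $\max_\ell \|\u_\ell^t-\u_\ell^*\|_2 \leq \varepsilon_t$, I would prove that after one outer loop $\max_\ell \|\u_\ell^{t+1}-\u_\ell^*\|_2 \leq \varepsilon_t/2$. Each inner-loop least-squares subproblem for $\u_q^{t+1}$ reduces to normal equations $B_q\,\u_q^{t+1} = c_q$, where $B_q$ is diagonal with entries $(B_q)_{ii} = \sum_{(j,k):(i,j,k)\in \Omega_{t,q}} (\u_q^t(j)\u_q^t(k))^2$ and $c_q$ splits into (i) a signal term whose noiseless limit is $\sigma_q^*\langle \u_q^t,\u_q^*\rangle^2\,\u_q^*$, (ii) cross-terms arising from the other components $\{\u_\ell^t\}_{\ell\neq q}$, and (iii) a sampling-noise term capturing the deviation of $\cP_\Omega$ from $p\cdot \id$ applied to the signal. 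Because the fresh sample set $\Omega_{t,q}$ is independent of the iterates (exactly why the samples are split into $r\tau$ independent batches), each of these three contributions can be controlled by scalar or matrix Bernstein-type inequalities applied to sums of independent, bounded random variables. The signal term provides the factor $\langle \u_q^t,\u_q^*\rangle^2 \geq 1-\varepsilon_t^2$; the interference and noise terms are bounded by $C\,\varepsilon_t\,\sigma_{\min}/\sqrt r$ under the stated sample complexity; and combining the three yields the claimed factor-of-two contraction. After $\log_2(4\sqrt{r}\|T\|_F/\varepsilon)$ outer loops the accumulated Frobenius error is at most $\varepsilon$.

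\textbf{Main obstacle.} The hardest single step is the tensor spectral concentration bound used in the initialization. At $p \sim n^{-3/2}$ the sampling is so sparse that most rows of a fixed two-dimensional slice of $\cP_\Omega(T)$ have no observed entry at all, and a direct matrix-Bernstein bound overshoots substantially. I would follow the Feige--Ofek template adapted to tensors: first zero out ``heavy'' slices, rows, and columns whose sampled degree exceeds a constant multiple of $p n^2$; then bound the spectral norm of the remaining regularized sparse tensor by an $\epsilon$-net on the sphere combined with a Kahn--Szemer\'edi-style counting argument that groups coordinates by their magnitude classes; and finally argue that the removed entries contribute negligibly, using incoherence to show that no fiber of $T$ can carry too much mass. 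A secondary obstacle is the cross-component interference in the alternating-minimization step: because the iterates $\{\u_\ell^t\}_{\ell\neq q}$ are only approximately orthogonal, their cross-terms are amplified by the sparsity of $\Omega$ and must be controlled using both the inductively-preserved incoherence of the iterates and the independence afforded by sample splitting. A union bound over the $r\tau = O(r\log(\|T\|_F/\varepsilon))$ inner iterations then yields the stated failure probability $n^{-5}\log_2(4\sqrt{r}\,\|T\|_F/\varepsilon)$.
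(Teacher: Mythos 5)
Your proposal follows essentially the same outline as the paper's proof of Theorem~\ref{thm:main}: invoke the spectral concentration result (Theorem~\ref{thm:tcinit}, via Lemma~\ref{lem:tcinit}) for RTPM initialization, use the thresholding step (Lemma~\ref{lem:threshold}) to restore $2\mu$-incoherence, and then apply the geometric-contraction theorem for alternating minimization (Theorem~\ref{thm:rk}) with sample splitting over the $r\tau$ inner iterations. The additional detail you give --- the normal-equation decomposition of $\widehat\u_q^{t+1}$ into signal, cross-component interference, and sampling-noise terms, and the Feige--Ofek/Kahn--Szemer\'edi regularization-plus-$\epsilon$-net strategy for the spectral bound --- is not part of the short proof of Theorem~\ref{thm:main} itself but describes the internals of the sub-theorems, and your account of those internals is consistent with how the paper argues.

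The one missing step is the final conversion from per-component error to Frobenius error on the full tensor. You assert that after $\log_2(4\sqrt r\,\|T\|_F/\varepsilon)$ iterations ``the accumulated Frobenius error is at most $\varepsilon$,'' but this does not follow directly from $\max_q\|\u_q-\u_q^*\|_2\le\tepsilon$ and $\max_q|\sigma_q-\sigma_q^*|\le\sigma_q^*\tepsilon$; one needs an argument like Lemma~\ref{lem:tensordist}, which bounds $\|\sum_q\sigma_q\u_q^{\otimes 3}-\sum_q\sigma_q^*(\u_q^*)^{\otimes 3}\|_F$ by $4r^{1/2}\|T\|_F\tepsilon$ via a triangle/Cauchy--Schwarz decomposition across the $r$ components. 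Choosing $\tepsilon=\varepsilon/(4r^{1/2}\|T\|_F)$ then closes the argument and explains the particular $\log_2(4\sqrt r\,\|T\|_F/\varepsilon)$ iteration count appearing in the statement. Finally, like the paper, you do not explicitly address the ``unique optimal solution'' bullet; neither the paper nor your proposal proves that claim beyond what is implicit in the recovery guarantee, so that omission is not a point of difference.
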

Note that the above result can be generalized to $k$-mode tensors in a straightforward manner, where exact recovery is guaranteed if,  $p \geq C\, \frac{\mu^6\, r^5\, \sigma_{\rm max}^4\,(\log n)^4\,\log(r\|T\|_F/\varepsilon)}{\sigma_{\rm min}^4\,n^{k/2}}$. However, for simplicity of notation and to emphasize key points of our proof we present our proof for $3$-mode tensors only in Section~\ref{sec:resultmain}.

We provide a proof of Theorem~\ref{thm:main} in Section \ref{sec:results}. 
For an incoherent, well-conditioned, and low-rank tensor with $\mu=O(1)$ and $\sigma_{\rm min}=\Theta(\sigma_{\rm max})$, 
alternating minimization requires $O(r^5 n^{3/2}(\log n)^4)$ samples to get within an arbitrarily small normalized error. 
This is a vanishing fraction of the total number of entries $n^3$. 
Each step in the alternating minimization requires $O(r |\Omega|)$ operations, 
hence the alternating minimization only requires $O(r |\Omega| \log(r\|T\|_F/\varepsilon) )$ operations.
The initialization step requires $O(r^c|\Omega|)$ operations for some positive constant numerical $c$. 
When $r\ll n$, the computational complexity scales linearly in the sample size up to a logarithmic factor. 

A fiber in a third order tensor is an 
$n$-dimensional vector defined by fixing two of the axes and indexing over remaining one axis. 
The above theorem implies that among $n^2$ fibers of the form $\{T[\id,e_j,e_k]\}_{j,k\in[n]}$, 
it is sufficient to have only $O(n^{3/2}(\log n)^4)$ fibers with {\em any} samples. 
Most of the fibers are not sampled at all and, perhaps surprisingly, our approach can still recover the original low-rank tensor. 
This should be compared to the matrix completion setting where all fibers are required to have at least one sample. 

However, unlike matrices, the fundamental limit of higher order tensor completion is not known. 
Building on the percolation of Erd\"os-Ren\'yi graphs and the coupon-collectors problem, 
it is known that matrix completion has multiple rank-$r$ solutions 
when the sample size is less than $C\mu r n \log n$ \cite{CT10}, hence exact recovery is impossible.
But, such arguments do not generalize directly to higher order; see Section~\ref{sec:discussion} for more discussion. Interestingly, simulations in Section \ref{sec:exp} suggests that for $r=O(\sqrt{n})$, 
the sample complexity scales as $(r^{1/2}n^{3/2}\log n)$. That is, assuming the sample complexity provided by simulations is correct, our result achieves optimal dependence on $n$ (up to $\log$ factors). However, the dependency on $r$ is sub-optimal (see Section \ref{sec:discussion} for a discussion). 

\subsection{Empirical Results}
\label{sec:exp}
%%% Local Variables: 
%%% mode: latex
%%% TeX-master: "tensorcomp"
%%% End: 

Theorem \ref{thm:main} guarantees exact recovery when $p\geq C r^5 (\log n)^4/ n^{3/2}$.
Numerical experiments 
show that the average recovery rate converges to a universal curve over $\alpha$, where 
$p^* = \alpha r^{1/2}\ln n /((1-\rho) n^{3/2})$ in Figure \ref{fig:threshold}. 
Our bound is tight in its dependency $n$ up to a poly-logarithmic factor, 
but is loose in its dependency in the rank $r$. 
Further, it is able to recover the original matrix exactly even when the factors are not strictly orthogonal. 

We generate orthogonal matrices $U=[\u_1,\ldots,\u_r]\in\reals^{n\times r}$ uniformly at random 
with $n=50$ and $r=3$ unless specified otherwise. 
For a rank-$r$ tensor $T=\sum_{i=1}^r \u_i\otimes \u_i\otimes \u_i$, we randomly reveal each entry with probability $p$. 
A tensor is exactly recovered if the normalized root mean squared error, ${\rm RMSE}=\|T-\hat{T}\|_F/\|T\|_F$, is less than $10^{-7}$. 
Varying $n$ and $r$, we plot the recovery rate averaged over $100$ instances as a function of $\alpha$. 
The degrees of freedom in representing a symmetric tensor is $rn-r^2$. 
Hence for large, $r$ we need number of samples scaling as $r$.
Hence, the current dependence of $p^*=O(\sqrt{r})$ can only hold for $r=O(n)$. 
For not strictly orthogonal factors, the algorithm is robust. 
A more robust approach for finding an initial guess could improve the performance significantly, 
especially for non-orthogonal tensors.

\begin{figure}[h]
 \begin{center}
	\includegraphics[width=.3\textwidth]{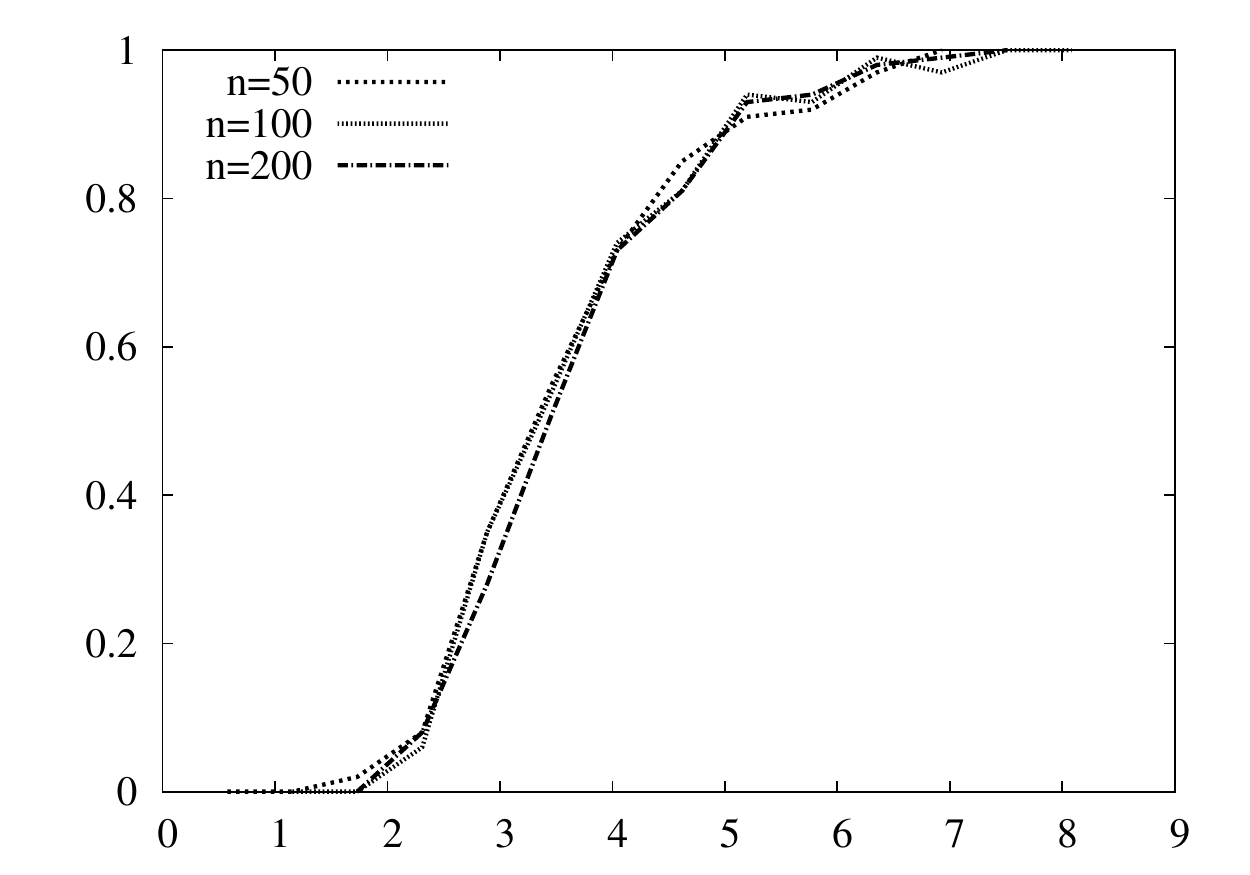}
%	\put(-160,100){\small Recovery rate}
	\put(-60,-5){$\alpha$}
	\includegraphics[width=.3\textwidth]{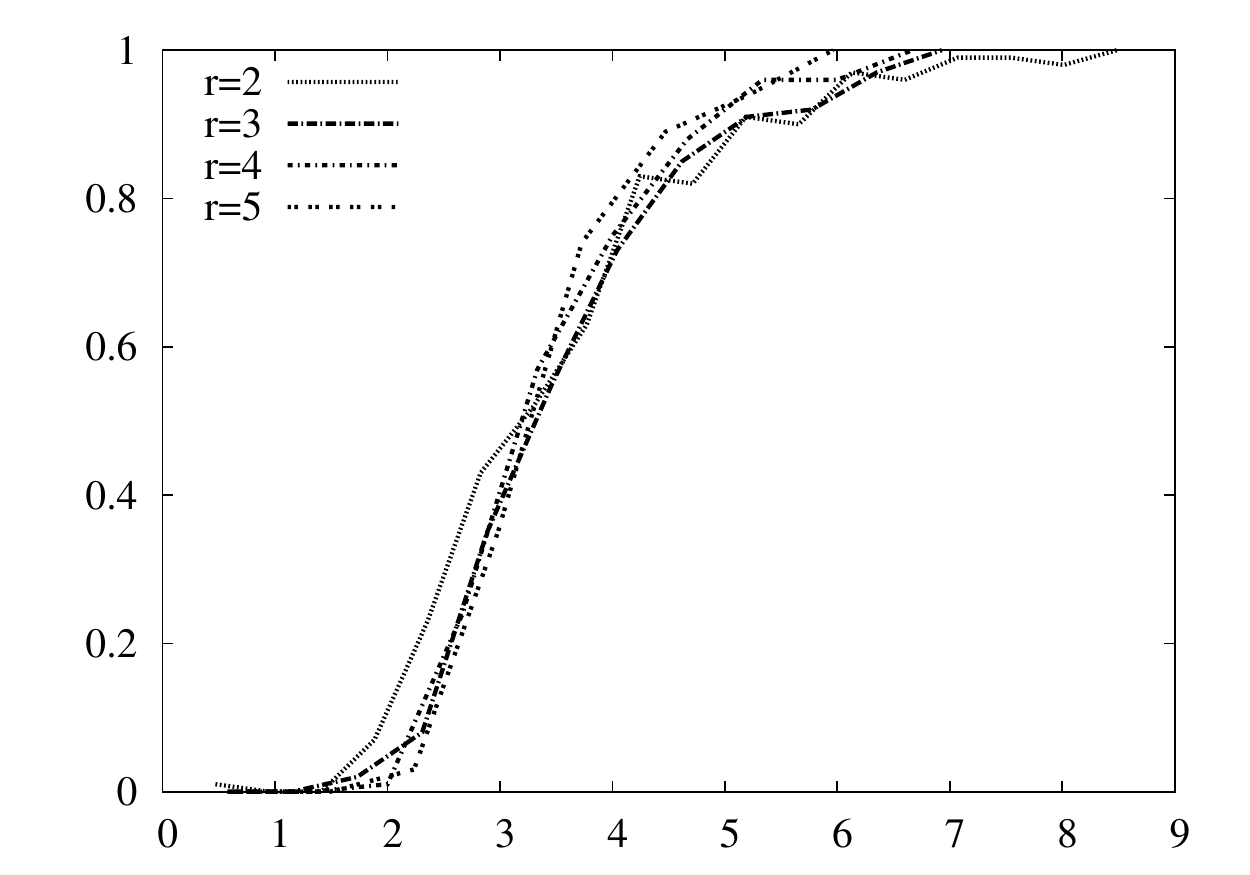}
	\put(-60,-5){$\alpha$}
	\includegraphics[width=.3\textwidth]{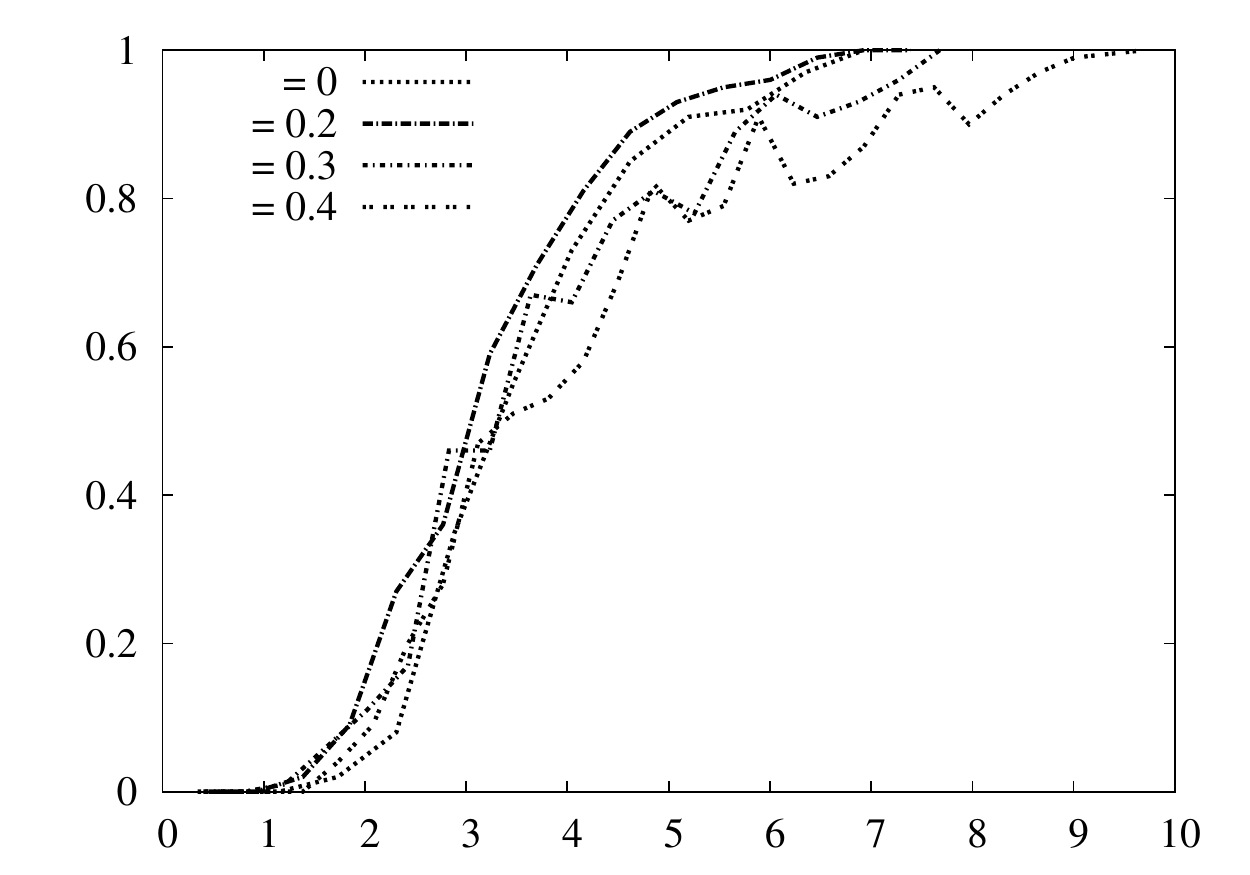}
	\put(-101,62){\tiny$\rho$}
	\put(-101,66){\tiny$\rho$}
	\put(-101,70){\tiny$\rho$}
	\put(-97,74){\tiny$\rho$}
	\put(-60,-5){$\alpha$}
\end{center}
\caption{Average recovery rate converges to a universal curve over $\alpha$ when $p=\alpha  r^{1/2}  \ln n/( (1-\rho)n^{3/2}) $, 
where $\rho=\max_{i\neq j\in[r]} \langle \u_i,\u_j\rangle$ and $r=O(\sqrt{n})$. }
\label{fig:threshold}
\end{figure}

%------------------------------------------------------------------------------------------------------------------------
\subsection{Related Work}
{\em Tensor decomposition and completion}: 
The CP model proposed in \cite{Hit27,CC70,Har70} is a multidimensional generalization of singular value decomposition of matrices. 
Computing the CP decomposition involves two steps: 
first apply a whitening operator to the tensor to get 
a lower dimensional tensor with orthogonal CP decomposition. 
Such a whitening operator only exists when $r\leq n$.
Then, apply known power-method techniques for exact orthogonal CP decomposition \cite{AGHKT12}. 
%incremental rank-one approximation or  
%generalization of power method. %introduced in \cite{LMV00}. 
%Analysis of this tensor power method under small perturbation 
%is provided in \cite{AGHKT12}. 
We use this algorithm as well as the analysis 
for the initial step of our algorithm. 
%To deal with CP decomposition with factors not strictly orthogonal, Anandkumar et al. introduced a novel power method \cite{AGJ14}. 
For motivation and examples of orthogonal CP models we refer to \cite{ZG01,AGHKT12}. 

Recently, many heuristics for tensor completion have been developed such as 
the weighted least squares \cite{ADKM11}, Gauss-Newton \cite{TB05}, 
alternating least-squares \cite{Bro98,WM01}, trace norm minimization \cite{LMP13}.
However, to the best of our knowledge, there is no tensor completion method with provable guarantees. 
%Building on the alternating least-squares  approaches, we propose a novel algorithm which recovers a low-rank tensor exactly even with a small number of samples scaling as $O(n^{3/2}(\log n)^4)$ (Theorem \ref{thm:main}).
In a different context, \cite{MHWG13} show that minimizing a weighted trace norm of flattened tensor provides exact recovery using $O(rn^{3/2})$ samples, but each observation needs to be a dense random projection of the tensor as opposed to observing just a single entry, which is the case in the tensor completion problem.% that are observed in tensor completion. %, and their techniques do not extend directly to the sample entries, i.e., tensor completion problem. %In a different context, when we observe linear projections of the tensor as opposed to sampled entries, it is shown that a weighted trace norm minimization approach recovers the low rank tensor from $O(rn^2)$ samples .

{\em Relation to matrix completion}: 
Matrix completion has been studied extensively in the last decade since 
the seminal paper by Candes and Recht \cite{CR09}. Since then, several provable approaches have been developed, such as, nuclear norm minimization \cite{CR09}, OptSpace \cite{KMO10}, and Alternating Minimization \cite{JNS13}. However, several aspects of tensor factorization makes 
it challenging to adopt matrix completion algorithms and 
analysis techniques directly. 

First, there is no natural convex surrogate of the tensor rank function and developing such a function is in fact a topic of active research \cite{TomiokaS13,MHWG13}
%Nuclear norm minimization has been widely used for matrix completion,  
%%Nuclear norm of a matrix is the sum of all singular values $ \|M\|_* = \sum_i \sigma_i(M)$ and the nuclear norm ball is the convex hull of rank one matrices. 
%since it naturally promotes low-rank solutions. 
%However,  there is no such convex surrogate for higher order tensors,   
%making it impossible to generalize the lessons learned from the analysis of nuclear norm minimizations. 
Next, even when all entries are revealed, tensor decomposition methods 
such as simultaneous power iteration are known to get stuck at local extrema, 
making it challenging to 
apply matrix decomposition methods directly. 
Third, for the initialization step, 
the best low-rank approximation of a matrix is unique and finding it is trivial. 
However, for tensors, finding the best low-rank approximation is notoriously difficult. 

%On the other hand, CP model is unique for higher order tensors given $2\r+2\leq 3n$ \cite{Kru77}. 
%This together with XXX helps us in identifying XXX .
On the other hand, some aspects of tensor decomposition makes it possible to prove stronger results. 
%Note that our problem formulation, especially the assumptions regarding randomness of $\Omega$ 
%and incoherence of $U$, are same as the ones used in the low-rank matrix completion literature 
%\cite{candes, jain}. However, in the 
Matrix completion aims to recover 
the underlying matrix only, since the factors are not uniquely defined due to invariance under 
rotations. 
However, for orthogonal CP models, we can hope to recover the 
individual singular vectors $\u_i$'s exactly. In fact, Theorem~\ref{thm:main} shows that our method indeed recovers the individual singular vectors exactly. 
{\em Spectral analysis of tensors and hypergraphs}: 
Theorem \ref{thm:tcinit} and Lemma \ref{lem:tcinit} should be compared to copious line of work on spectral analysis of matrices \cite{AFK01,KMO10}, with an important motivation of developing fast algorithms for low-rank matrix approximations. 
We prove an analogous guarantee for higher order tensors and provide a fast algorithm for low-rank tensor approximation.  Theorem \ref{thm:tcinit} 
is also a generalization of the celebrated result of Friedman-Kahn-Szemer\'edi \cite{FKS89} and 
Feige-Ofek \cite{FeO05} on the second eigenvalue of random graphs.  
We provide an upper bound the largest second eigenvalue of a random hypergraph, 
where each edge includes three nodes and each of the ${n \choose 3}$ edges is selected with probability $p$.

% $n$-rank or multilinear rank \cite{GRY11,KSV13}

%We explain our alternating minimization algorithm in Section~\ref{sec:algo}. 
%Main results on the performance guarantee is provided in Section~\ref{sec:results}. 
%We first bound the error in the initialization step (Section~\ref{sec:resultinit}) and then 
%use it to prove exact recovery in Section~\ref{sec:resultam}.
%Numerical experiments in Section~\ref{sec:exp} confirm that our analysis is tight up to a poly-logarithmic factor.

%%% Local Variables: 
%%% mode: latex
%%% TeX-master: "tensorcomp"
%%% End: 

%\input{problem}
%\input{algo}
\section{Analysis of the Alternating Minimization Algorithm}
\label{sec:results} 
In this section, we provide a proof of Theorem \ref{thm:main} and 
the proof sketches of the required main technical theorems. 
Formal proofs of the technical theorems and lemmas are provided in the appendix. 
There are two key components: $a$) the analysis of the initialization step (Section \ref{sec:resultinit}); and 
$b$) the convergence of alternating minimization given a sufficiently accurate initialization (Section \ref{sec:resultam}). 
We use these two analyses to prove Theorem \ref{thm:main} in Section \ref{sec:resultmain}.
%We provide our initialization step analysis in Section~\ref{sec:resultinit} and the alternating minimization convergence analysis in Section~\ref{sec:resultam}. To emphasize the key techniques of our convergence analysis, we first present the analysis for rank-two case in Section~\ref{sec:rtwo} and then present the general rank-$r$ analysis in Section~\ref{sec:rankr}. 

\subsection{Initialization Analysis}
\label{sec:resultinit}

We first show that $(1/p)\cP_\Omega(T)$ is close to $T$ in spectral norm, 
and use it bound the error of robust power method applied directly to $\cP_\Omega(T)$. 
The normalization by $(1/p)$ compensates for the fact that many entries are missing. 
For a proof of this theorem, we refer to Appendix \ref{app:init}.
\begin{thm}[Initialization]
	\label{thm:tcinit}
	For $p=\alpha/n^{3/2}$ satisfying $\alpha \geq\log n$, 
	there exists a positive constant $C>0$ such that, with probability at least $1-n^{-5}$, 
	\begin{eqnarray}
		\frac{1}{\Tmax\,n^{3/2}\,p}\|\cP_{\Omega}(T) - p\,T \|_{2} &\leq& \frac{C\,(\log n)^2} {\sqrt{\alpha}} \;,
		\label{eq:tcinit}
	\end{eqnarray}
	where $\Tmax \equiv \max_{i,j,k} T_{ijk}$, and $\|T\|_2\equiv\max_{\|u\|=1}T[u,u,u]$ is the operator norm.
\end{thm}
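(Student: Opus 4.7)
Setting $E := \cP_\Omega(T) - pT$, every entry has the form $E_{ijk} = (X_{ijk}-p)T_{ijk}$ with independent Bernoullis $X_{ijk}\sim\mathrm{Bern}(p)$; the symmetry constraint on $\Omega$ only introduces a constant factor that can be absorbed by a standard symmetrization. Since $E$ is symmetric, it suffices to bound $\sup_{x,y,z\in S^{n-1}}|E[x,y,z]|$, which dominates $\|E\|_2$. My plan is to extend the Friedman-Kahn-Szemer\'edi / Feige-Ofek argument for the second eigenvalue of Erd\H{o}s-R\'enyi graphs from matrices (2-uniform random hypergraphs) to 3-uniform random hypergraphs. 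A standard $\epsilon$-net argument replaces the supremum over $(S^{n-1})^3$ by a maximum over a finite net $\mathcal{N}$ of cardinality $\exp(Cn)$ at a constant multiplicative cost, so it is enough to control each trilinear form $|E[x,y,z]|$ for $(x,y,z)\in\mathcal{N}^3$ with a tail bound strong enough to survive a union bound of size $\exp(3Cn)$.

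\textbf{Light/heavy split.} For a fixed $(x,y,z)$, choose a threshold $\theta$ and decompose the index set $[n]^3$ into a \emph{light} part $\mathcal{L}=\{(i,j,k):|x_iy_jz_k|\le\theta\}$ and a \emph{heavy} part $\mathcal{H}=[n]^3\setminus\mathcal{L}$. On $\mathcal{L}$, the summands of $\sum_{\mathcal{L}}(X_{ijk}-p)T_{ijk}x_iy_jz_k$ are bounded by $\Tmax\theta$ and the variance proxy is bounded by $p\Tmax^2\sum_{i,j,k}(x_iy_jz_k)^2 = p\Tmax^2$. For a suitable choice of $\theta$ (of order $\sqrt{\alpha}/n$ up to logarithmic factors), Bernstein's inequality yields a deviation of the right order with tail $\exp(-C'n)$, which absorbs the union bound over $\mathcal{N}^3$ and contributes to the final bound.

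\textbf{Main obstacle: heavy part via hypergraph discrepancy.} The heavy part cannot be handled by Bernstein because a single sampled entry can contribute disproportionately; this is where the structure of $\Omega$ as a random 3-uniform hypergraph must be exploited combinatorially, and where I expect the main difficulty to lie. I would partition coordinates into dyadic magnitude classes $A_a=\{i:|x_i|\in(2^{-a-1},2^{-a}]\}$ (and similarly $B_b$, $C_c$), noting $|A_a|\le 4\cdot 4^{a}$ from $\sum_i x_i^2\le 1$. Every heavy triple lies in some block $A_a\times B_b\times C_c$ with $2^{-(a+b+c)}\gtrsim\theta$, and its contribution is controlled by the block-edge count $e(A,B,C):=|\Omega\cap(A\times B\times C)|$ weighted by $2^{-(a+b+c)}$. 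The core of the argument is a \emph{discrepancy lemma for 3-uniform random hypergraphs}: with probability at least $1-n^{-6}$, simultaneously for all $A,B,C\subseteq[n]$, the count $e(A,B,C)$ is either within a constant factor of its mean $p|A||B||C|$, or bounded by a combinatorial term of order $\max(|A|,|B|,|C|)\log n$. This is the three-dimensional analogue of the Feige-Ofek counting lemma, and the extra coordinate direction is what produces the additional $\log n$ factor in the final bound compared to the matrix case. Summing the bucket contributions against the weights $2^{-(a+b+c)}$, combining with the light-part estimate, and normalizing by $\Tmax n^{3/2}p=\Tmax\alpha$ yields the claimed $O((\log n)^2/\sqrt{\alpha})$ bound in \eqref{eq:tcinit}.
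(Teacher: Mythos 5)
Your proposal follows the same strategy as the paper: a discretization/$\epsilon$-net reduction, a light/heavy split of triples, Chernoff--Bernstein concentration for the light part, and a hypergraph discrepancy argument in the style of Friedman--Kahn--Szemer\'edi / Feige--Ofek with dyadic magnitude classes $A_a, B_b, C_c$ and block-edge counts $e(A,B,C)$ for the heavy part. The only differences are cosmetic: the paper uses the explicit lattice discretization $\tS_n$ from Keshavan--Montanari--Oh (Lemma~\ref{lem:disc}), which conveniently forces nonzero entries to have magnitude at least $\Delta/\sqrt{n}$ and hence caps the number of dyadic buckets at $O(\log n)$; and the paper's discrepancy condition is stated in the slightly sharper form $e\ln(e/\bar e)\lesssim \max_i |A_i|\ln(e n_i/|A_i|)$ (its Lemma~\ref{lem:discrepancy}), and is supplemented by bounded-degree bounds on $\deg_1, \deg_{12}$, etc., which are needed in the case analysis for the heavy sum but are not explicitly mentioned in your sketch. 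None of these is a genuinely different route.
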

Notice that $\Tmax$ is the maximum entry in the tensor $T$ and 
the factor $1/(\Tmax n^{3/2}p)$ corresponds to normalization with the worst case operator norm of $p\,T$, 
since $\|pT\|_2\leq \Tmax n^{3/2} p$ and the maximum is achieved by $T=\Tmax (\ones \otimes \ones \otimes \ones)$.  
The following theorem guarantees that $O(n^{3/2} (\log n)^2)$ samples are sufficient  to ensure that we get arbitrarily small error. 
A formal proof is provided in Appendix. 

Together with an analysis of  {\em robust tensor power method} \cite[Theorem 5.1]{AGHKT12},  
the next error bound follows from directly substituting \eqref{eq:tcinit} and using the fact that for incoherent tensors 
$\Tmax \leq \sigma_{\rm max} \mu(T)^{3} r/n^{3/2}$.
Notice that the estimates can be computed efficiently, requiring only $O(\log r + \log \log \alpha)$ iterations, each iteration requiring 
$O(\alpha n^{3/2} )$ operations. This is close to the time required to read the $|\Omega| \simeq \alpha n^{3/2}$ samples. One caveat is that 
we need to run robust power method $\poly(\r \log n)$ times, each with fresh random initializations. 

\begin{lemma} 
	\label{lem:tcinit}
	For a $\mu$-incoherent tensor with orthogonal decomposition 
$T=\sum_{\ell=1}^r \sigma^*_\ell (\u^*_\ell\otimes \u^*_\ell\otimes \u^*_\ell)\in\reals^{n\times n\times n}$, 
	there exists positive numerical constants $C, C'$ such that  
	when $\alpha \geq C (\sigma_{\rm max}/\sigma_{\rm min})^2 r^{5}\mu^6(\log n)^4$, running 
	$C'(\log r + \log \log \alpha)$ iterations of the robust tensor power method applied to $\cP_\Omega(T)$ achieves 
	\begin{eqnarray*}
		\| \u_\ell^* - {\u}_\ell^0\|_2 &\leq& C' \frac{\sigma^*_{\rm max}}{|\sigma^*_\ell|} \frac{\mu^3 r(\log n)^2 }{\sqrt{\alpha}} \;, \\ 
		\frac{| \sigma_\ell^* - \sigma_\ell |}{|\sigma^*_\ell|} &\leq& C' \frac{\sigma^*_{\rm max}}{|\sigma^*_\ell|} \frac{\mu^3 r(\log n)^2 }{\sqrt{\alpha}}  \;,
	\end{eqnarray*}
	for all $\ell\in[r]$ with probability at least $1-n^{-5}$, 
	where $\sigma^*_{\rm max} = \max_{\ell\in[\r]} |\sigma^*_\ell|$ and 
	$\sigma^*_{\rm min} = \min_{\ell\in[\r]} |\sigma^*_\ell|$. 
\end{lemma}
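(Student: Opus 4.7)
\textbf{Proof plan for Lemma \ref{lem:tcinit}.}

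The plan is to first convert the spectral-norm guarantee on $\cP_\Omega(T) - pT$ from Theorem~\ref{thm:tcinit} into a perturbation bound on the tensor $\hT \equiv (1/p)\,\cP_\Omega(T)$, and then feed this perturbation bound into the black-box guarantee of the robust tensor power method from \cite[Theorem 5.1]{AGHKT12}. Concretely, set $E \equiv \hT - T = (1/p)(\cP_\Omega(T) - pT)$. Dividing \eqref{eq:tcinit} by $p = \alpha/n^{3/2}$ gives
\begin{equation*}
\|E\|_2 \;=\; \frac{1}{p}\,\|\cP_\Omega(T) - pT\|_2 \;\leq\; \frac{C\,\Tmax\,n^{3/2}\,(\log n)^2}{\sqrt{\alpha}} \;.
\end{equation*}
The first thing I would do is bound $\Tmax$ using incoherence: since $T_{ijk}=\sum_\ell \sigma_\ell^* U_{i\ell}U_{j\ell}U_{k\ell}$ with $|U_{i\ell}|\le \mu/\sqrt n$, we obtain $\Tmax \le \sigma^*_{\max}\mu^3 r / n^{3/2}$. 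Substituting this bound eliminates the $n^{3/2}$ factor and yields
\begin{equation*}
\|E\|_2 \;\leq\; C\,\frac{\sigma^*_{\max}\,\mu^3\,r\,(\log n)^2}{\sqrt{\alpha}} \;.
\end{equation*}

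Second, I would verify the hypothesis of the RTPM guarantee. The standard statement in \cite[Theorem 5.1]{AGHKT12} requires the perturbation $\|E\|_2$ to be smaller than (a constant times) $\sigma^*_{\min}/r$ (or a comparable polynomial in $r$), at which point RTPM with $O(\log r + \log\log(1/\epsilon))$ power iterations, repeated $\poly(r\log n)$ times with fresh random initializations, returns estimates $(\hat\u_\ell, \hat\sigma_\ell)$ achieving $\|\u^*_\ell - \hat\u_\ell\|_2 \le C\,\|E\|_2/|\sigma^*_\ell|$ and $|\sigma^*_\ell - \hat\sigma_\ell| \le C\,\|E\|_2$. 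The assumption $\alpha \geq C (\sigma^*_{\max}/\sigma^*_{\min})^2 r^5 \mu^6 (\log n)^4$ is chosen precisely so that the perturbation bound above is dominated by a small constant multiple of $\sigma^*_{\min}/r^{3/2}$ (or whatever exponent the RTPM theorem needs); plugging the numerical inequality back in is a one-line check.

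Finally, combining these two steps gives the claimed inequalities:
\begin{equation*}
\|\u^*_\ell-\hat\u_\ell\|_2 \;\leq\; \frac{C\,\|E\|_2}{|\sigma^*_\ell|} \;\leq\; C'\,\frac{\sigma^*_{\max}}{|\sigma^*_\ell|}\,\frac{\mu^3\,r\,(\log n)^2}{\sqrt{\alpha}} \;,
\end{equation*}
and the analogous bound for the relative error in $\sigma^*_\ell$ by dividing the singular-value error by $|\sigma^*_\ell|$. The high-probability statement with probability $1-n^{-5}$ follows by intersecting the $1-n^{-5}$ event from Theorem~\ref{thm:tcinit} with the high-probability success event of RTPM (and absorbing any constant factors into $C,C'$, and/or raising the probability exponent in the hidden constants).

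The main obstacle I anticipate is bookkeeping: matching the exact $r$-dependence in the RTPM preconditions against the exponent $r^5$ in the sampling hypothesis and confirming that no $n$-dependence leaks in through the incoherence/$\Tmax$ step. In particular one must be careful that the $r$ factors arising from (i) the bound $\Tmax\lesssim \sigma^*_{\max}\mu^3 r/n^{3/2}$, (ii) the polynomial slack required by RTPM's deflation step, and (iii) the ratio $\sigma^*_{\max}/\sigma^*_\ell$ assemble to give an assumption no stronger than $\alpha \gtrsim (\sigma^*_{\max}/\sigma^*_{\min})^2 r^5 \mu^6 (\log n)^4$; once this bookkeeping is verified, the rest of the argument is a direct substitution.
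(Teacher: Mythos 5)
Your proposal is correct and follows exactly the paper's own argument: the paper does not give a separate detailed proof of Lemma~\ref{lem:tcinit} but simply notes that it follows from substituting \eqref{eq:tcinit}, using the incoherence bound $\Tmax \leq \sigma^*_{\rm max}\mu^3 r/n^{3/2}$, and invoking \cite[Theorem 5.1]{AGHKT12} — precisely the chain of steps you lay out. Your remark about the $r$-bookkeeping (checking that the $\alpha$-assumption forces $\|E\|_2$ below the RTPM threshold of roughly $\sigma^*_{\min}/\poly(r)$) is the only nontrivial verification, and your sketch of it is correct.
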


%\subsubsection{Proof sketch}

% ===== ===== ===== ===== ===== ===== ===== ===== ===== ===== ===== ===== ===== ===== ===== ===== ===== ===== ===== ===== ===== 

%% End: 

\subsection{Alternating Minimization Analysis}
\label{sec:resultam}
We now provide convergence analysis for the alternating minimization part of 
Algorithm~\ref{algo:rkk} to recover rank-$r$ tensor $T$. %=\sum_{\ell=1}^r \sigma^*_\ell (\u^*_\ell\otimes \u^*_\ell\otimes \u^*_\ell)$.
 Our analysis assumes that 
$\|\ui-\uoi\|_2\leq c\sigma_{\rm min}/r\sigma_{\rm max}, \forall i$ where $c$ is a small constant 
(dependent on $r$ and the condition number of $T$). 
The above mentioned assumption can be satisfied using 
our initialization analysis and by assuming $\Omega$ is large-enough. 

At a high-level, our analysis shows that each step of Algorithm~\ref{algo:rkk} ensures geometric decay of a distance function (specified below) which is ``similar'' to $\max_{j}\|\uj^t-\uoj\|_2$. 
%Our analysis shows at each step $\|\ui-\uoi\|_2$ decreases geometrically upto $\frac{1}{2}\max_{j\neq i} \|\uj-\uoj\|_2$. That is, after each outer iteration, we can show that $\max_{j}\|\uj^{t+1}-\uoj\|_2\leq \frac{1}{2}\max_{j}\|\uj^t-\uoj\|_2$. Hence, after $\tau=O(\log(1/\epsilon))$ outer iterations, we have: $\max_{j}\|\uj^{\tau}-\uoj\|_2\leq \epsilon$. 

Formally, let $T=\sum_{\ell=1}^r\sol\cdot \uol \otimes \uol \otimes \uol.$ WLOG, we can assume that that $\sigma^*_\ell\leq 1$. 
Also, let $[U,\ \Sigma]=\{(\ul, \sll), 1\leq \ell \leq r\},$ be the $t$-th step iterates of Algorithm~\ref{algo:rkk}.
%and  let $[U^{t+1},\ \Sigma^{t+1}]=\{(\ul^{t+1}, \sll^{t+1}), 1\leq \ell\leq r\}$ be the $(t+1)$-th step iterates. 
We assume that $\uo_\ell, \forall \ell$ are $\mu$-incoherent and 
$\ul, \forall \ell$ are $2\mu$-incoherent. 
Define, $\sdl=\frac{|\sll-\sol|}{\sol}$, $\ul=\uol+\dul$, $(\sdl)^{t+1}=\frac{|\sll^{t+1}-\sol|}{\sol}$, and $\ul^{t+1}=\uol+\dul^{t+1}$. 
Now, define the following distance function: 
%In general, the convergence rate depends on the number of samples, 
%but in any case we have linear convergence for sufficiently large sampling rate (see Figure~\ref{fig:iter}). 
%Formally, the next Theorem proves this linear convergence. 
%The distance is measured as:
$$d_\infty([U, \Sigma], [U^*, \Sigma^*])\;\equiv\; \max_{\ell}\left(\|\dul\|_2+\sdl\right)\;.$$ 
The next theorem shows that this distance function decreases geometrically with number of iterations of Algorithm~\ref{algo:rkk}.
A proof of this theorem is provided in Appendix \ref{app:rkk}. 
\begin{theorem}
\label{thm:rk}
If $d_\infty([U, \Sigma], [U^*, \Sigma^*])\leq \frac{1}{1600 r} \frac{\sigma^*_{min}}{\sigma^*_{max}}$ and $\ui$ is $2\mu$-incoherent for all $1\leq i\leq r$, then there exists a positive constant $C$ such that 
for $p\geq \frac{C r^2 (\sigma^*_{\rm max})^2 \mu^3 \log^2 n}{(\sigma_{\rm min}^*)^2 n^{3/2}}$  
%where $\gamma\leq \frac{1}{1600 r}\cdot \frac{\sigma^*_{min}}{\sigma^*_{max}}$ and $C>0$ is a global constant. 
 we have w.p. $\geq 1-\frac{1}{n^{7}}$, 
$$d_\infty([U^{t+1}, \Sigma^{t+1}], [U^*, \Sigma^*])\leq \frac{1}{2}d_\infty([U, \Sigma], [U^*, \Sigma^*]), $$
where $[U^{t+1},\ \Sigma^{t+1}]=\{(\ul^{t+1}, \sll^{t+1}), 1\leq \ell\leq r\}$ are the $(t+1)$-th step iterates of Algorithm~\ref{algo:rkk}. Moreover, each $\u^{t+1}_\ell$ is $2\mu$-incoherent for all $\ell$. 
%Let $\sdl\leq \frac{1}{200 r}\cdot \frac{\sigma_{min}^*}{\sigma_{max}^*}$ and $\|\dul\|_2\leq \frac{1}{200 r}\cdot \frac{\sigma_{min}^*}{\sigma_{max}^*}$. 
%Let $p\geq \frac{C \mu_1^3 \log^2 n}{\gamma^2 n^{1.5}}$, where $\gamma\leq \frac{1}{200 r}\cdot \frac{\sigma_{min}^*}{\sigma_{max}^*}$. Then, the following holds w.p. $\geq 1-\frac{1}{n^{7}}$: 
%$$\soq\left((\sdq)^{t+1}+\|\duq^{t+1}\|_2\right)\leq \left(\frac{1}{2}\right)^t\left((\soq\left(\|\duq^0\|_2+(\sdq)^0\right)\right)+\frac{1}{2}\left(\soll\left(\|\dull\|_2+\sdll\right)\right),$$
%where  $\ell^*=\arg\max_{\ell\neq q} \sol \left(\|\dul\|_2+\sdl\right)$. $(\sdq)^{t+1}=\frac{|\sq^{t+1}-\soq|}{\soq}$ and $(\uq)^{t+1}=\uoq+\dul^{t+1}$
\end{theorem}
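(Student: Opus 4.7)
The plan is to analyze a single inner-loop update at fixed $q$ and show that the new $q$-th iterate contributes at most $d/2$ to the distance, where $d=d_\infty([U,\Sigma],[\Uo,\Sigma^*])$; the theorem then follows by applying this to each $q\in[r]$. Setting the gradient of the inner-loop least-squares objective with respect to $\hat{\u}_q^{t+1}$ to zero gives a system that is diagonal and separable across coordinates $i$:
\[
\hat{\u}_q^{t+1}(i)\;=\;\frac{1}{B_i}\sum_{(j,k):(i,j,k)\in\Omega}\uqj\,\uqk\,\Bigl(T_{ijk}-\sum_{\ell\ne q}\sll\,\uli\,\ulj\,\ulk\Bigr),\qquad B_i=\sum_{(j,k):(i,j,k)\in\Omega}\uqj^{2}\uqk^{2}.
\]
By the sample-splitting in Algorithm~\ref{algo:rkk}, $\Omega$ here is independent of the current iterates, so I can condition on them and take expectations over $\Omega$ freely.

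In expectation, $\E[B_i]=p\|\u_q\|^4=p$, and the conditional mean of the numerator equals $p\,\soq\uoq(i)\<\u_q,\uoq\>^{2}+p\sum_{\ell\ne q}\sol\uol(i)\<\u_q,\uol\>^{2}-p\sum_{\ell\ne q}\sll\uli\<\u_q,\ul\>^{2}$. Since both $\u_q$ and $\uoq$ are unit, $\<\u_q,\uoq\>=1-\|\duq\|^{2}/2$, so the first term contributes the signal $\soq\uoq(i)(1-O(d^{2}))$. The remaining cross terms are second order in $d$: orthogonality of $\{\uol\}$ gives $\<\u_q,\uol\>=\<\duq,\uol\>=O(d)$ for $\ell\ne q$, and expanding around $\{\uol\}$ yields $\<\u_q,\ul\>=\<\duq,\uol\>+\<\uoq,\dul\>+\<\duq,\dul\>=O(d)$. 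Squaring and summing over $\ell\ne q$, the cross contribution in $\ell_2$ is $O(r\sigma^{*}_{\max}d^{2})$, which by the hypothesis $d\le 1/(1600r)\cdot\sigma^{*}_{\min}/\sigma^{*}_{\max}$ is at most $(\soq/4)d$.

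For the sampling noise I will apply Bernstein's inequality coordinate-wise, bounding $|B_i/p-1|$ and the deviation of the numerator from its conditional mean in both $\ell_2$ and $\ell_\infty$. The assumed $2\mu$-incoherence of the iterates ($|\ul(j)|\le 2\mu/\sqrt n$) controls both the per-term magnitude and the conditional variance. The stated sample complexity $p\gtrsim r^{2}(\sigma^{*}_{\max})^{2}\mu^{3}\log^{2}n/((\sigma^{*}_{\min})^{2}n^{3/2})$ is calibrated so that, after a union bound over $i\in[n]$, the noise contributes at most $(\soq/4)d$ in $\ell_2$ and $(\soq/4)d/\sqrt n$ in $\ell_\infty$, with probability $\ge 1-n^{-7}$. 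Combining with the previous paragraph, $\|\hat{\u}_q^{t+1}-\soq\uoq\|_2\le(\soq/2)d$ and $\|\hat{\u}_q^{t+1}-\soq\uoq\|_\infty\le(\soq/2)d/\sqrt n$. A routine renormalization then gives $\|\u_q^{t+1}-\uoq\|_2+|\sq^{t+1}-\soq|/\soq\le d/2$, while the $\ell_\infty$ bound yields $|\u_q^{t+1}(i)|\le|\uoq(i)|+O(d/\sqrt n)\le 2\mu/\sqrt n$, preserving incoherence.

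The main obstacle is the Bernstein analysis in the preceding step. Each summand in the numerator is a product of four random iterate coordinates, so the noise couples three distinct sources: the signal $\ell=q$ part of $T$, the $\ell\ne q$ orthogonal-mixing part of $T$, and the subtraction terms involving the current $(\ul,\sll)$. I will split the numerator accordingly and apply separately tailored Bernstein inequalities to each piece, relying crucially on $\mu$-incoherence to bound both $\ell_\infty$ norms and variances of individual terms. Obtaining an $\ell_\infty$ (not merely $\ell_2$) bound is essential for inductively preserving $2\mu$-incoherence in the next iteration and is significantly more delicate than the matrix AltMin analysis of \cite{JNS13}, because the $(j,k)$-pair indexing couples more terms and the orthogonal-mixing cross terms have no matrix-case analogue.
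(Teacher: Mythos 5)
Your high-level plan (analyze one inner-loop update, split the coordinate-wise ratio into a deterministic mean plus sampling noise, bound the cross-terms quadratically in the current distance, control the noise by Bernstein plus a union bound over coordinates, then renormalize) matches the paper's overall skeleton. The problem is that the specific way you separate numerator from denominator destroys the cancellation that makes the whole argument work, so the noise bound you need cannot be obtained as stated.

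You plan to bound $|B_i/p-1|$ and the deviation of the numerator from its conditional mean \emph{separately}. But the signal part of the numerator is
\[
N_i^{(q)}=\sigma_q^*\,\u_q^*(i)\sum_{jk}\delta_{ijk}\,\u_q(j)\u_q(k)\,\u_q^*(j)\u_q^*(k),
\]
and the fluctuation of $N_i^{(q)}$ around its mean $p\,\sigma^*_q\u^*_q(i)\langle\u_q,\u_q^*\rangle^2$ is of order $\sigma^*_q|\u^*_q(i)|\mu^2\sqrt p/n$, i.e.\ it does \emph{not} vanish as $d\equiv d_\infty([U,\Sigma],[U^*,\Sigma^*])\to 0$. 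After dividing by $B_i\approx p$, this yields a residual error of order $\sigma^*_q|\u^*_q(i)|\,\mu^2/(n\sqrt p)$, which is independent of $d$. No fixed sample rate $p$ can make this $\le(\sigma^*_q/4)\,d$ once $d$ is small, yet geometric contraction requires exactly that (after sample-splitting, later iterations face progressively smaller $d$ with the same $p$). So the bound ``noise $\le(\sigma^*_q/4)d$'' you are calibrating $p$ against is false for your decomposition.

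The missing idea is that the numerator's noise and the denominator's noise cancel when you subtract the \emph{scaled} denominator before applying Bernstein, i.e.\ you must analyze
\[
N_i^{(q)}-\sigma^*_q\u^*_q(i)\langle\u_q,\u_q^*\rangle^2\,B_i
=\sigma^*_q\u^*_q(i)\sum_{jk}\delta_{ijk}\,\u_q(j)\u_q(k)\bigl(\u_q^*(j)\u_q^*(k)-\langle\u_q,\u_q^*\rangle^2\u_q(j)\u_q(k)\bigr),
\]
whose per-sample terms all vanish when $\u_q=\u^*_q$, and whose total variance is controlled by the algebraic identity
$\sum_{jk}\bigl(\u^*_q(j)\u^*_q(k)-\langle\u_q,\u_q^*\rangle^2\u_q(j)\u_q(k)\bigr)^2 = 1-\langle\u_q,\u_q^*\rangle^4=O(d^2)$. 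This is precisely what the paper's Lemma~\ref{lem:r2_err2} does: the bound on $\|(\langle\u,a\rangle\langle\u,b\rangle B-R)c\|_2$ carries the factor $\sqrt{1-\langle\u,a\rangle^2\langle\u,b\rangle^2}$, and it is this factor, not a larger sample size, that makes the statistical error scale with the current distance. Your ``separately tailored Bernstein inequalities to each piece'' paragraph recognizes the delicacy but does not supply this cancellation; without it the per-coordinate $\ell_\infty$ bound $\|\hat\u_q^{t+1}-\sigma^*_q\u^*_q\|_\infty\le(\sigma^*_q/2)d/\sqrt n$ that you use to preserve $2\mu$-incoherence also fails (the paper instead controls incoherence of the new iterate directly through the ratios $|C_{ii}/B_{ii}|$, $|F_{\ell,ii}/B_{ii}|$, $|G_{\ell,ii}/B_{ii}|$ in Lemma~\ref{lem:rk_incoh}, which does not require a $d$-scaling $\ell_\infty$ noise bound). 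You need to re-organize the decomposition so that the zero-mean Bernstein sums are of the form $\langle\u_q,a\rangle\langle\u_q,b\rangle B - R$ (and their differences for $\ell\ne q$, as in Lemma~\ref{lem:rk_err2}), before the dominated-convergence and union-bound steps you outline can be carried through.
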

\begin{figure}[h]
 \begin{center}
  \includegraphics[width=.3\textwidth]{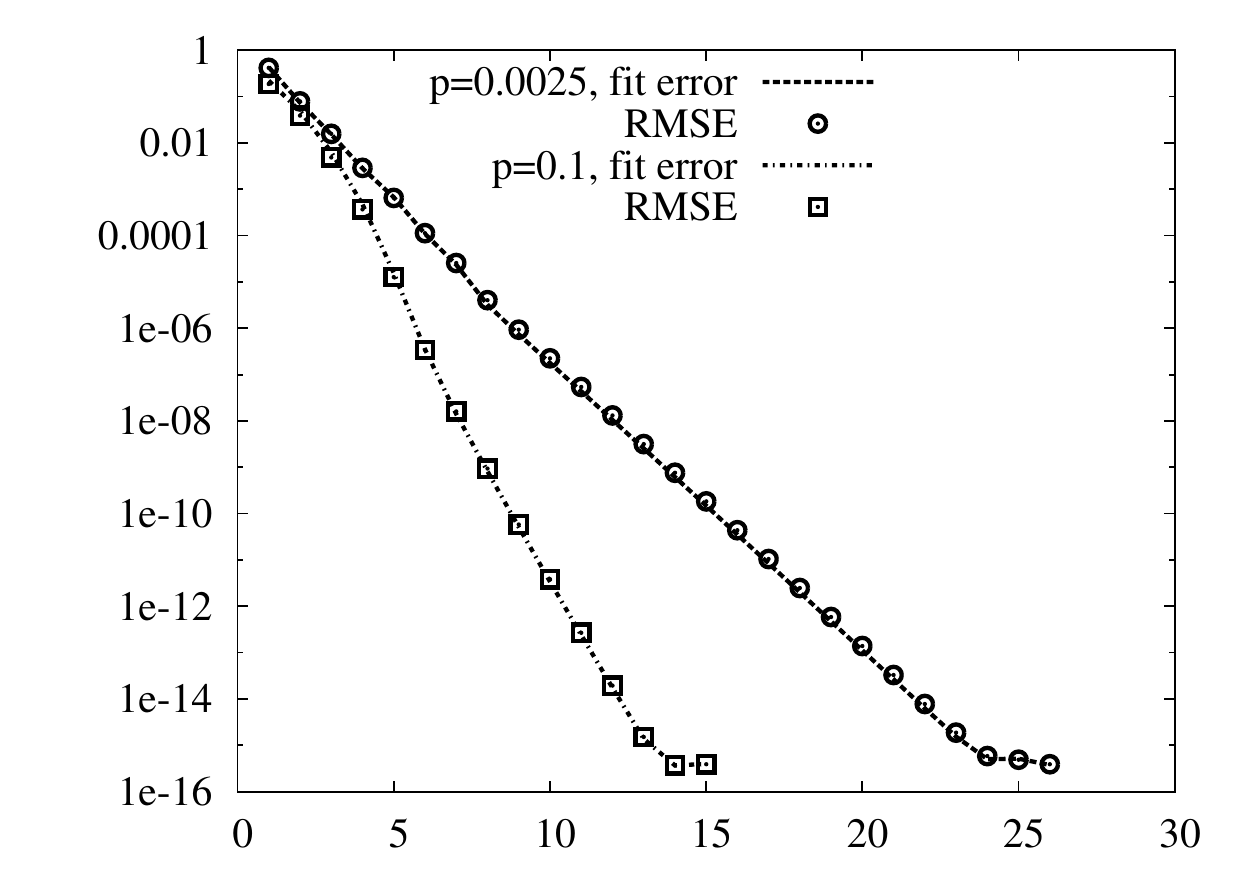}
  \put(-71,-6){iterations}
  \put(-132,50){error}
 \end{center}
 \caption{Algorithm \ref{algo:rkk} exhibits linear convergence until machine precision. 
 For the estimate $\hT_t$ at the $t$-th iterations, 
 the fit error $\|\cP_\Omega(T-\hT_t)\|_F/\|\cP_\Omega(T)\|_F$ 
 closely tracks the normalized root mean squared error $\|T-\hT_t\|_F/\|T\|_F$, 
 suggesting that it serves as a good stopping criterion.}
 \label{fig:iter}
\end{figure}
 Note that our number of samples depend on the number of iterations $\tau$. But due to linear convergence, our sample complexity increases only by a factor of $\log(1/\epsilon)$ where $\epsilon$ is the desired accuracy.

{\em Difference from Matrix AltMin}: Here, we would like to highlight differences between our analysis and analysis of the alternating minimization method for matrix completion (matrix AltMin) \cite{JNS13, hardt2013provable}. In the matrix case, the singular vectors $\uoi$'s need not be unique. Hence, the analysis is required to guarantee a decay in the  
%As mentioned in Section~\ref{sec:algo}, 
%The singular vectors $\uoi$'s need not be unique in the matrix case, and hence the analysis has to guarantee decrease in the  
 subspace distance $dist(U, U^*)$; typically, principal angle based subspace distance is used for analysis. 
In contrast, orthonormal $\uoi$'s uniquely define the tensor and hence one can obtain distance bounds $\|\ui-\uoi\|_2$  for each component $\ui$ individually. 

On the other other hand, 
an iteration of the matrix AltMin iterates over all the vectors $\ui, 1\leq i\leq r$, where $r$ is the rank of the current iterate and hence don't have to consider the error in estimation of the fixed components $U_{[r]\backslash q}=\{\ul,\ \forall\ \ell\neq q\}$, which is a challenge for the analysis of Algorithm~\ref{algo:rkk} and requires careful decomposition and bounds of the error terms. %In nutshell, progress of the matrix completion alternating minimization algorithm is measured using the principal angle based subspace distance, while in our analysis, we compute the distance beteween $U$ and $U^*$ as $\max_{j}\|\uj^t-\uoj\|_2$, which is a significantly stronger distance measure. 

%To highlight the main ideas of our proof technique, 
%we first give a proof sketch for 
%a special case of rank-two in Section \ref{sec:r2}. 
%The proof sketch for the general case, which builds on the rank-two analysis is provided in Section \ref{sec:rk}. 

%\input{ronecomplete}
%\input{rtwocomplete}
%\input{gencomplete}

\subsection{Proof of Theorem \ref{thm:main}} 
\label{sec:resultmain} 
Let $T=\sum_{q=1}^r \sigma^*_q(u^*_q\otimes u^*_q\otimes u^*_q)$.
Denote the initial estimates $U^0=[u^0_1,\ldots,u^0_r]$ and $\sigma^0=[\sigma^0_1,\ldots,\sigma^0_r]$ 
to be the output of robust tensor power method at step 5 of Algorithm \ref{algo:rkk}.
With a choice of $p\geq C(\sigma_{\rm max}^*)^4 \mu^6 r^4 (\log n)^4/(\sigma_{\rm min}^*)^4n^{3/2}$ as per our assumption, 
Lemma~\ref{lem:tcinit} ensures that we have $\|u^0_q-u^*_q\| \leq \sigma^*_{\rm min}/(4800\,r\sigma_{\rm max})$ 
and $|\sigma_q^0-\sigma_q^*|\leq |\sigma_q^*|\sigma^*_{\rm min}/(4800\,r\sigma_{\rm max})$ with probability at least $1-n^{-5}$. 
This requires 
running robust tensor power method for $(r\log n)^c$ random initializations for some positive constant $c$, 
each requiring $O(|\Omega|)$ operations ignoring logarithmic factors. 

To ensure that we have sufficiently incoherent initial iterate, we perform thresholding proposed in \cite{JNS13}. 
In particular, we threshold all the elements of $\ui^0$ (obtained from RTPM method, see Step 3 of Algorithm~\ref{algo:rkk}) that are larger (in magnitude) than $\mu/\sqrt{n}$ to be $\frac{sign(\ul(i))\mu}{\sqrt{n}}$ and then re-normalize to obtain $\ui$. Using Lemma~\ref{lem:threshold}, this procedure ensures that the obtained initial estimate $\ui$ satisfies the two criteria that is required by Theorem~\ref{thm:rk}: a) $\|\u_i-\uoi\|_2\leq \frac{1}{1600 r}\cdot \frac{\sigma^*_{min}}{\sigma^*_{max}}$, and b) $\ui$ is $2\mu$-incoherent. %q r$, we threshold $\uoi$ obtained from RTPM method (see Step 3). In particular, we threshold  %It follows from \cite[Lemma]{JNS13} that {\bf[Sewoong: it would be nice if you could prove the initial guess is also incoherent using clipping or thesholding.]}

With this initialization, Theorem~\ref{thm:rk} tells us that $O(\log_2(4r^{1/2}\|T\|_F/\varepsilon)$ 
iterations (each iteration requires $O(r|\Omega|)$ operations) 
is sufficient to achieve: 
\begin{eqnarray*}
	\| u_q-u^*_q\|_2 \leq  \frac{\varepsilon}{4r^{1/2}\|T\|_F} \;\text{ and }\;\;
	|\sigma_q - \sigma_q^*| \leq \frac{|\sigma_q^*|\varepsilon}{4r^{1/2}\|T\|_F}\;,
\end{eqnarray*}
for all $q\in[r]$ with probability at least $1-n^{-7}\log_2(4r^{1/2}\|T\|_F/\varepsilon)$. 
The desired bound follows from the next lemma with a choice of $\tepsilon={\varepsilon}/{4r^{1/2}\|T\|_F}$. 
For a proof we refer to Appendix \ref{app:tensordist}.
\begin{lemma}
	\label{lem:tensordist}
	For an orthogonal rank-$r$ tensor $T=\sum_{q=1}^r \sigma^*_q(u^*_q\otimes u^*_q \otimes u^*_q)$ 
	and any rank-$r$ tensor $\hT=\sum_{q=1}^r \sigma_q(u_q\otimes u_q \otimes u_q)$ satisfying 
	$\|u-u^*\|_2 \leq \tepsilon$ and $|\sigma-\sigma^*| \leq |\sigma^*|\tepsilon$ for all $q\in[r]$ 
and for all positive $\tepsilon>0$,
	we have $\|T-\hT\|_F \leq 4\,r^{1/2}\,\|T\|_F\,\tepsilon$.
\end{lemma}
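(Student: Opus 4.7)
\textbf{Proof proposal for Lemma \ref{lem:tensordist}.}

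The plan is to bound the error component-by-component and then combine using Cauchy--Schwarz, exploiting the fact that for an orthogonal CP decomposition $\|T\|_F^2 = \sum_q (\sigma_q^*)^2$. First I would write
\[
 T - \hT \;=\; \sum_{q=1}^r \bigl[\sigma_q^* (u_q^*\otimes u_q^*\otimes u_q^*) - \sigma_q (u_q\otimes u_q\otimes u_q)\bigr],
\]
and by the triangle inequality reduce to estimating each summand's Frobenius norm separately.

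For each $q$, I would split the summand as
\[
 \sigma_q^*\,u_q^{*\otimes 3} - \sigma_q\,u_q^{\otimes 3} \;=\; (\sigma_q^* - \sigma_q)\,u_q^{*\otimes 3} \;+\; \sigma_q\bigl(u_q^{*\otimes 3} - u_q^{\otimes 3}\bigr).
\]
The first piece contributes at most $|\sigma_q^*|\,\tepsilon$ in Frobenius norm since $\|u_q^*\|_2 = 1$. For the second, I would apply the telescoping identity
\[
 a^{\otimes 3} - b^{\otimes 3} \;=\; (a-b)\otimes a\otimes a \;+\; b\otimes(a-b)\otimes a \;+\; b\otimes b\otimes(a-b),
\]
so that
\[
 \|u_q^{*\otimes 3} - u_q^{\otimes 3}\|_F \;\leq\; 3\,\tepsilon\,\max(\|u_q\|_2,\|u_q^*\|_2)^2 \;\leq\; 3\,\tepsilon\,(1+\tepsilon)^2,
\]
using $\|u_q\|_2 \leq 1+\tepsilon$ from the hypothesis. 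Combined with $|\sigma_q| \leq |\sigma_q^*|(1+\tepsilon)$, each summand is bounded by $|\sigma_q^*|\,\tepsilon\,[1 + 3(1+\tepsilon)^3]$. For the range of $\tepsilon$ that matters (certainly $\tepsilon\leq 1$, which follows from the hypotheses of Theorem \ref{thm:main}), this is at most $4\,|\sigma_q^*|\,\tepsilon$.

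Finally, I would sum and apply Cauchy--Schwarz:
\[
 \|T-\hT\|_F \;\leq\; 4\,\tepsilon \sum_{q=1}^r |\sigma_q^*| \;\leq\; 4\,\tepsilon\,\sqrt{r}\,\Bigl(\sum_{q=1}^r (\sigma_q^*)^2\Bigr)^{1/2} \;=\; 4\,r^{1/2}\,\|T\|_F\,\tepsilon,
\]
where the last equality uses the Parseval-type identity for orthogonal CP decompositions: orthonormality of the $u_q^*$'s makes the cross-terms vanish when expanding $\|\sum_q \sigma_q^* u_q^{*\otimes 3}\|_F^2$, leaving $\sum_q (\sigma_q^*)^2$. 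There is no real obstacle here; the only care needed is in the constant bookkeeping (keeping the combined factor below $4$) and in noting that the hypothesis $|\sigma_q - \sigma_q^*|\leq |\sigma_q^*|\tepsilon$ gives multiplicative rather than additive control, which is what lets the bound scale with $\|T\|_F$ rather than with $r\,\sigma_{\max}^*$.
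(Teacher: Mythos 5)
Your proof takes essentially the same route as the paper's: decompose each rank-one summand into a $(\sigma_q-\sigma_q^*)$ piece and a $\sigma_q\,(u_q^{\otimes 3}-u_q^{*\otimes 3})$ piece, telescope the cube difference into three terms, bound the cross-terms via Cauchy--Schwarz (the paper writes this out explicitly on the inner-product level, whereas you aggregate through the triangle inequality on norms, but the two are identical since $\sum_{a,b}\|\cdot_a\|\,\|\cdot_b\| = (\sum_a\|\cdot_a\|)^2$), and finally use $\sum_q|\sigma_q^*| \leq r^{1/2}\|T\|_F$ from orthonormality.

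There is, however, a concrete slip in your constant bookkeeping. You allow $\|u_q\|_2 \leq 1+\tepsilon$ and arrive at a per-component bound of $|\sigma_q^*|\,\tepsilon\,[1 + 3(1+\tepsilon)^3]$, and then assert this is at most $4|\sigma_q^*|\tepsilon$ ``for the range of $\tepsilon$ that matters, certainly $\tepsilon\le 1$.'' This is false: $1+3(1+\tepsilon)^3 \le 4$ forces $\tepsilon \le 0$, and at $\tepsilon=1$ the prefactor is $25$, not $4$. The clean fix is to note that the $u_q$ produced by Algorithm~\ref{algo:rkk} are explicitly normalized (Step~10), so $\|u_q\|_2 = 1$. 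Then the telescoping gives exactly $\|u_q^{\otimes 3}-u_q^{*\otimes 3}\|_F \le 3\tepsilon$ with no $(1+\tepsilon)$ correction, $|\sigma_q|$ need not be re-estimated (keep $\sigma_q^*$ in front when you split), and the combined constant is exactly $1+3=4$, which is what the paper uses and is tight for the stated bound. Alternatively, if you want the lemma to hold for unnormalized $u_q$, the stated constant $4$ must be loosened.
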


%%% Local Variables: 
%%% mode: latex
%%% TeX-master: "tensorcomp"
%%% End: 

\subsection{Fundamental limit and random hypergraphs} 
For matrices, it is known that exact matrix completion is impossible if the underlying graph is disconnected. 
A refinement of this analysis for Erd\"os-Ren\'yi graphs 
provides a lower bound on matrix completion: when sample size is less than $C\mu r n \log n$, 
no algorithm can recover the original matrix \cite{CT10}.

However, for tensor completion and random hyper graphs, such a simple connection does not exist. 
It is not known how the properties of the hyper graph is related to recovery. 
In this spirit, a rank-one third-order tensor completion has been studied in a specific context of 
{\em MAX-3LIN problems}. 
Consider a series of linear equations over 
$n$ binary variables $x=[x_1\,\ldots\, x_n]\in\{\pm1\}^n$.  
An instance of a 3LIN problem consists of a set of linear equations on GF(2), 
where each equation involve exactly three variables, e.g.  
\begin{eqnarray}
	x_1 \oplus x_2\oplus x_3  &= &+1\nonumber\\
	x_2 \oplus x_3\oplus x_4  &= &-1\nonumber\\
	x_3 \oplus x_4\oplus x_5  &= &+1\label{eq:ex1}
\end{eqnarray}
We use $-1$ to denote true (or $1$ in GF(2)) and $+1$ to denote false (or $0$ in GF(2)). 
Then the exclusive-or operation denoted by $\oplus$ is the integer multiplication. 
the MAX-3LIN problem is to find a solution $x$ that satisfies as many number of equations as possible. 
This is an NP-hard problem in general, and hence  random instances of the problem with a 
{\em planted solution} has been studied \cite{BO09}.
%A random instance of the MAX-3LIN problem is generated as follows. 
%First choose a planted solution $x^*\in\{\pm1\}^n$ and $p\in[0,1]$. 
%Among ${n\choose 3}$ possible choices of triples, 
%each triple $(i,j,k)$ is either selected with probability $p$ or discard with probability $1-p$. 
%For those selected triples, corresponding linear equation with the value that is consistent with the planted solution 
%is included in the 3LIN problem. 
%Given such a random instance, the goal is to $(a)$ 
%identify the threshold value of $p$ below which the solution to the MAX-3LIN problem is not unique with high probability; 
%and $(b)$ above this threshold, design an efficient algorithm that provably finds the planted solution.
%
%Although the sampling model is slightly different, since in tensor completion we also sample 
%triples with repeated indices such as $(i,i,i)$ or $(i,j,j)$, 
%a random MAX-3LIN is a special case of a low-rank tensor completion problem with 
%a rank one tensor $T=x\otimes x\otimes x$.  
%This threshold therefore provides a fundamental limit on general tensor completion problem, and 
Algorithm \ref{algo:rkk} provides a provable guarantee for MAX-3LIN with random assignments. 
\begin{coro}
	For random MAX-3LIN problem with a planted solution, 
	under the hypotheses of Theorem \ref{thm:main},    
	Algorithm \ref{algo:rkk} finds the correct solution 
	with high probability.
\end{coro}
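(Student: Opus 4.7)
The plan is to reduce the random MAX-3LIN instance with a planted solution to a rank-one symmetric tensor completion problem and then invoke Theorem~\ref{thm:main}. Using the $\pm1$ encoding (with $-1=$ true and $+1=$ false), the XOR of three bits becomes integer multiplication, so every equation $x_i\oplus x_j\oplus x_k=b_{ijk}$ is equivalent to $b_{ijk}=x^*_i x^*_j x^*_k$, where $x^*\in\{\pm1\}^n$ is the planted assignment. Assembling all right-hand sides into a tensor yields
\[
 T \;=\; x^*\otimes x^*\otimes x^* \;=\; n^{3/2}\,(u^*\otimes u^*\otimes u^*),\qquad u^*\;:=\;x^*/\sqrt n,
\]
which is a symmetric rank-one tensor with a (trivially) orthogonal CP decomposition. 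Every coordinate of $u^*$ has magnitude exactly $1/\sqrt n$, so the incoherence satisfies $\mu(T)=1$; the single singular value $\sigma^*=n^{3/2}$ makes $\sigma^*_{\max}/\sigma^*_{\min}=1$. Observing a random subset of equations coincides with Bernoulli-$p$ sampling of the entries of $T$, so the sampling model of Theorem~\ref{thm:main} is exactly matched.

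Next I would check Theorem~\ref{thm:main}'s sampling hypothesis: with $r=1$, $\mu=1$, and unit condition number, the requirement collapses to $p\ge C(\log n)^4\log(n^{3/2}/\varepsilon)/n^{3/2}$, which is implied by the hypotheses of the corollary for any fixed $\varepsilon$. Picking any convenient target, say $\varepsilon=1$, Theorem~\ref{thm:main} guarantees that $O(\log n)$ outer iterations of Algorithm~\ref{algo:rkk} return $\hat T=\hat\sigma\,\hat u\otimes\hat u\otimes\hat u$ with $\hat\sigma\ge0$ (by line 9) and $\|\hat T-T\|_F\le\varepsilon$. Inspecting Section~\ref{sec:resultmain} shows that the proof in fact delivers the stronger per-component bound $\|\hat u-u^*\|_2\le\varepsilon/(4\sqrt r\,\|T\|_F)=1/(4n^{3/2})$, and analogously $|\hat\sigma-\sigma^*|/\sigma^*\le1/(4n^{3/2})$.

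To finish I would round entrywise. Since $\|\hat u-u^*\|_\infty\le\|\hat u-u^*\|_2<1/\sqrt n$ and each coordinate of $u^*$ has magnitude exactly $1/\sqrt n$, the assignment $\hat x_i:=\sign(\hat u_i)$ agrees with $x^*_i$ for every $i\in[n]$. There is no global sign ambiguity to worry about: $(-x^*)^{\otimes 3}=-T\ne T$ for odd-order tensors, and Algorithm~\ref{algo:rkk} commits to the correct sign by enforcing $\hat\sigma\ge0$, which matches the positive $\sigma^*=n^{3/2}$.

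I do not expect a substantive obstacle, since the corollary exercises Theorem~\ref{thm:main} at its easiest parameter regime (rank one, unit incoherence, unit condition number). The only minor items to verify are that degenerate diagonal entries with repeated indices such as $i=j=k$ or $i=j\ne k$ are compatible with the symmetric Bernoulli sampling model (they correspond to trivial equations that only make recovery easier), and that the target accuracy $\varepsilon$ can be taken small enough that sign rounding is error-free; both checks are routine.
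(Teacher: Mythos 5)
Your proposal is correct and follows the same route the paper intends: it is the one-line observation that the planted-solution tensor $x^*\otimes x^*\otimes x^*$ has rank one, incoherence one, and unit condition number, so Theorem~\ref{thm:main} applies. You usefully flesh out the details the paper leaves implicit (the per-component bound from Section~\ref{sec:resultmain}, the entrywise sign-rounding step, and why the sign of $\hat u$ is pinned down by $\hat\sigma\geq 0$ and the odd order), but this is elaboration rather than a different argument.
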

Notice that the tensor has incoherence one and rank one. 
This implies exact reconstruction for $P\geq C (\log n )^4 / n^{3/2}$. 
This significantly improves over 
a message-passing approach to MAX-3LIN in \cite{BO09}, 
which is guaranteed to find the planted solution for 
$p\geq C(\log \log n)^2/(n\log n)$.
 It was suggested that a new notion of connectivity called {\em propagation connectivity} 
 is a sufficient condition for the solution of random MAX-3LIN problem with a planted solution to be unique 
 \cite[Proposition 2]{BO09}. 
 Precisely, it is claimed that if the hypergraph corresponding to an instance of MAX-3LIN is propagation connected, then 
 the optimal solution for MAX-3LIN is unique and there is an efficient algorithm that finds it. 
 However,  the example in \ref{eq:ex1} is propagation connected but there is no unique solution: 
 $[1,1,1,-1,-1],[1,-1,-1,1,-1],[-1,1,-1,1,-1],[-1,-1,1,1,1]$ all satisfy the equations and corresponding tensor $T=x\otimes x\otimes x$ is not uniquely defined either. 
 This proves that propagation connectivity is not a sufficient condition for uniqueness of the MAX-3LIN solution.

\subsection{Open Problems and Future Directions} 
\label{sec:discussion} 
%It is worth pointing out some limitations of our main results, and interesting future research directions in tensor completion. 

{\em Tensor completion for non-orthogonal decomposition.} 
Numerical simulations suggests that non-orthogonal CP models can be recovered exactly 
(without the usual whitening step). It would be interesting to analyze our algorithm under non-orthogonal CP model. However, we would like to point here that even with fully observed tensor, exact factorization is known only for orthonormal tensors. Now, given that our method guarantees not only completion but also factorization of the tensor (which is essential for large scale applications), it is natural that our method would require a similar condition. 

%{\em Tensor completion for higher-order tensors.} 
%From the analysis of Theorem \ref{thm:tcinit}, we conjecture that the threshold for $k$-th order tensor completion scales as 
%$p^*=O(n^{-k/2}{\rm poly}(\log n))$. We believe that our analysis can be generalized for higher order tensors to identify the threshold.

{\em Optimal dependence on $r$.} 
The numerical results suggests threshold sample size scaling as $\sqrt{r}$. 
This is surprising since the degrees of freedom in describing CP model scales as linearly in $r$. 
This implies that the $\sqrt{r}$ scaling can only hold for $r=O(\sqrt{n})$. 
In comparison, for matrix completion we know that the threshold scales as $r$. 
It would be important to understand why this change in dependence in $r$ happens for higher order tensors, 
and identify how it depends on $k$ for $k$-th order tensor completion.

\bibliographystyle{unsrt}
\bibliography{mturk}
\appendix
\section{Proof of Theorem \ref{thm:tcinit} for Initialization Analysis}
\label{app:init}

We prove the following bound on the spectrum of random tensors:
\begin{eqnarray*} 
	 \max_{x,y,z, \|x\|=\|y\|=\|z\|=1} \; \big( \cPO(T) - p\, T\big)[x,y,z] &\leq& C\, \Tmax\, (\log n)^2\, \sqrt{(n_1 \,n_2 \,n_3)^{1/2}\, p}\;.
 \end{eqnarray*}
 Here we prove the theorem for general case where $T$ 
 is not symmetric and might even have different dimensions $n_1$, $n_2$ and  $n_3$. 
 Inspired by \cite{FKS89,KMO10}, our strategy is as follows: 
 \begin{itemize}
 	\item[(1)] Reduce to $x$,$y$, and $z$ which belongs to {\em discretized sets} $\tS_{n_1}$, $\tS_{n_2}$, and $\tS_{n_3}$;
	\item[(2)] Bound the contribution of {\em light triples} using concentration of measure;
	\item[(3)] Bound the contribution of {\em heavy triples} using the discrepancy property of a random tripartite hypergraph. 
\end{itemize}

Define a discretization of an $n$-dimensional ball as
\begin{eqnarray*}
	\tS_n &\equiv& \Big\{ x\in \Big\{\frac{\Delta}{\sqrt{n}}\int \Big\}^n \;:\; \|x\|\leq 1\Big\} \;,
\end{eqnarray*}
such that $\tS_n \subseteq S_n \equiv \{x\in\reals^n\,:\, \|x\|\leq 1\}$. Later we will set $\Delta$ to be a small enough constant. 
\begin{lemma}[Remark 4.1 in \cite{KMO10}]
	For any tensor $A \in\reals^{n_1\times n_2\times n_3}$, 
	\begin{eqnarray*}
		\max_{x\in S_{n_1}, y\in S_{n_2}, x\in S_{n_3}} A[x,y,z] &\leq& \max_{x\in \tS_{n_1}, y\in \tS_{n_2}, x\in \tS_{n_3}} \frac{1}{(1-\Delta)^3}A[x,y,z]
	\end{eqnarray*}
	\label{lem:disc}
\end{lemma}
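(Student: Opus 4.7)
The plan is a standard $\varepsilon$-net argument, but done via iterated rounding so that the three powers of $(1-\Delta)^{-1}$ come out cleanly from trilinearity and a geometric series, rather than from a single-step expansion that would leave residual self-referential $\Delta^k M$ terms.

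First I would show the following discretization claim: every $x\in S_n$ admits a decomposition
\begin{equation*}
x \;=\; \sum_{i\geq 0} \Delta^{i}\,\tx_i, \qquad \tx_i\in \tS_n.
\end{equation*}
The construction is coordinate-wise rounding toward zero. Given $x\in S_n$, let $\tx_0$ be obtained by rounding each coordinate $x_j$ toward zero to the nearest multiple of $\Delta/\sqrt{n}$. Two properties hold: (i) $|(\tx_0)_j|\leq |x_j|$ for every $j$, so $\|\tx_0\|\leq \|x\|\leq 1$ and hence $\tx_0\in \tS_n$; and (ii) $|x_j-(\tx_0)_j|\leq \Delta/\sqrt{n}$ for every $j$, so $\|x-\tx_0\|\leq \Delta$. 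Setting $x^{(1)}\equiv (x-\tx_0)/\Delta$ gives a new point of $S_n$ to which the same rounding is applied, yielding $\tx_1$, and iterating produces the claimed series (which converges absolutely because the residuals shrink geometrically by a factor of $\Delta<1$).

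Given this decomposition for each of $x,y,z$, trilinearity of $A[\cdot,\cdot,\cdot]$ gives
\begin{equation*}
A[x,y,z] \;=\; \sum_{i,j,k\geq 0} \Delta^{i+j+k}\,A[\tx_i,\ty_j,\tz_k]
\;\leq\; \Bigl(\max_{\tx\in \tS_{n_1},\,\ty\in \tS_{n_2},\,\tz\in \tS_{n_3}} A[\tx,\ty,\tz]\Bigr)\cdot \sum_{i,j,k\geq 0}\Delta^{i+j+k},
\end{equation*}
and the triple geometric sum equals $1/(1-\Delta)^3$. Taking the supremum over $(x,y,z)\in S_{n_1}\times S_{n_2}\times S_{n_3}$ on the left then yields the stated bound.

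The only delicate point is part (i) of the rounding step: one must round \emph{toward zero} (not to the nearest grid point) so that $\tx_0$ stays in $\tS_n$. Without the monotonicity $\|\tx_0\|\leq \|x\|$, we could not assert $\tx_0\in\tS_n$, and the clean geometric-series bound with exponent $3$ would break. Everything else is a routine trilinear expansion and a convergent sum, so the argument is short once the iterated decomposition is in hand.
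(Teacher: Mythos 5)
Your proof is correct. The paper itself gives no proof, citing the matrix version (Remark 4.1 in \cite{KMO10}); the standard route there rounds \emph{once} per mode and pulls out $(1-\Delta)^{-1}$ from a self-referential inequality: writing $M^*$ for the continuous maximum and $M_1$ for the maximum with only the first argument discretized, one rounds $x^*$ to $\tilde x$ and gets $M^*=A[x^*,y^*,z^*]\leq A[\tilde x,y^*,z^*]+\Delta M^*\leq M_1+\Delta M^*$, hence $M^*\leq M_1/(1-\Delta)$, and repeating for the other two modes gives $(1-\Delta)^{-3}$. Your iterated decomposition $x=\sum_{i\geq 0}\Delta^i\tilde x_i$ replaces that self-referential step with an explicit absolutely convergent series inside each mode, after which trilinearity yields the triple geometric sum $\sum_{i,j,k}\Delta^{i+j+k}=(1-\Delta)^{-3}$ directly. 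Both arguments give exactly the same constant. The crucial ingredient you isolate — rounding \emph{toward zero}, coordinatewise, so that $\|\tilde x_i\|\leq\|x^{(i)}\|\leq 1$ and hence $\tilde x_i\in\tS_n$ while each residual has norm $\leq\Delta$ — is precisely what makes both routes go through, and the interchange of $A$ with the infinite sums is innocuous since we are in finite dimensions and the triple series is dominated by $\|A\|_2\sum_{i,j,k}\Delta^{i+j+k}<\infty$. Your version is the more self-contained and avoids the implicit fixed-point manipulation; the one-shot version is marginally shorter. Either way, the bound and hypotheses match the lemma as stated.
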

It is therefore enough show that the bound holds for discretized vectors all discretized vectors $x$, $y$, and $z$.  
One caveat is that such a probabilistic bound must hold with probability sufficiently close to one such that 
we can apply the union bound 
over all discretized choices of $x$, $y$, and $z$. 
The following lemma bounds the number of such choices. 
\begin{lemma} [\cite{KMO10}] The size of the discretized set is bounded by $|\tS_n| \leq \big( \Delta / 10 \big)^n$. 
	\label{lem:size}
\end{lemma}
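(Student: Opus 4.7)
The plan is a standard volume-packing (lattice-point counting) argument. The set $\tS_n$ is precisely the intersection of the closed Euclidean unit ball with the scaled integer lattice $(\Delta/\sqrt n)\mathbb{Z}^n$, so its cardinality is controlled by comparing the volume of the unit ball against the volume of a single fundamental cell of this lattice.

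First I would assign to each $x \in \tS_n$ the axis-aligned closed cube $C_x$ of side length $\Delta/\sqrt n$ centered at $x$. Because these points come from a sublattice of $(\Delta/\sqrt n)\mathbb{Z}^n$, distinct choices of $x$ yield cubes with disjoint interiors. Each such cube has volume $(\Delta/\sqrt n)^n = \Delta^n/n^{n/2}$ and Euclidean diameter $\sqrt{n}\cdot(\Delta/\sqrt n)=\Delta$. Consequently, whenever $\|x\|\leq 1$, the cube $C_x$ is contained in the enlarged ball $\{y\in\reals^n:\|y\|\leq 1+\Delta/2\}$, so the entire disjoint union $\bigcup_{x\in\tS_n} C_x$ sits inside that slightly larger ball.

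Next, comparing volumes yields
\[
|\tS_n|\cdot\frac{\Delta^n}{n^{n/2}} \;\leq\; V_n\,\bigl(1+\tfrac{\Delta}{2}\bigr)^n,
\]
where $V_n$ denotes the volume of the Euclidean unit ball in $\reals^n$. I would then invoke the standard Stirling-based estimate $V_n = \pi^{n/2}/\Gamma(n/2+1) \leq (2\pi e/n)^{n/2}$ and, for the range of $\Delta$ relevant to Lemma~\ref{lem:disc} (small constant), absorb the slack factor $(1+\Delta/2)^n \leq e^{n\Delta/2}$ into an absolute constant raised to the $n$-th power. Rearranging, the $n^{n/2}$ factors cancel and one is left with a cardinality bound whose dependence on $\Delta$ is entirely captured by a power of $(\text{const})/\Delta$, with the constant $10$ serving as a convenient numerical choice to match the statement.

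The argument itself is essentially routine and presents no genuine obstacle; the only subtleties are cosmetic. One must be careful to enlarge the ball by $\Delta/2$ (governed by the cube's Euclidean diameter, not its side length) rather than by $\Delta/(2\sqrt n)$, and to confirm that the $(1+\Delta/2)^n$ slack factor is tame for the specific small values of $\Delta$ chosen in the proof of Theorem~\ref{thm:tcinit}. Both checks are straightforward, so the whole lemma reduces to a clean one-line volume comparison once the cube assignment is set up correctly.
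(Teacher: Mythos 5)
The paper does not prove this lemma --- it quotes it from \cite{KMO10} --- so you are supplying an argument rather than comparing against one, and your volume-packing proof is exactly the standard one used in that reference and in Friedman--Kahn--Szemer\'edi-style spectral analyses. The steps check out: cubes of side $\Delta/\sqrt n$ centered at points of the rescaled lattice have disjoint interiors; each has Euclidean diameter $\Delta$ and therefore lies inside the ball of radius $1+\Delta/2$; and the Stirling bound on the unit-ball volume, $V_n\leq(2\pi e/n)^{n/2}$, cancels the $n^{n/2}$ factors, leaving $|\tS_n|\leq\bigl(\sqrt{2\pi e}\,(1+\Delta/2)/\Delta\bigr)^n\leq(10/\Delta)^n$ for $\Delta\leq 1$. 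Do make one thing explicit that you only implied: the inequality in the paper's statement, $|\tS_n|\leq(\Delta/10)^n$, is a typo --- for small $\Delta$ it would be strictly less than one, yet $0\in\tS_n$. The correct bound is $|\tS_n|\leq(10/\Delta)^n$, which is what your computation produces and what the paper actually uses when it pays $e^{(n_1+n_2+n_3)\log(20/\Delta)}$ in the union bound of Section~\ref{sec:light}.
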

A naive approach to upper bound $(\cPO(T)-p\,T)[x,y,z]$ would be to consider it as a random variable and apply concentration inequalities directly. 
However, this naive approach fails since $x$, $y$ and $z$ can contain entries that are much larger than their typical value of $O(1/\sqrt{n})$. 
We thus separate the analysis into two contributions, and apply concentration inequalities to bound the contribution of  the {\em light triples} and 
use graph topology of the random sampling to bound the contribution of the {\em heavy triples}. 
Define the light triples as 
\begin{eqnarray}
	\cL &\equiv& \left\{ \;(i,j,k) \,:\, \big| T_{ijk} x_i y_j z_k \big| \,\leq\, \Tmax \sqrt{\frac{\eps}{n_1 \,n_2\, n_3 }} \; \right\} \;. 
	\label{eq:deflight}
\end{eqnarray}
Heavy triples are defined as its complement $\bcL = \{[n_1]\times[n_2]\times[n_3]\} \setminus \cL$. 
Later we will set the appropriate value for $\eps=\Theta(p \sqrt{n_1n_2n_3})$. 
We can then write each contributions separately as 
\begin{eqnarray}
	\Big| \big(\cPO(T)-p\,T \big)[x,y,z] \Big| &\leq& \Big| \sum_{(i,j,k)\in\cL} \big( \cPO(T)_{ijk} x_iy_jz_k\big) - p\,T[x,y,z] \Big| + \Big| \sum_{(i,j,k)\in\bcL} \cPO(T)_{ijk} x_iy_jz_k \Big|\;. \label{eq:tcinit_separate}
\end{eqnarray}
We will prove that both contributions are upper bounded by $C \Tmax (\log n)^2\sqrt{(n_1n_2n_3)^{1/2} p} $ with some positive constant $C$ 
for all $x\in \tS_{n_1}$, $y\in\tS_{n_2}$, and $z\in\tS_{n_3}$. 
The bound on the light triples follows from Chernoff's concentration inequalities. 
The bound on the heavy triples follows from the discrepancy property of random hyper graphs,  
which implies that there cannot be too many triples with large contributions. 
Theorem \ref{thm:tcinit} then follows from Lemma \ref{lem:disc} with an appropriate choice of $\Delta = \Theta(1)$. 
%See Appendix~\ref{app:init} for a detailed proof of the bounds on the light couple and heavy couple. 

\subsection{Bounding the contribution of light triples}
\label{sec:light}

%Define the following subset of indices that are not trimmed as 
%\begin{eqnarray*}
%	\cA_1 &\equiv& \{ i\in[n_1] \,:\, \deg_1(i) \leq 2\,p\,n_2n_3\} \;,\\
%	\cA_2 &\equiv& \{ j\in[n_2] \,:\, \deg_2(j) \leq 2\,p\,n_1n_3\} \;,\\
%	\cA_3 &\equiv& \{ k\in[n_3] \,:\, \deg_3(k) \leq 2\,p\,n_1n_2\} \;,
%\end{eqnarray*}
%where $\deg_1(i)$,$\deg_2(j)$, and $\deg_3(k)$ denote the number of revealed entries in the matrices 
%$T[e_i,\id,\id]$, $T[\id,e_j,\id]$, and  $T[\id,\id,e_k]$ respectively. 

Let $Z\equiv  \sum_{(i,j,k)\in\cL} \big( \cPO(T)_{ijk} x_iy_jz_k\big) - p\,T[x,y,z]$ for some $x\in S_{n_1}$, $y\in S_{n_2}$, and 
$z\in S_{n_3}$. 
We claim that 
\begin{eqnarray}
	\prob\Big( Z>  \frac{p\Tmax\sqrt{n_1n_2n_3}}{\sqrt{\eps}}  + t\,(n_1+n_2+n_3)\, \frac{2\Tmax \sqrt{\eps}}{\sqrt{n_1n_2n_3}} \Big) &\leq& \exp\big\{ - t(n_2+n_2+n_3)\big\}\;. 
	\label{eq:lightconcentration}
\end{eqnarray}
We first show that the mean of $Z$ is bounded as 
\begin{eqnarray}
	\big| \E[Z] \big| & \leq & 2\,p\,\Tmax\, \sqrt{\frac{n_1n_2n_3}{\eps}}  \;.\label{eq:lightmean}
\end{eqnarray}
The mean can be written as 
$\E[Z] = p \sum_{\cL} T_{ijk} x_i y_j z_k - p\sum_{[n_1]\times[n_2]\times[n_3]} T_{ijk} x_i y_j z_k = p\sum_{\bcL} T_{ijk} x_i y_j z_k $. 
Using the fact that for heavy triples $|T_{ink}x_iy_jz_k| \geq \Tmax\sqrt{\eps/(n_1n_2n_3)}$, the expected contribution is then bounded by 
\begin{eqnarray*}
	\Big| \sum_{(i,j,k)\in \bcL} T_{ijk} x_i y_j z_k \Big| &\leq& \sum_{(i,j,k)\in \bcL} \frac{T_{ijk}^2 x_i^2 y_j^2 z_k^2}{|T_{ijk}x_i y_j z_k|}\\
		&\leq& \frac{\sqrt{n_1n_2n_3}}{\Tmax\sqrt{\eps}}\sum_{(i,j,k)\in \bcL} T_{ijk}^2 x_i^2 y_j^2 z_k^2 \\ 
		&\leq& \frac{\Tmax\sqrt{n_1n_2n_3}}{\sqrt{\eps}} \;.
\end{eqnarray*}
We next show concentration of $Z$ around item mean. 
Let $\lambda = \sqrt{n_1n_2n_3} / (2\Tmax \sqrt{\eps})$ such that 
$|\lambda T_{ijk}x_iy_jz_k| \leq 1/2$ for all $(i,j,k)\in \cL$. 
Then, $e^{\lambda T_{ijk}x_iy_jz_k}-1 \leq \lambda T_{ijk}x_iy_jz_k + 2\lambda (T_{ijk}x_iy_jz_k)^2 $. 
\begin{eqnarray*}
	\E[e^{\lambda Z}] &=& \exp \{ - \lambda \,p\, T[x,y,z]\}\,\prod_{(i,j,k)\in\cL} \big( 1-p+p\,e^{\lambda T_{ijk}x_iy_jz_k } \big) \\ 
		&\leq& \exp \{ - \lambda \,p\, T[x,y,z]\}\,\prod_{(i,j,k)\in\cL}\big( 1+p\,(\lambda  T_{ijk}x_iy_jz_k  + 2\lambda^2( T_{ijk}x_iy_jz_k )^2  ) \big)\\
		&\leq& \exp \big\{ p\sum_{\cL} \lambda  T_{ijk}x_iy_jz_k - \lambda \,p\, T[x,y,z] + p\sum_\cL  2\lambda^2( T_{ijk}x_iy_jz_k )^2\big\} \\
		&\leq& \exp\Big\{\lambda\E[Z] +\frac{\,p\,n_1n_2n_3}{2 \eps}\Big\} \;. 
\end{eqnarray*}
Applying Chernoff bound $\prob(Z-\E[Z]>t)\leq \E[e^{\lambda Z}]e^{-\lambda\E[Z]-\lambda t}$, 
this proves \eqref{eq:lightconcentration}. Note that the deviation of $-Z$ can be bounded similarly. 
We can now finish the proof of upper bound on the contribution of light triples by taking the union 
bound over all discretized vectors inside the ball. Setting $t=2\log(20/\Delta)$ in \eqref{eq:lightconcentration}, we get   
\begin{align*}
	&\prob\Big( \max_{x\in \tS_{n_1}, y\in\tS_{n_2},z\in\tS_{n_3}} \sum_{(i,j,k)\in\cL} T_{ijk}x_iy_jz_k - p\,T[x,y,z] \geq 
	\frac{p\Tmax\sqrt{n_1n_2n_3}}{\sqrt{\eps}}  + 2\log(20/\Delta)\,(n_1+n_2+n_3)\, \frac{2\Tmax \sqrt{\eps}}{\sqrt{n_1n_2n_3}} 
	\Big) \\
	&\leq 
	2\,e^{(n_1+n_2+n_3)\log(20/\Delta)}\, e^{-2\log(20/\Delta)(n_1+n_2+n_3)}\\
	&\leq 2e^{-(n_1+n_2+n_3)\log(20/\Delta)}\;.
\end{align*}
Since $p=\eps/\sqrt{n_1n_2n_3}$, this proves that the contribution of light triples is bounded by $C\, \Tmax \sqrt{\eps} $
with high probability.

Note that for the range of $p=\eps/n^2$, the contribution of light couples is bounded by $C\,\Tmax\sqrt{\eps/n}$. 
However, even in this regime of $p$, the contribution of heavy triples is still $\Omega(1)$, which dominates the light triples by 
a factor of $\sqrt{n}$. This is the reason for the choice of $p=\Theta({\eps/n^{1.5}})$.

%=========================================================================
\subsection{Bounding the contribution of heavy triples}
\label{sec:heavy}
The contribution of heavy triples is bounded by 
\begin{eqnarray*}
	\Big| \sum_{(i,j,k)\in \bcL} T_{ijk}x_iy_jz_k \Big| &\leq& \Tmax \sum_{(i,j,k)\in\bcL} |x_iy_jz_k| \;.
\end{eqnarray*}
In the following, we will show that the right-hand side of the above inequality is upper bounded by 
\begin{eqnarray*}
	 \sum_{(i,j,k)\in\bcL} |x_iy_jz_k| &\leq& C \sqrt{\eps} (\log n)^2\;,
\end{eqnarray*}
for some positive numerical constant $C>0$ with probability larger than $1-n^{-5}$.

We consider a hypergraph $G=([n_1]\times[n_2]\times[n_3],E)$ with undirected hyper edges, 
where each edge connects three nodes, each one from each set $[n_1]$, $[n_2]$, and $[n_3]$. 
Given a sampling of entries in a tensor, 
we let the edges in $G$ denote the positions of the entries that is sampled. 
The proof is a generalization of similar proof for matrices in \cite{FKS89,FeO05,KMO10} and is based on 
two properties of the hypergraph $G$.
Define the degree of a node as the number of edges connected to that particular node such that 
$\deg_1(i) \equiv |\{(i,j,k)\in E \} |$, and similarly define $\deg_2(j)$ and $\deg_3(k)$. 
Define the degree of two nodes as the number of edges connected to both of the nodes such that 
$\deg_{12}(i,j) \equiv |\{(i,j,k)\in E \} |$, and similarly define $\deg_{13}(i,k)$ and $\deg_{23}(j,k)$. 

\begin{itemize}
	\item [1.] {\em Bounded degree property.} A hyper graph $G$ satisfies the bounded degree property if 
		the degree are upper bounded as follows: 
		\begin{eqnarray}
			\deg_1(i) &\leq& \xi_0\,p\,n_2n_3 \hspace{0.5cm}\text{ for all }i\in[n_1]\;,\nonumber\\
			\deg_2(j) &\leq& \xi_0\,p\,n_1n_3 \hspace{0.5cm} \text{ for all }j\in[n_2]\;, \nonumber\\
			\deg_3(k) &\leq& \xi_0\,p\,n_1n_2 \hspace{0.5cm} \text{ for all }k\in[n_3]\;,\nonumber\\
			\deg_{12}(i,j) &\leq& \xi_0\,(p\,n_3+\log n_3) \hspace{0.5cm} \text{ for all }i\in[n_1],j\in[n_2]\;,\nonumber\\
			\deg_{13}(i,k) &\leq& \xi_0\,(p\,n_2+\log n_2) \hspace{0.5cm} \text{ for all }i\in[n_1],k\in[n_3]\;, \nonumber\\
			\deg_{23}(j,k) &\leq& \xi_0\,(p\,n_1+\log n_1) \hspace{0.5cm} \text{ for all }j\in[n_2],k\in[n_3]\;,
			\label{eq:boundeddegree}
		\end{eqnarray}
		for some positive numerical constant $\xi_0>0$ (independent of $n_1,n_2,n_3$ and $p$) 
		where $p=|E|/(n_1n_2n_3)$.
	\item[2.] {\em Discrepancy property.} A hyper graph $G$ satisfies the discrepancy property if for any subset of nodes $A_1\in[n_1]$, 
			$A_2\in[n_2]$, and $A_3\in[n_3]$, at least one of the following is true: 
		\begin{eqnarray}
		e(A_1,A_2,A_3) &\leq& \xi_1 \, \be(A_1,A_2,A_3) \;, \label{eq:discrepancy1}\\ 
		e(A_1,A_2,A_3) \ln\Big(\frac{e(A_1,A_2,A_3)}{\be(A_1,A_2,A_3)}\Big) &\leq& \xi_2 \, \max\Big\{ |A_1|\ln\Big(\frac{e\,n_1}{|A_1|}\Big)\;,\; |A_2|\ln\Big(\frac{e\,n_2}{|A_2|}\Big)\;,\; |A_3|\ln\Big(\frac{e\,n_3}{|A_3|}\Big)\Big\}\, \;, \label{eq:discrepancy2}
		\end{eqnarray}
		for some positive numerical constants $\xi_1,\xi_2>0$ (independent of $n_1,n_2,n_3$ and $p$). 
		Here, $e(A_1,A_2,A_3)$ denotes the number of edges between the three subsets $A_1,A_2$ and $A_3$, 
		and $\be(A_1,A_2,A_3) \equiv p\,|A_1|\,|A_2|\,|A_3|$ denotes the average number of edges between the three subsets. 
\end{itemize}

We first prove that if the sampling pattern is defined by a graph $G$ which satisfies both the bounded degree and discrepancy properties, 
then the contribution of heavy triples is $O(\sqrt{\eps})$. Notice that this is a deterministic statement, that holds for all graphs with the above properties. We then finish the proof by showing that the random sampling satisfies both the bounded degree and discrepancy properties 
with probability at least $1-n^{-5}$. 

We partition the indices according to the value of corresponding vectors: 
\begin{eqnarray*}
	A_1^{(u)} &\equiv& \Big\{ i\in[n_1] \,:\, \frac{\Delta}{\sqrt{n_1}}2^{u-1} \leq|x_i| < \frac{\Delta}{\sqrt{n_1}}2^u \Big\} \\
	A_2^{(v)} &\equiv& \Big\{ j\in[n_2] \,:\, \frac{\Delta}{\sqrt{n_2}}2^{v-1} \leq|y_j| < \frac{\Delta}{\sqrt{n_2}}2^v \Big\} \\
	A_3^{(w)} &\equiv& \Big\{ k\in[n_3] \,:\, \frac{\Delta}{\sqrt{n_3}}2^{w-1} \leq|z_k| < \frac{\Delta}{\sqrt{n_3}}2^w \Big\} \;,
\end{eqnarray*}
for $u\in \{1,\ldots,\lceil \log_2(\sqrt{n_1}/\Delta)\rceil +1 \}$, $v\in\{1,\ldots,\lceil \log_2(\sqrt{n_2}/\Delta)\rceil +1  \}$, and $w\in\{1,\ldots,\lceil \log_2(\sqrt{n_3}/\Delta)\rceil +1 \}$. 
We denote the size of each set by  $a^{(u)}_i \equiv |A^{(u)}_i|$.  
We use $e_{uvw}$ to denote the number of edges between three subsets $A_1^{(u)}$, $A_2^{(v)}$, and $A_3^{(w)}$, 
and we use $\be_{uvw} \equiv p\,a_1^{(u)} a_2^{(v)} a_3^{(w)}  $ to denote the average number of edges. 
Notice that the above definition of $A_1^{(u)}$'s cover all non-zero values of the entries of $x$, since, 
with discretization, the smallest possible positive value is $\Delta/\sqrt{n_1}$. The same applies to the entries of $y$ and $z$. 

\begin{eqnarray*}
	\sum_{(i,j,k) \in \bcL}  \big| x_iy_jz_k \big| &\leq& \sum_{(i,j,k): |x_iy_jz_k|>\sqrt{\eps/(n_1n_2n_3)}}  \big| x_iy_jz_k \big| \\
		&\leq& \sum_{(u,v,w):2^{u+v+w}>8\sqrt{\eps}/\Delta^3 } \underbrace{e_{uvw} \frac{\Delta 2^u}{\sqrt{n_1}} \frac{\Delta 2^v}{\sqrt{n_2}} \frac{\Delta 2^w}{\sqrt{n_3}}}_{\sigma_{uvw}}\;.
\end{eqnarray*}
Note that since $\sum_u a_1^{(u)} 2^{2(u-1)}\Delta^2/n_1 \leq \|x\|^2 \leq 1$, we get that 
\begin{eqnarray}
	a_1^{(u)} &\leq& (n_1/\Delta^2)2^{-2(u-1) } \;, \nonumber \\
	a_2^{(v)} &\leq& (n_2/\Delta^2)2^{-2(v-1) } \;, \nonumber \\
	a_3^{(w)} &\leq& (n_3/\Delta^2)2^{-2(w-1) } \;. \label{eq:sizebound}
\end{eqnarray}
The contributions from various combinations of $(u,v,w)$ utilize various subsets of our assumptions. 
We prove that in each case the contribution is $O(\sqrt{\eps}(\log n)^2)$ as follows. 
\begin{itemize}
	\item[] {\bf Case1.} For $(u,v,w)$ satisfying the first discrepancy property \eqref{eq:discrepancy1} : $e_{uvw}\leq \xi_1 \be_{uvw}$. 
	
	In this case, using \eqref{eq:sizebound} and the fact that $p=\eps/\sqrt{n_1n_2n_3}$, 
	\begin{eqnarray*}
		\sum \sigma_{uvw} &\leq& \xi_1 p a_1^{(u)}a_2^{(v)}a_3^{(w)}\frac{\Delta^32^{u+v+w}}{ \sqrt{n_1n_2n_3}}\\	
			&\leq& \frac{64\xi_1 p\sqrt{n_1n_2n_3}} {\Delta^3 2^{u+v+w}}\\
			&\leq& 16\xi_1 \sqrt{\eps} (\log n)^2  \;,
	\end{eqnarray*}
	where $n\equiv \max \{n_1,n_2,n_3\} $ and in the last inequality 
	we used the fact that we are summing over heavy triples satisfying $\Delta^32^{u+v+w}>8\sqrt{\eps}$, 
	and $\sum_{(u,v,w):2^{u+v+w}\leq8\sqrt{\eps}/\Delta^3} 2^{-(u+v+w)} \leq 2\log_2(\sqrt{n_1}/\Delta)\, \log_2(\sqrt{n_2}/\Delta)\,\Delta^3/(8\sqrt{\eps})$. 
	\item[] {\bf Case2.} For $(u,v,w)$ satisfying the second discrepancy property in \eqref{eq:discrepancy2}.
	\begin{itemize}
		\item[] {\bf Case 2-1.} For $(u,v,w)$ satisfying $\ln(e_{uvw}/\be_{uvw}) \leq (1/2) \ln (en_3/a_3^{(w)})= (1/4)( \ln(en_3/(a_3^{(w)}2^{2w})) + \ln(2^{2w}) )$. 
		\begin{itemize} 
			\item[] {\bf Case 2-1-1.} When $\ln(2^{2w}) \leq  \ln (en_3/(a_3^{(w)}2^{2w})) $, we have 
			$\ln(e_{uvw}/\be_{uvw}) \leq \ln(en_2/(a_3^{(w)}2^{2w}))$, which gives 
			\begin{eqnarray*}
				e_{uvw} &\leq& e n_3 \be_{uvw} /(\a_3^{(w)}2^{2w}) \\
					&\leq& e\,a_1^{(u)}a_2^{(v)} n_3 p 2^{-2w}\\
					&\leq& \frac{16 e}{\Delta^4}\,n_1n_2n_3 p 2^{-2(u+v+w)} \;.
			\end{eqnarray*}
			It follows that $\sum\sigma_{uvw} \leq (16/\Delta)p\sqrt{n_1n_2n_3} 2^{-u-v-w} \leq 2 \Delta^2 \sqrt{\eps} (\log n)^2$ using the fact that we are summing over heavy triples. 
			
			\item[] {\bf Case 2-1-2.} When $\ln(2^{2w}) > \ln (en_3/(a_3^{(w)}2^{2w}))$, we have 
				$\ln(e_{uvw}/\be_{uvw}) \leq \ln(2^{w})$. 
			\begin{itemize}
				\item[] {\bf Case 2-1-2-1.} For $\sqrt{\eps}e_{uvw} > 2^{u+v+w} \be_{uvw}$, it follows that 
				$2^{u+v}\leq\sqrt{\eps}$. Since we are in the case where the first discrepancy does not hold, i.e. $e_{uvw}>\xi_1\be_{uvw}$, and the the second discrepancy property holds, we have 
				$e_{uvw}\leq e_{uvw}\ln(e_{uvw}/\be_{uvw}) \leq \xi_2 a_3^{(w)}\ln(en_3/a_3^{(w)}) \leq
					2 \xi_2 a_3^{(w)} \ln(2^{2w})$
				Then, 
				\begin{eqnarray*}
					\sum\sigma_{uvw} &\leq& \sum 2\xi_2a_3^{(w)}\ln(2^{2w}) \frac{\Delta^32^{u+v+w}}{\sqrt{n_1n_2n_3}}\\ 
					&\leq& \sum \frac{8\xi_2\Delta \sqrt{n_3} 2^{u+v} }{\sqrt{n_1n_2}}\frac{\ln(2^{w})}{2^w}\\
					&\leq& 8\Delta\xi_2 \sqrt{\frac{n_3}{n_1n_2}} \sqrt{\eps} \log_2(\sqrt{n_1}/\Delta) \log_2(\sqrt{n_3}/\Delta)\;,
				\end{eqnarray*}
				which is $O\big(\sqrt{\eps} (\log n)^2\sqrt{(n_3/(n_1n_2))}\,\big)$
				\item[] {\bf Case 2-1-2-2.} For $\sqrt{\eps}e_{uvw} \leq 2^{u+v+w} \be_{uvw}$, 
				\begin{eqnarray*}
					\sum\sigma_{uvw} &\leq& \sum \, \frac{2^{u+v+w} p\,a_1^{(u)}a_2^{(v)}a_3^{(w)}}{\sqrt{\eps}} \frac{\Delta^3 2^{u+v+w}}{\sqrt{n_1n_2n_3}} \\
					&\leq& \frac{\sqrt{\eps}}{\Delta^3} \sum\,  \frac{a_1^{(u)} a_2^{(v)} a_3^{(w)}\Delta^3 2^{2(u+v+w)}}{n_1n_2n_3}\\
					&\leq& \frac{\sqrt{\eps}}{\Delta^3} \|x\|^2\, \|y\|^2\, \|z\|^2\;,
				\end{eqnarray*}
				which is $O(\sqrt{\eps})$. 
			\end{itemize}
		\end{itemize}
		
		\item[] {\bf Case 2-2.} For $(u,v,w)$ satisfying $\ln(e_{uvw}/\be_{uvw}) >  (1/2) \ln (en_3/a_3^{(w)})$. 
		\begin{itemize}
			\item[] {\bf Case 2-2-1.} For $2^{u+v} \leq \sqrt{n_1n_2\eps/n_3} 2^w$, we know from the condition $\ln(e_{uvw}/\be_{uvw}) >  (1/2) \ln (en_3/a_3^{(w)})$, that $e_{uvw} \leq 2 \xi_2 a_3^{(w)}$. Then, 
			\begin{eqnarray*}
				\sum\sigma_{uvw} &\leq& \sum\,2\xi_2a_3^{(w)} \frac{2^{u+v+w}\Delta^3}{\sqrt{n_1n_2n_3}} \\
				&\leq& \sum \, 8\xi_2\Delta 2^{u+v-w} \sqrt{\frac{n_3}{n_1n_2}} \\
				&\leq& 8\xi_2\Delta \sqrt{\eps} \log_2(\sqrt{n_1}/\Delta)\log_2(\sqrt{n_2}/\Delta)\;,
			\end{eqnarray*}
			which is $O(\sqrt{\eps}(\log n)^2)$.
			\item[] {\bf Case 2-2-2.} For $2^{u+v} > \sqrt{n_1n_2\eps/n_3} 2^w$ 
			\begin{itemize}
				\item[] {\bf Case 2-2-2-1.} For $(u,v,w)$ satisfying bounded degree property with $\deg_{12}(i,j) \leq \xi_0 p n_3$, we have $e_{uvw}\leq a_1a_2\xi_0pn_3$. Then, 
				\begin{eqnarray*}
					\sum\sigma_{uvw} &\leq& \sum\, \frac{16\xi_0\eps}{\Delta}2^{w-u-v}\\
						&\leq& \sqrt{\frac{n_3}{n_1n_2}} \frac{16\xi_0 \sqrt{\eps}}{\Delta}\log_2(\sqrt{n_2}/\Delta)\log_2(\sqrt{n_2}/\Delta)  \;,
				\end{eqnarray*}
				which is $O(\sqrt{\eps}\sqrt{n_3/(n_1n_2)} (\log_2 n)^2)$. 
				\item[] {\bf Case 2-2-2-2.}	For $(u,v,w)$ satisfying bounded degree property with $\deg_{12}(i,j) \leq \xi_0 \log n_3$, we have $e_{uvw} \leq a_1a_2\xi_0 \log n$.
				\begin{eqnarray*}
					\sum\sigma_{uvw} &\leq& \sum \frac{16\xi_0\log n_3 2^{w-u-v}}{\Delta}\sqrt{\frac{n_1 n_2}{n_3}} \\
						&\leq& \frac{16\xi_0\log n_3 }{\Delta\sqrt{\eps}}\log_2(\sqrt{n_1}/\Delta)\log_2(\sqrt{n_2}/\Delta)\;,
				\end{eqnarray*}
				which is $O( (1/\sqrt{\eps}) (\log n)^3)$.
			\end{itemize}
		\end{itemize}
	\end{itemize}
\end{itemize}
For $\eps\geq \log n $, this proves that the contribution of the heavy triples is $O(\sqrt{\eps}(\log n)^2 )$.

We are left to prove that the bounded degree and the bounded discrepancy properties hold for 
a random tripartite hypergraph $G=(V_1\cup V_2 \cup V_3,E)$ where each edge is selected with probability $p$.
Precisely, let $n=\max\{|V_1|,|V_2|,|V_3|\}$, then the following lemma provides a bound on the degree and discrepancy, with high probability.
\begin{lemma}
	\label{lem:discrepancy}
	For any $\delta\in[0,1/e]$ and $p\geq (1/n^2)\log n$,  there exists numerical constants $C,C'>0$ such that 
	a random tripartite hyper graph satisfies 
	the bounded degree property: for all $i\in V_1$, $j\in V_2$, and $k\in V_3$, 
	\begin{eqnarray*}
		\deg_1(i) &\leq& 2pn_2n_3 +\frac83 \log \frac{3n_1}{\delta} \\
		\deg_2(j) &\leq& 2pn_1n_3 +\frac83 \log \frac{3n_2}{\delta} \\
		\deg_3(k) &\leq& 2pn_1n_2 +\frac83 \log \frac{3n_3}{\delta} \\
		\deg_{12}(i,j) &\leq& 2pn_3 +\frac83 \log \frac{3n_1n_2}{\delta} \\
		\deg_{13}(i,k) &\leq& 2pn_2 +\frac83 \log \frac{3n_1n_3}{\delta} \\
		\deg_{23}(j,k) &\leq& 2pn_1 +\frac83 \log \frac{3n_2n_3}{\delta} \\
	\end{eqnarray*}
	and the bounded discrepancy property: for all subsets $A_1\subseteq V_1$, $A_2\subseteq V_2$, and 
	$A_3 \subseteq V_3$, at least one of the following is true. 
	\begin{eqnarray*}
		e(A_1,A_2,A_3) &\leq& C\, \alpha^2 \,\be(A_1,A_2,A_3)\, \Big(1 + \frac{\ln(1/\delta) }{pn^2} \Big) \;,\text{ or } \\
		e(A_1,A_2,A_3) \ln\Big(\frac{e(A_1,A_2,A_3)}{\be(A_1,A_2,A_3)}\Big) &\leq& C' \Big( \ln\frac{\alpha}{\delta} + \max\Big\{|A_1|\ln\frac{e\,n_1}{|A_1|}\;,\; |A_2|\ln\frac{e\,n_2}{|A_2|} \;,\; |A_3|\ln\frac{e\,n_3}{|A_3|} \Big\} \Big) \;, 
	\end{eqnarray*}
	where $n_1=|V_1|$, $n_2=|V_2|$, $n_3=|V_3|$, $n=\max\{n_1,n_2,n_2\}$ and $\alpha\equiv \max n_i/n_j$.
\end{lemma}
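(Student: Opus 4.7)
\medskip

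\textbf{Proof proposal for Lemma \ref{lem:discrepancy}.}

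The plan is to treat the two properties separately. The bounded degree bounds are essentially a standard application of Bernstein's inequality together with a union bound. Each single-node degree, e.g. $\deg_1(i) = \sum_{j\in[n_2], k\in[n_3]} \mathbf{1}_{\{(i,j,k)\in E\}}$, is a sum of $n_2 n_3$ independent Bernoulli($p$) variables with mean $p n_2 n_3$, so a one-sided Bernstein (or sharper Chernoff) inequality yields $\deg_1(i)\leq 2 p n_2 n_3 + (8/3)\log(3n_1/\delta)$ with probability at least $1-\delta/(3n_1)$, and a union bound over $i\in[n_1]$ gives uniform control. The pair-degree bounds are identical with $n_2 n_3$ replaced by $n_3$ and a union bound over $n_1 n_2$ pairs, producing the $\log(3n_1n_2/\delta)$ term. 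Summing the six failure probabilities (one per degree type, each $\delta/6$) completes this part.

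The discrepancy property is the substantive step and follows the Friedman--Kahn--Szemerédi / Feige--Ofek strategy, generalized to tripartite $3$-uniform hypergraphs. Fix sizes $(a_1,a_2,a_3)$ and subsets $A_i\subseteq V_i$ with $|A_i|=a_i$. Then $e(A_1,A_2,A_3)\sim \text{Binomial}(a_1 a_2 a_3, p)$, so the Chernoff upper tail gives, for any $t\geq 1$,
\begin{equation*}
\prob\bigl(e(A_1,A_2,A_3) \geq t\, \be(A_1,A_2,A_3)\bigr) \;\leq\; \exp\bigl(-\be(A_1,A_2,A_3)\, h(t)\bigr),
\end{equation*}
with $h(t)=t\ln t - t + 1 \geq (1/2)\,t\ln t$ for $t\geq e^2$. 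The number of triples of subsets of fixed sizes is at most $\prod_i \binom{n_i}{a_i} \leq \prod_i (en_i/a_i)^{a_i}$, so I would take a union bound at level $t=t(A_1,A_2,A_3)$ chosen so that
\begin{equation*}
\be(A_1,A_2,A_3)\, h(t) \;\geq\; \sum_{i=1}^{3} a_i \ln \frac{e n_i}{a_i} + \ln(\alpha/\delta) + (\text{size-enumeration overhead}),
\end{equation*}
where the final additive term is $O(\log n)$ from choosing $(a_1,a_2,a_3)$ (absorbed into $\ln(\alpha/\delta)$ and the $\max$ on the right-hand side).

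The dichotomy in the statement then emerges by splitting on the value of $t^\star = e(A_1,A_2,A_3)/\be(A_1,A_2,A_3)$. If $t^\star \leq C\alpha^2(1+\ln(1/\delta)/(pn^2))$, the first alternative holds by definition. Otherwise $t^\star$ is large, so $h(t^\star)\geq (1/2) t^\star \ln t^\star$, and the Chernoff/union-bound argument above forces
\begin{equation*}
e(A_1,A_2,A_3)\,\ln\bigl(e(A_1,A_2,A_3)/\be(A_1,A_2,A_3)\bigr) \;\leq\; C'\,\Bigl(\ln(\alpha/\delta) + \max_i a_i\ln(e n_i/a_i)\Bigr),
\end{equation*}
where the $\max$ (as opposed to sum) on the right is obtained by the standard trick of choosing the index $i^\star=\arg\max_i a_i\ln(en_i/a_i)$, absorbing the other two $a_j\ln(en_j/a_j)$ terms into the dominant one up to a constant factor.

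The main obstacle I anticipate is exactly this last step: ensuring the right-hand side is a \emph{max} rather than a \emph{sum} of the three entropy contributions, while still carrying the factor $\alpha^2 = \max n_i/n_j$ cleanly through. This requires choosing $t$ carefully as a function of which $a_i\ln(en_i/a_i)$ dominates, and handling the three cases $i^\star\in\{1,2,3\}$ symmetrically. The secondary technical point is verifying that taking $p\geq (\log n)/n^2$ is sufficient for the Chernoff bound to have non-trivial bite on the sparsest pair-degrees, which is where the lower bound on $p$ in the hypotheses enters.
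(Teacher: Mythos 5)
Your plan matches the paper's argument closely. The bounded-degree part is identical: per-node (and per-pair) degrees are Binomial sums, Bernstein plus a union bound over nodes/pairs gives the stated bounds, and the six failure probabilities are summed. The discrepancy part also follows the same Friedman--Kahn--Szemer\'edi/Feige--Ofek pattern: a Chernoff bound of the form $\prob(e\geq t\,\be)\leq e^{-c\,\be\,t\ln t}$ (valid for $t$ above a constant), a union bound over subsets with the entropy count $\prod_i(en_i/a_i)^{a_i}$ plus a polynomial overhead for enumerating sizes, and then a dichotomy on the realized ratio $e/\be$. The paper's handling of the ``max vs.\ sum'' issue and the $\alpha$ factor is exactly the manipulation you anticipate: it uses $a_i\ln(en_i/a_i)\geq \ln n_i$ to absorb the $\ln(n_1n_2n_3/\delta)$ size-enumeration term into a constant multiple of the largest entropy term plus $\ln(\alpha^2/\delta)$, and then replaces the sum of the three entropy terms by $6$ times the largest.

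One structural difference worth flagging: the paper splits on whether the smallest subset size $a_1$ exceeds $n_1/e$, and in the large-$a_1$ case it deduces the first discrepancy alternative directly from the already-established bounded-degree property (this is where the explicit $\alpha^2$ prefactor originates), reserving the Chernoff/entropy-union argument for $a_1<n_1/e$ where it defines a threshold $t'$ by $\be\,t'\ln t' = \Theta(\max_i a_i\ln(en_i/a_i)+\ln(\alpha^2/\delta))$ and splits on $t'\lessgtr 4$. Your proposal skips the bounded-degree shortcut and attempts to carry the Chernoff/union argument through uniformly, splitting directly on $e/\be$. Since $a\ln(en/a)\geq\ln n$ holds for \emph{all} $a\in[1,n]$ (the function $a\mapsto a\ln(en/a)$ is increasing on $[1,n]$), the absorption of $\ln(n_1n_2n_3/\delta)$ does not actually require the $a_1<n_1/e$ restriction, so your unified treatment is defensible in principle; the paper's case split appears to be a convenience that also cleanly delivers the $\alpha^2$ prefactor in the first alternative via bounded degrees. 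The one place your sketch is vaguest is the interaction you yourself flagged: when $t'$ falls below the regime where the sharp Chernoff form $e^{-c\be t\ln t}$ is valid, you must fall back to the fixed-threshold Chernoff bound (e.g., $t=4$), and then the first alternative should hold with an explicit constant; this is the step the paper makes explicit via its $t'\leq 4$ subcase, and you should make it explicit as well rather than leave it implicit in ``by definition.''

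Overall, the proposal is sound and takes essentially the paper's route; the one genuine (if minor) departure is bypassing the bounded-degree shortcut for large subsets, which is workable but requires you to check that the Chernoff/entropy bound alone delivers the first alternative with the $\alpha^2$ prefactor in the dense regime.
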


Now, for the choice of $\delta=n^{-5}$, the bounded degree and discrepancy properties in \eqref{eq:boundeddegree}, 
\eqref{eq:discrepancy1}, and \eqref{eq:discrepancy2}
hold for random tripartite hypergraphs. 
This finishes the proof of  Theorem \ref{thm:tcinit}. 

%===========================================================================
\subsection{Proof of the bounded degree and discrepancy properties in Lemma \ref{lem:discrepancy}}

We first prove the bounded degree properties of \eqref{eq:boundeddegree} hold 
with probability at least $1-\delta$. 
Applying standard concentration inequality, e.g. Bernstein inequality, we get that for some positive constant 
$\delta>0$, 
\begin{eqnarray*}
	\prob\big(\, \deg_1(i) \leq 2pn_2n_3 +\frac83 \log \frac{3n_1}{\delta}\,\big) &\leq& 
		\exp\Big( -\frac{(1/2)(pn_2n_3+(8/3)\log(3n_1/\delta))^2}{(1/3)(pn_2n_3+(8/3)\log(3n_1/\delta)) + n_2n_3p(1-p)}\Big) \\
	&\leq& e^{-\log(3n_1/\delta)}\;,
\end{eqnarray*}
for $n$ sufficiently large, and taking union bound over all choices of $i$, $j$ and $k$, 
$\deg_1(i)$, $\deg_2(j)$, and $\deg_3(k)$'s are uniformly bounded with probability at least $1-\delta/2$.

Similarly, we can apply concentration inequality to bound for some positive constant $\delta>0$
\begin{eqnarray*}
	\prob\Big(\, \deg_{12}(i,j) \leq 2pn_3 + \frac83\log \frac{3n_1n_2}{\delta} \,\Big) &\leq& \exp\Big( -\frac{(1/2)(pn_3+(8/3)\log(3n_1n_2/\delta))^2}{(1/3)(pn_3+(8/3)\log(3n_1n_2/\delta)) + n_3p(1-p)} \Big) \\
		&\leq& e^{-\log(3n_1n_2/\delta)}\;. 
\end{eqnarray*}
Applying the union bound over all choices of $(i,j)$, $(i,k)$ and 
$(j,k)$, we get that the bound holds uniformly 
with probability at least $1-\delta/2$.

Next, we prove that the random hyper graphs satisfy 
the discrepancy properties of \eqref{eq:discrepancy1} and  
\eqref{eq:discrepancy2}. 
For any given subsets $A_1\subseteq[n_1]$, $A_2\subseteq[n_2]$, and $A_2\subseteq[n_3]$, 
let $a_1,a_2$, and $a_3$ denote the cardinality of the subsets, and 
$\be(A_1,A_2,A_3) = p a_1a_2a_3$. 

Let's assume, without loss of generality, that $a_1\leq a_2\leq a_3$. 
We divide the analysis into two cases depending on the size of the smallest subset. 
When at least two of the subsets are large, i.e. $a_2=\Omega(n)$ and $a_3=\Omega(n)$, 
then by bounded degree property, we can prove that \eqref{eq:discrepancy1} holds. 
However, when $a_1$ and $a_2$ are small, e.g. $O(1)$, then the first discrepancy no longer holds, 
and we need a different technique to show concentration.  

\bigskip
\noindent
{\bf Case 1.} When $a_1 \geq n_1/e$. 

From the bounded degree property, we know that 
$\deg_1(i)\leq 2pn_2n_3 + (8/3)\ln(3n_1/\delta)$. 
Then, 
\begin{eqnarray*}
	e(A_1,A_2,A_3) &\leq & a_1(2pn_2n_3 + (8/3)\ln(3n_1/\delta)) \\
		&\leq & a_1(5pn_2n_3 + (8/3)\ln(1/\delta)) \\
		&\leq & 5 a_1 pn_2n_3\,\Big(1 + \frac{\ln(1/\delta) }{pn_2n_3} \Big) \\
		&\leq& 5\,e^2\, \alpha^2 \be(A_1,A_2,A_3)\, \Big(1 + \frac{\ln(1/\delta) }{pn_2n_3} \Big)  \;.
\end{eqnarray*}

\bigskip
\noindent
{\bf Case 2.} When $a_1 < n_1/e$. 

We use the following bound on sum of indicator variables deviating from the mean :%(e.g. \cite{AS08} Appendix A):
\begin{eqnarray}
	\prob\big(\, e(A_1,A_2,A_3) \geq t \be(A_1,A_2,A_3) \,\big) &\leq& e^{-(1/3)\,\be\,t\ln t}\;,
	\label{eq:concentrationdiscrepancy}
\end{eqnarray}
where we denote $\be(A_1,A_2,A_3)$ by $\be$, which holds for $t\geq 4$. 
For the bound holds with probability at least $1-\delta$, 
we require 
\begin{eqnarray*}
	e^{-(1/3)\be t \ln t}{n_1 \choose a_1}{n_2 \choose a_2}{n_3 \choose a_3} &\leq& \frac{\delta}{n_1n_2n_3}\;,
\end{eqnarray*}
where the term $1/(n_1n_2n_3)$ is chosen to compensate for the union bound over all choices of $a_1$, $a_2$ and $a_3$. Simplifying the combinatorial terms, we get
\begin{eqnarray*}
	e^{-(1/3)\be t \ln t}\Big(\frac{en_1}{a_1}\Big)^{a_1} \Big(\frac{en_2}{a_2}\Big)^{a_2} \Big(\frac{en_3}{a_3}\Big)^{a_3}  e^{\ln(n_1n_2n_3/\delta)} &\leq& 1\;.
\end{eqnarray*}
Equivalently, 
\begin{eqnarray*}
	a_1 \ln \Big(\frac{en_1}{a_1}\Big) + a_2 \ln \Big(\frac{en_2}{a_2}\Big) + a_3 \ln \Big(\frac{en_3}{a_3}\Big) 
		+ \ln\Big(\frac{n_1n_2n_3}{\delta}\Big) &\leq& \frac{\be t \ln t}{3}  \;.
\end{eqnarray*}
We assumed that $a_1\leq n_1/e$, and since $x \ln(n_1/x)$ is monotone in $x\in[1,n_1/e]$, 
we know that $a_1\ln(en_1/a_1) \geq \ln n_1$. 
\begin{eqnarray*}
	4 a_1 \ln \Big(\frac{en_1}{a_1}\Big) + a_2 \ln \Big(\frac{en_2}{a_2}\Big) + a_3 \ln \Big(\frac{en_3}{a_3}\Big) 
		+ \ln\Big(\frac{\alpha^2}{\delta}\Big) &\leq& \frac{\be t \ln t}{3}  \;.
\end{eqnarray*}
To lighten the notations, let's suppose 
$a_1\ln(e\,n_1/a_1) \leq a_2\ln(e\,n_2/a_2) \leq a_3 \ln(e\,n_3/a_3)$. 
Let $t'$ be the smallest number such that $(3/\be)\big(6a_3 \ln(e\,n_3/a_3) +\ln(\alpha^2/\delta)\,\big) = t'\ln t'$. 

For the regime of parameters such that $t'\leq 4$, then $e(A_1,A_2,A_3)\leq 4 \be(A_1,A_2,A_3)$ with probability at least $1-\delta$ the bounded discrepancy condition, in particular the first one, holds. 

For the regime of parameters such that $t'>4$, we can apply \eqref{eq:concentrationdiscrepancy} to get that with probability at least $1-\delta$, the following holds uniformly for all choices of $A_1$, $A_2$, and $A_3$: 
\begin{eqnarray*}
	e(A_1,A_2,A_3) &\leq& t' \be(A_1,A_2,A_3)\;.
\end{eqnarray*}
Since we defined $t'$ to satisfy $\be t'\ln t' = 18 a_3 \ln(e\,n_3/a_3) + 3 \ln(\alpha^2/\delta) $, 
we have 
\begin{eqnarray*}
	e(A_1,A_2,A_3) \ln t' &\leq& 18 a_3 \ln(e\,n_3/a_3) + 3 \ln(\alpha^2/\delta)  \;.
\end{eqnarray*}
As $t'$ upper bounds $e(A_1,A_2,A_3)/\be$, we have 
\begin{eqnarray*}
	e(A_1,A_2,A_3) \ln \Big(\frac{e(A_1,A_2,A_3)}{\be(A_1,A_2,A_3)}\Big) &\leq& 18 a_3 \ln(e\,n_3/a_3) + 3 \ln(\alpha^2/\delta)  \;.
\end{eqnarray*}
\subsection{Proof of Thresholding}\label{app:thresh}
\begin{lemma}\label{lem:threshold}
  Let $\ul, 1\leq \ell\leq r$ be such that $\|\ul-\uol\|_2\leq \alpha$ where $\alpha<1/4$. Also, let $\uol, 1\leq \ell\leq r$ be $\mu$-incoherent unit vectors. Now define $\widetilde{\ul}$ as: 
$$\widetilde{\ul}(i)=\begin{cases}\ul(i)&\text{ if }|\ul(i)|\leq \frac{\mu}{\sqrt{n}},\\
sign(\ul(i))\frac{\mu}{\sqrt{n}}&\text{ if }|\ul(i)|>\frac{\mu}{\sqrt{n}}.\end{cases}$$
Also, let $\widehat{\ul}=\widetilde{\ul}/\|\widetilde{\ul}\|_2$. Then, $\|\widehat{\ul}-\uol\|_2\leq 3\alpha$ $\forall 1\leq \ell\leq r$ and each $\widehat{\ul}$ is $2\mu$-incoherent. 
\end{lemma}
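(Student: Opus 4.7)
The plan is to break the argument into three independent pieces: first show that clipping $u_\ell$ coordinate-by-coordinate cannot move it further from $u^*_\ell$; then show that the norm of the clipped vector is close to $1$, so the subsequent renormalization introduces only a small additional error; finally observe that the incoherence bound is immediate from the definition of the threshold.

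For the clipping step, I would argue pointwise. Fix a coordinate $i$ and a component $\ell$. If $|u_\ell(i)|\leq \mu/\sqrt{n}$ then $\widetilde{u}_\ell(i)=u_\ell(i)$ and the difference $|\widetilde{u}_\ell(i)-u^*_\ell(i)|=|u_\ell(i)-u^*_\ell(i)|$. Otherwise, say $u_\ell(i)>\mu/\sqrt{n}$ (the other sign is symmetric). Because $u^*_\ell$ is $\mu$-incoherent we have $u^*_\ell(i)\in[-\mu/\sqrt{n},\mu/\sqrt{n}]$, so the value $\widetilde{u}_\ell(i)=\mu/\sqrt{n}$ lies between $u^*_\ell(i)$ and $u_\ell(i)$, hence $|\widetilde{u}_\ell(i)-u^*_\ell(i)|\leq |u_\ell(i)-u^*_\ell(i)|$. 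Squaring and summing over $i$ yields $\|\widetilde{u}_\ell-u^*_\ell\|_2\leq \|u_\ell-u^*_\ell\|_2\leq \alpha$.

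For the normalization step, the triangle inequality applied to $\|u^*_\ell\|_2=1$ gives $1-\alpha\leq \|\widetilde{u}_\ell\|_2\leq 1+\alpha$. Writing
\[
\widehat{u}_\ell-u^*_\ell \;=\; \frac{1}{\|\widetilde{u}_\ell\|_2}\Bigl(\widetilde{u}_\ell-u^*_\ell\Bigr) \;+\; \Bigl(\frac{1}{\|\widetilde{u}_\ell\|_2}-1\Bigr) u^*_\ell,
\]
the triangle inequality and $\|u^*_\ell\|_2=1$ yield
\[
\|\widehat{u}_\ell-u^*_\ell\|_2 \;\leq\; \frac{\|\widetilde{u}_\ell-u^*_\ell\|_2+\bigl|\,1-\|\widetilde{u}_\ell\|_2\,\bigr|}{\|\widetilde{u}_\ell\|_2} \;\leq\; \frac{2\alpha}{1-\alpha}.
\]
Using $\alpha<1/4$ one has $2\alpha/(1-\alpha)\leq 8\alpha/3<3\alpha$, which is the claimed bound.

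For the incoherence statement, the clipping ensures $|\widetilde{u}_\ell(i)|\leq \mu/\sqrt{n}$ for every $i$, and the lower bound $\|\widetilde{u}_\ell\|_2\geq 1-\alpha>3/4$ gives $|\widehat{u}_\ell(i)|\leq (\mu/\sqrt{n})/(1-\alpha)\leq (4/3)\mu/\sqrt{n}\leq 2\mu/\sqrt{n}$, which is the $2\mu$-incoherence of $\widehat{u}_\ell$. There is no real obstacle here; the only point requiring a moment's thought is verifying that $\mu/\sqrt{n}$ truly lies between $u^*_\ell(i)$ and $u_\ell(i)$ in the clipped case, which uses incoherence of $u^*_\ell$ essentially.
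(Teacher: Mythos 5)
Your proof is correct and follows essentially the same route as the paper's: clipping does not increase the coordinate-wise distance to $\uol$ (using $\mu$-incoherence of $\uol$), the norm of the clipped vector stays within $\alpha$ of $1$, and the renormalization contributes at most another $\alpha/(1-\alpha)$. The only minor difference is that you bound $\|\widetilde{\ul}\|_2\le 1+\alpha$ via the triangle inequality, whereas the paper uses $\|\widetilde{\ul}\|_2\le 1$ (implicitly relying on $\ul$ being a unit vector, which holds in the algorithm since it comes out of RTPM); your version is marginally more general but the conclusion and constants work out the same.
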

\begin{proof}
As $\|\uol\|_\infty\leq \frac{\mu}{\sqrt{n}}$, hence $\|\widetilde{\ul}-\uol\|_2\leq \|\ul-\uol\|_2\leq \alpha,$ $\forall \ell$. This also implies that $1-\alpha\leq \|\widetilde{\ul}\|_2\leq 1$. Hence, 
$$\|\widehat{\ul}-\uol\|_2\leq \|\widetilde{\ul}-\uol\|_2+\left(\frac{1}{\|\widetilde{\ul}\|_2}-1\right)\leq 3\alpha.$$
Moreover, $\|\widehat{\ul}\|_\infty\leq \frac{\mu}{\sqrt{n}\cdot (1-\alpha)}\leq  \frac{2\mu}{\sqrt{n}}$. Hence proved. 
%Let $\mathcal{S}=\{i,\ s.t.,\ |\ul(i)|\geq \frac{2\mu}{\sqrt{n}}\}$. Then, $\|\widetilde{\ul}\|_2^2\geq 1-\sum_{i\in \mathcal{S}}\widetilde{\ul}(i)^2.$ Now, 
\end{proof}

%%% Local Variables: 
%%% mode: latex
%%% TeX-master: "tensorcomp"
%%% End: 

\renewcommand{\ui}{u(i)}
\renewcommand{\uj}{u(j)}
\renewcommand{\uk}{u(k)}

\section{Alternating Minimization Analysis}
%\include{ronecomplete}
%\subsubsection{Rank-two Case}

\label{sec:r2}

\subsection{Main theorem for rank-two analysis}

%We first prove the following technical lemma. 
%\begin{lemma}
%  \label{lem:rk_1}
%Let $u, a, b, c$ be all $\mu_1$ incoherent vectors. Also, let $p\geq C \log^2 n/(n^{1.5}\delta)$. Then, w.p. $\geq 1-1/n^{10}$, we have: 
%$\|\sum_{ijk}u_j u_k a_i b_j c_k \e_i\|\leq \delta \|a\|
%\end{lemma}

In this section, we provide convergence analysis for Algorithm~\ref{algo:rkk} for 
the special case of a rank-$2$ orthonormal tensor $T$ with equal singular values, 
i.e. $T=\uo_1\otimes \uo_1\otimes \uo_1+\uo_2\otimes \uo_2\otimes \uo_2$, 
where $\uo_1, \uo_2 \in \R^n$ are orthonormal vectors satisfying $\mu$-incoherence, 
i.e., $\|\uoi\|_\infty\leq {\mu}/{\sqrt{n}}$. 
The purpose of this example is to highlight the proof ideas and we fix $\sigma_1, \sigma_2$ 
to be both one at each step of Algorithm~\ref{algo:rkk} for simplicity. 
The following theorem proves the desired linear convergence. 
  Let $[\u^t_1, \u^t_2]$ denote the current estimate at the $t$-th iteration of 
  Algorithm~\ref{algo:rkk}. For brevity, we drop the superscript indexing time 
  and let $[\u_1, \u_2]$ denote $[\u^t_1, \u^t_2]$ whenever it is clear from the context.

\begin{theorem}\label{thm:r2}
  If $\u_1$ and $\u_2$ are $2\mu$-incoherent, then 
  there exists a positive constant $C$ such that for $p\geq C\frac{\mu^3\log^2 n}{n^{1.5}}$ 
  the following holds (w.p. $\geq 1-\log(1/\epsilon)/n^8$): 
$$d_\infty\left([\u_1^{t+1}, \u_2^{t+1}], [\uo_1, \uo_2]\right)\leq \frac{1}{4}d_\infty\left([\u_{1}, \u_{2}], [\uo_1, \uo_2]\right),$$
where $d_\infty([\u_{1}, \u_{2}], [\uo_1, \uo_2])=\max_{i, 1\leq i\leq 2}\|\u_i-\uoi\|_2.$ 
Moreover, $\u_1^{t+1}$, $\u_2^{t+1}$ are both $2\mu$-incoherent. 
\end{theorem}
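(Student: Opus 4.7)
The plan is to analyze one coordinate update in Algorithm~\ref{algo:rkk}, say the update of $\u_1^{t+1}$ (the $\u_2^{t+1}$ update is symmetric), and show a factor-four contraction in $d_\infty$. I would start by writing the normal equations from step 8. Because only the leading factor of $u\otimes\u_1\otimes\u_1$ depends on the index $i$, the least-squares problem decouples across coordinates, yielding for each $i\in[n]$
\begin{equation*}
\hat{\u}_1^{t+1}(i)\cdot B(i) \;=\; \sum_{(j,k):\,(i,j,k)\in\Omega} \bigl(T_{ijk}-\u_2(i)\u_2(j)\u_2(k)\bigr)\,\u_1(j)\,\u_1(k)\;,
\end{equation*}
where $B(i)=\sum_{(j,k):(i,j,k)\in\Omega}\u_1(j)^2\u_1(k)^2$. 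Substituting $T=\uo_1\otimes\uo_1\otimes\uo_1+\uo_2\otimes\uo_2\otimes\uo_2$ decomposes the right-hand side as $\uo_1(i)A_1(i)+\uo_2(i)A_2(i)-\u_2(i)A_3(i)$ for three explicit random sums $A_1,A_2,A_3$ over $\Omega_i\equiv\{(j,k):(i,j,k)\in\Omega\}$.

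The next step is to compare each of $A_1(i),A_2(i),A_3(i),B(i)$ to its expectation. Thanks to sample-splitting, $\u_1,\u_2$ are independent of $\Omega$ at this step, giving $\E[B(i)]=p\|\u_1\|_2^4$, $\E[A_1(i)]=p\langle\u_1,\uo_1\rangle^2$, $\E[A_2(i)]=p\langle\u_1,\uo_2\rangle^2$, and $\E[A_3(i)]=p\langle\u_1,\u_2\rangle^2$. Writing $\u_\ell=\uo_\ell+{\bf d}_\ell$ and exploiting $\uo_1\perp\uo_2$ together with $\|\u_\ell\|_2^2=1+O(\|{\bf d}_\ell\|_2^2)$, a short algebraic expansion shows that in the idealized (expectation) limit, after normalization, $\hat{\u}_1^{t+1}$ is within $O(d^2)$ of $\uo_1$, where $d\equiv d_\infty([\u_1,\u_2],[\uo_1,\uo_2])$. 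In particular, the component of $\hat{\u}_1^{t+1}$ along $\uo_1$ scales like $\langle\u_1,\uo_1\rangle^2=1-O(d^2)$, while the orthogonal contributions from $\uo_2$ and $\u_2$ cancel to leading order because both coefficients are determined by the same inner products $\langle\u_1,\uo_2\rangle$ and $\langle\u_1,\u_2\rangle$. Hence the deterministic part of the update already contracts by more than the required factor of four once $d$ is small enough.

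The main technical obstacle is the stochastic deviation of each of $A_\ell(i), B(i)$ from its expectation, uniformly in $i\in[n]$. With $p\sim\mu^3\log^2 n/n^{3/2}$ each slice $\Omega_i$ contains only about $\sqrt{n}$ samples, so the concentration is delicate and depends crucially on the $2\mu$-incoherence hypothesis. I would apply Bernstein's inequality coordinate-wise: incoherence bounds every summand by $O(\mu^4/n^2)$ and each variance by $O(p\,\mu^4/n^3)$, so that $|A_\ell(i)-\E A_\ell(i)|$ and $|B(i)-\E B(i)|$ are at most a small constant multiple of $p\,d$ with probability $1-n^{-9}$; a union bound over $i\in[n]$ then transfers the estimate to every coordinate. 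Dividing by $B(i)=\Theta(p)$ converts these deviations into a coordinate-wise error of order $d/8$, which combined with the $O(d^2)$ deterministic error yields the $(1/4)d$ bound.

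Finally, I would verify that $\hat{\u}_1^{t+1}/\|\hat{\u}_1^{t+1}\|_2$ remains $2\mu$-incoherent. Each contribution to $\hat{\u}_1^{t+1}(i)$ carries a linear factor of $\uo_1(i)$, $\uo_2(i)$, or $\u_2(i)$, each bounded by $2\mu/\sqrt{n}$ by incoherence; the concentration of $B(i)$ around $p\|\u_1\|_2^4$ keeps the denominator of order $p$, and since $d\ll 1$ the normalization factor is $1+o(1)$, giving the desired pointwise bound. The hardest point throughout is the uniform Bernstein estimate at the $p=\Theta(n^{-3/2})$ scale; once that is in hand, the rest of the argument is a careful but mechanical expansion of the normal equations.
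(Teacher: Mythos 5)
Your overall decomposition of the normal equations into $\uo_1(i)A_1(i)+\uo_2(i)A_2(i)-\u_2(i)A_3(i)$ with denominator $B(i)$ is correct and matches the paper's setup (its diagonal matrices $C,F,G,B$). The deterministic part of your argument — that in expectation the $\uo_2$ and $\u_2$ contributions cancel to $O(d^2)$ because $\langle\u_1,\uo_2\rangle^2\uo_2-\langle\u_1,\u_2\rangle^2\u_2$ is quadratic in the discrepancies — is also correct and is exactly the paper's Lemma~\ref{lem:r2_err1}.

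The gap is in the concentration step. You claim that with $p\gtrsim\mu^3\log^2 n/n^{3/2}$, Bernstein's inequality yields $|A_\ell(i)-\E A_\ell(i)|\leq c\,p\,d$ uniformly. This cannot be true: $A_2(i)=\sum_{jk}\delta_{ijk}\u_1(j)\u_1(k)\uo_2(j)\uo_2(k)$ is a nontrivial random variable whose fluctuation around its mean is of order $\gamma p$ for some $\gamma$ determined only by $p,\mu,n$ (e.g.\ $\gamma\sim 1/\log n$ in the paper's regime), and that fluctuation does \emph{not} shrink as $d=d_\infty([\u_1,\u_2],[\uo_1,\uo_2])\to0$. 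In particular, when $\u_1=\uo_1$ and $\u_2=\uo_2$ (so $d=0$) the deviation $A_2(i)-\E A_2(i)$ is still $\Theta(\gamma p)$, not zero. Since the theorem must deliver contraction by the same factor $1/4$ at every iteration, including once $d\ll\gamma$, a bound of the form ``stochastic error $\leq c\gamma$'' is useless: it does not decay with $d$ and the iteration would stall at accuracy $\Theta(\gamma)$.

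The paper's fix, which your plan needs, is to never concentrate $A_1,A_2,A_3,B$ individually. Instead one applies Bernstein to \emph{mean-zero combinations whose summands themselves carry factors of the discrepancy}. For example, $\langle\u_1,\uo_1\rangle^2 B(i)-C(i)$ (paper's $\err^1$; Lemma~\ref{lem:r2_err2}) has zero mean and variance bounded by $C\,p\,\mu^4 n^{-2}\bigl(1-\langle\u_1,\uo_1\rangle^4\bigr)\lesssim p\mu^4 n^{-2}\|\u_1-\uo_1\|_2^2$; when $\u_1=\uo_1$ the two sums agree term by term and the combination vanishes identically. Similarly for $\err^2$ one must bound the difference $\bigl(\langle\u_1,\uo_2\rangle^2 B-F\bigr)\uo_2-\bigl(\langle\u_1,\u_2\rangle^2 B-G\bigr)\u_2$ as a single object (Lemma~\ref{lem:r2_err4}), expanding $\u_2=\uo_2+\mathbf{d}^u_2$ so that every surviving random term carries at least one factor of $\mathbf{d}^u_2$ and hence concentrates to $O(\gamma\,p\,\|\u_2-\uo_2\|_2)$. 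The paper even calls this out explicitly before Lemma~\ref{lem:r2_err4}: the two individual terms ``can be as large as a constant, even when $\u_1=\uo_1$ and $\u_2=\uo_2$,'' and only the difference is linearly small in $\|\u_2-\uo_2\|_2$. Without this structural cancellation built into the concentration argument, your plan does not give a contraction that is proportional to $d$, and the proof fails.
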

\begin{proof}
We claim that with probability at least $1-1/n^{8}$,  
\begin{eqnarray*}
  \|\u^{t+1}_i-\uo_i\|_2 \;\leq\; \frac{1}{4}\cdot d_{\infty}\left([\u_1, \u_2], [\uo_1, \uo_2]\right)\;, 
\end{eqnarray*}
for both $i\in\{1,2\}$. This proves the desired bound. 
Incoherence of $[\u^{t+1}_1, \u^{t+1}_2]$ follows from Lemma~\ref{lem:r2_incoh}. 
%$$\|\u^{t+1}_1-\uo_1\|_2\leq \frac{1}{4} \|\u_1-\uo_1\|_2+\frac{1}{4} \|\u_2-\uo_2\|_2\leq \frac{1}{2} d_\infty\left([\u_{1}, \u_{2}], [\uo_1, \uo_2]\right).$$
%Similarly, 
%$$\|\u^{t+1}_2-\uo_2\|_2\leq \frac{1}{2} d_\infty\left([\u_{1}, \u_{2}], [\uo_1, \uo_2]\right).$$
%First part of the lemma now follows by combining the above two equations. 
%\end{proof}
%\begin{lemma}\label{lem:r2_dec}
%Under the hypotheses of Theorem~\ref{thm:r2}, 
%the following holds (w.p. $\geq 1-1/n^{10}$): 
%  $$\|\u^{t+1}_i-\uo_i\|_2 \;\leq\; \frac{1}{4}\cdot d_{\infty}\left([\u_1, \u_2], [\uo_1, \uo_2]\right).$$
%\end{lemma}
%\begin{proof}
Without loss of generality, we only prove the claim for $i=1$. 
Recall that $\widehat{\u}^{t+1}_1$ is 
the solution of the least squares problem 
in Step 11 of Algorithm~\ref{algo:rkk}, and can be written as 
\begin{equation}\label{eq:r2_up0}
\widehat{\u}_1^{t+1}(i)=\frac{\sum_{jk}\delta_{ijk}\u_1(j)\u_1(k)\uo_1(j)\uo_1(k)}{\sum_{jk}\delta_{ijk}(\u_1(j))^2(\u_1(k))^2}\uo_1(i)+\frac{\sum_{jk}\delta_{ijk}\u_1(j)\u_1(k)(\uo_2(i)\uo_2(j)\uo_2(k)-\u_2(i)\u_2(j)\u_2(k))}{\sum_{jk}\delta_{ijk}(\u_1(j))^2(\u_1(k))^2}.\end{equation}
Note that the update that can be written in a vector form: 
\begin{align}
  \widehat{\u}^{t+1} = \ip{\u_1}{\uo_1}^2 \uo_1 + \ip{\u_1}{\uo_2}^2\uo_2-\ip{\u_1}{\u_2}^2\u_2 + B^{-1}(\ip{\u_1}{\uo_1}^2 B - C)\uo_1 
  \nonumber\\
  + B^{-1}(\ip{\u_1}{\uo_2}^2B-F)\uo_2-B^{-1}(\ip{\u_1}{\u_2}^2B-G)\u_2,\label{eq:r2_update}
\end{align}
where $B$, $C$, $F, G$ are all diagonal matrices, s.t., 
\begin{align}B_{ii}&=\sum_{jk}\delta_{ijk}(\u_1(j))^2(\u_1(k))^2,\ \ \ C_{ii}=\sum_{jk}\delta_{ijk}\u_1(j)\u_1(k)\uo_1(j)\uo_1(k),\nonumber\\
F_{ii}&=\sum_{jk}\delta_{ijk}\u_1(j)\u_1(k)\uo_2(j)\uo_2(k),\ \ \ G_{ii}=\sum_{jk}\delta_{ijk}\u_1(j)\u_1(k)\u_2(j)\u_2(k)\;.
\label{eq:bcfg}
\end{align}
Let $\widehat{\u}_1^{t+1} - \langle \u_1,\uo_1\rangle^2\uo_1 = \err^0 + \err^1 + \err^2$, such that 
\begin{align}\err^0&=\ip{\u_1}{\uo_2}^2\uo_2-\ip{\u_1}{\u_2}^2\u_2,\nonumber\\\err^1&= B^{-1}(\ip{\u_1}{\uo_1}^2 B - C)\uo,\nonumber\\
\err^2&=B^{-1}(\ip{\u_1}{\uo_2}^2B-F)\uo_2-B^{-1}(\ip{\u_1}{\u_2}^2B-G)\u_2 \;. \end{align}
We separate the analysis for each of the error terms. 
Using Lemma~\ref{lem:r2_err1}, we have: % and the fact that $\|\u_1-\uo_1\|_2\leq \frac{1}{200}$ and $\|\u_2-\uo_2\|_2\leq \frac{1}{200}$, we have: 
\begin{equation}
  \label{eq:r2_err0}
  \|\err^0\|_2\leq 4d_\infty\left([\u_{1}, \u_{2}], [\uo_1, \uo_2]\right)\|\u_2-\uo_2\|_2. 
\end{equation}
Setting $p\geq C \frac{\mu^3\log^2 n}{\gamma^2n^{3/2}}$ for a $\gamma$ 
to be chosen appropriately later 
and using Lemma~\ref{lem:r2_err2} and Lemma~\ref{lem:ip_conc}, we have (w.p. $\geq 1-2/n^9$): 
\begin{equation}
  \label{eq:r2_err1}
  \|\err^1\|_2\leq \frac{\gamma}{1-\gamma} \|\u_1-\uo_1\|_2. 
\end{equation}
Similarly, using Lemma~\ref{lem:r2_err4} and $p\geq C \frac{\mu^3\log^2 n}{\gamma^2n^{3/2}}$, we have (w.p. $\geq 1-1/n^{9}$): 
\begin{equation}
  \label{eq:r2_err2}
  \|\err^2\|\leq 8\frac{\gamma}{1-\gamma} \cdot \|\u_2-\uo_2\|_2. 
\end{equation}
We want to upper bound the error: 
\begin{eqnarray*}
\|\widehat{\u}_1^{t+1}-\uo_1\|_2 &\leq&  \|\widehat{\u}_1^{t+1}-\langle \u_1,\uo_1\rangle^2\uo_1\|_2 + \|(\langle \u_1,\uo_1\rangle^2 -1)\uo_1\|_2 \;.
%(1-\|\widehat{\u}^{t+1}\|_2)+(1-\ip{\u_1}{\uo_1}^2)+\|\err\|_2\;.
\end{eqnarray*}
Since $1-\langle \u_1,\uo_1\rangle^2 = (1/2) \|\u_1 -\uo_1\|_2^2$, 
we have from \eqref{eq:r2_err0}, \eqref{eq:r2_err1}, and \eqref{eq:r2_err2}
that (w.p. $\geq 1-10/n^{9}$): 
\begin{eqnarray*}
	\|\widehat{\u}_1^{t+1}-\uo_1\|_2 &\leq& 
	\Big(\frac{\gamma}{1-\gamma} +\frac{\|\u_1-\uo_1\|_2 }{2} \Big) \|\u_1-\uo_1\|_2 + 
	\left(8\frac{\gamma}{1-\gamma} +4d_\infty\left([\u_{1}, \u_{2}], [\uo_1, \uo_2]\right)\right) \|\u_2-\uo_2\|_2\;.
\end{eqnarray*}
%\begin{equation}
%  \label{eq:r2_norm1}
%  \|\widehat{\u}^{t+1}\|_2\geq 1-(1-\ip{\u_1}{\uo_1}^2)-\|\err\|_2\geq 1-\frac{1}{8}d_\infty\left([\u_{1}, \u_{2}], [\uo_1, \uo_2]\right). 
%\end{equation}
%Using, \eqref{eq:r2_err0}, \eqref{eq:r2_err1}, \eqref{eq:r2_err2} along with the update \eqref{eq:r2_update}, we have (w.p. $\geq 1-10/n^{9}$): 
%\begin{equation}
%  \label{eq:r2_up1}
%  \widehat{\u}^{t+1} = \ip{\u_1}{\uo_1}^2 \uo_1 + \err,\ \text{ s.t.}, \|\err\|_2\leq \frac{\gamma}{1-\gamma} \|\u_1-\uo_1\|_2 + \left(8\frac{\gamma}{1-\gamma} +4d_\infty\left([\u_{1}, \u_{2}], [\uo_1, \uo_2]\right)\right)\cdot \|\u_2-\uo_2\|_2.
%\end{equation}
%Hence, using \eqref{eq:r2_err2} and the update for $\u^{t+1}$, we have: 
%\begin{align}
%  \sqrt{1-\ip{\u^{t+1}}{\uo}^2}\leq \frac{}{}.
%\end{align}
Setting $\gamma\leq 1/200$ and for 
$d_\infty\left([\u_{1}, \u_{2}], [\uo_1, \uo_2]\right)\leq {1}/{200}$
as per our assumption, 
this proves the desired bound. 
%Now, we first note that $\|\ip{\u}{\vo}^2\vo-\ip{\u}{\v}^2\v\|\leq ((4/100)^2+1/100)\|d\| $. 
%Then, from the analysis of the previous section, we have: $\|B^{-1}(\ip{\u}{\uo}^2 B - C)\vo\|\leq 1/10 \|\u-\uo\|_2$. 
%Finally, using a similar analysis to that of $\|B^{-1}(\ip{\u}{\uo}^2 B - C)\vo$ from the previous section, we have: 
%$$\|B^{-1}(\ip{\u}{\vo}^2B-F)\vo-B^{-1}(\ip{\u}{\v}^2B-G)\v\|\leq 1/10 \|d\|.$$
%Proof now follows by combining the above mentioned inequalities with the update rule for $\u^{t+1}$. 
\end{proof}

%%% Local Variables: 
%%% mode: latex
%%% TeX-master: "tensorcomp"
%%% End: 

\subsection{Technical lemmas for rank-two analysis}

The next lemma shows that all our estimates are $2\mu$-incoherent, 
which in turn allows us to bound the error in the above proof effectively. 
Note that the incoherence of the updates do not increase beyond a global constant ($2\mu$). 
Let $\widehat{\u}_1^{t+1}$ be obtained by update \eqref{eq:r2_up0} and let $\u_1^{t+1}=  \widehat{\u}_1^{t+1}/\|\widehat{\u}_1^{t+1}\|_2$. 
\begin{lemma}\label{lem:r2_incoh}
 Under the hypotheses of Theorem~\ref{thm:r2}, $\u_1^{t+1}$ is $2\mu$-incoherent with probability at least $1-1/n^9$. 
\end{lemma}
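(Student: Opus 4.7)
The plan is to bound $|\u_1^{t+1}(i)|$ uniformly in $i$ by $2\mu/\sqrt n$. Since $\u_1^{t+1}=\widehat\u_1^{t+1}/\|\widehat\u_1^{t+1}\|_2$, it suffices to show that (a) $\|\widehat\u_1^{t+1}\|_\infty \leq (1+o(1))\,\mu/\sqrt n$, and (b) $\|\widehat\u_1^{t+1}\|_2 \geq 1/2$. Item (b) is immediate from the decomposition $\widehat\u_1^{t+1}=\langle\u_1,\uo_1\rangle^2\uo_1+\err^0+\err^1+\err^2$ already established in \eqref{eq:r2_update}, combined with the $\ell_2$ bounds \eqref{eq:r2_err0}--\eqref{eq:r2_err2} from the proof of Theorem~\ref{thm:r2}: under the assumption $d_\infty([\u_1,\u_2],[\uo_1,\uo_2])\leq 1/200$, the leading term has norm $\langle\u_1,\uo_1\rangle^2\geq 1-\tfrac12\|\u_1-\uo_1\|_2^2 \geq 1-1/10^5$, while the three error terms together contribute at most a small constant.

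The real work is item (a). Using the same three-part decomposition pointwise, I will bound each coordinate separately. The dominant term $\langle\u_1,\uo_1\rangle^2\uo_1(i)$ is trivially bounded by $\mu/\sqrt n$ by $\mu$-incoherence of $\uo_1$. For $\err^0(i)=\langle\u_1,\uo_2\rangle^2\uo_2(i)-\langle\u_1,\u_2\rangle^2\u_2(i)$, I use the $2\mu$-incoherence of $\u_2$ and the $\mu$-incoherence of $\uo_2$ to bound $|\uo_2(i)|, |\u_2(i)| \leq 2\mu/\sqrt n$; combined with the fact that $|\langle\u_1,\uo_2\rangle|$ and $|\langle\u_1,\u_2\rangle|$ are $O(d_\infty)$ (because $\uo_1\perp\uo_2$ and all iterates are $O(d_\infty)$-close to their targets), this gives $|\err^0(i)| = O(d_\infty^2\cdot \mu/\sqrt n)$, which is negligible.

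The hardest step is the pointwise control of $\err^1(i)$ and $\err^2(i)$, since the proof of Theorem~\ref{thm:r2} only provided $\ell_2$ bounds via Lemmas \ref{lem:r2_err1}, \ref{lem:r2_err2}, \ref{lem:r2_err4}, whereas I need an $\ell_\infty$ bound. The strategy is to apply Bernstein's inequality coordinate-by-coordinate to the diagonal random variables $B_{ii}, C_{ii}, F_{ii}, G_{ii}$ in \eqref{eq:bcfg}. Each is a sum of independent indicators $\delta_{ijk}$ weighted by products of entries of $\u_1, \u_2, \uo_1, \uo_2$, each of magnitude at most $2\mu/\sqrt n$, so the per-term bound is $O(\mu^4/n^2)$ and the variance is $O(p\,\mu^4/n^2)$. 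For $p\geq C\mu^3(\log n)^2/n^{3/2}$, Bernstein yields $|B_{ii}-p\|\u_1\|_2^4|\leq p/10$ and analogous concentration for $C_{ii}, F_{ii}, G_{ii}$ around their (small) means, each failing with probability at most $n^{-11}$; a union bound over $i\in[n]$ keeps the overall failure probability below $n^{-9}$. Dividing numerator by denominator then shows $|\err^1(i)|+|\err^2(i)| = O(d_\infty\cdot\mu/\sqrt n) + o(\mu/\sqrt n)$. Combining all three pieces gives $\|\widehat\u_1^{t+1}\|_\infty\leq (1+\tfrac{1}{100})\mu/\sqrt n$, which together with $\|\widehat\u_1^{t+1}\|_2\geq 1/2$ yields $\|\u_1^{t+1}\|_\infty\leq 2\mu/\sqrt n$ as required. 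The main technical obstacle is simply the bookkeeping in the Bernstein application: one must verify that the per-coordinate variance and max-summand bounds, once normalized by the denominator $B_{ii}\approx p$, leave a residual small enough to match the tight constant $2\mu$ in the definition of incoherence.
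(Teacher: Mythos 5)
Your overall strategy matches the paper's: bound $\|\widehat\u_1^{t+1}\|_\infty$ coordinate-by-coordinate from the closed-form update \eqref{eq:r2_up0}, control the diagonal quantities $B_{ii}, C_{ii}, F_{ii}, G_{ii}$ by Bernstein-type concentration and a union bound over $i\in[n]$, then pass to $\u_1^{t+1}$ via the normalization. The paper's proof is terser (it just writes $|\widehat\u_1^{t+1}(i)|\le\frac{|C_{ii}|}{|B_{ii}|}\frac{\mu}{\sqrt n}+\frac{|F_{ii}|}{|B_{ii}|}\frac{\mu}{\sqrt n}+\frac{|G_{ii}|}{|B_{ii}|}\frac{2\mu}{\sqrt n}$ and invokes Lemmas~\ref{lem:ip_conc}/\ref{lem:ip_conc1}), whereas you re-derive the concentration from scratch and also explicitly handle the normalization step, which the paper elides here (the analogous rank-$r$ Lemma~\ref{lem:rk_incoh} does handle it via \eqref{eq:rk_up5}). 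Making the normalization explicit is in fact the cleaner presentation.

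One quantitative slip worth fixing: the stated lower bound $\|\widehat\u_1^{t+1}\|_2\ge 1/2$ is too weak to close the argument. If $\|\widehat\u_1^{t+1}\|_\infty\le(1+\tfrac{1}{100})\mu/\sqrt n$ and you only know $\|\widehat\u_1^{t+1}\|_2\ge 1/2$, then the normalized vector obeys $\|\u_1^{t+1}\|_\infty\le 2.02\,\mu/\sqrt n$, which overshoots the claimed $2\mu/\sqrt n$. But the $\ell_2$ estimates already established in the proof of Theorem~\ref{thm:r2} give much more: $1-\langle\u_1,\uo_1\rangle^2=\tfrac12\|\u_1-\uo_1\|_2^2\le\tfrac12(1/200)^2$ and $\|\err^0\|_2+\|\err^1\|_2+\|\err^2\|_2\le O(d_\infty^2+\gamma\,d_\infty)$, so $\|\widehat\u_1^{t+1}\|_2\ge 1-1/100$ comfortably. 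With this sharper lower bound the ratio is at most $(1+\tfrac{1}{100})/(1-\tfrac{1}{100})\mu/\sqrt n\le 2\mu/\sqrt n$, and your argument goes through.
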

\begin{proof}
  Using \eqref{eq:r2_up0} and the definitions of $B$, $C$, $F$, $G$ given in \eqref{eq:bcfg}, we have: 
  \begin{align}
    |\widehat{\u}_1^{t+1}(i)|&\leq \frac{|C_{ii}|}{|B_{ii}|}\frac{\mu}{\sqrt{n}} + \frac{|F_{ii}|}{|B_{ii}|} \frac{\mu}{\sqrt{n}} +  \frac{|G_{ii}|}{|B_{ii}|} \frac{2\mu}{\sqrt{n}}
	\;\leq\;\; \frac{2\mu}{\sqrt{n}}\;,
  \end{align}
where the second inequality follows by bounds on $B_{ii}, C_{ii}, F_{ii}, G_{ii}$ obtained using Lemma~\ref{lem:ip_conc} and the distance bound $d_\infty\left([\u_{1}, \u_{2}], [\uo_1, \uo_2]\right)$. 
\end{proof}

Next, we bound the first error term in \eqref{eq:r2_update}. %Note that while individually, $\|\ip{\u_1}{\uo_2}^2\uo_2\|_2$ and $\|\ip{\u_1}{\u_2}^2\u_2\|_2$ are both small, as $\u_1\approx \uo_1$, $\uo_1\perp \uo_2$, and $\uo_2\approx \u_2$. However, bounding the difference is critical, as bounding individual terms provide us with bound of $O(\|\u_1-\uo_1\|_2+\|\u_2-\uo_2\|_2)$, but we require a bound of 
\begin{lemma}\label{lem:r2_err1}
Let $\u=\uo+\du$ and $\v=\vo+\dv$, where $\u, \uo, \v, \vo$ are all unit vectors and $\uo \perp \vo$. Also, let $\|\du\|_2\leq 1$ and $\|\dv\|_2\leq 1$. Then, the following holds:
  $$\|\ip{\u}{\vo}^2\vo-\ip{\u}{\v}^2\v\|\leq 4(\|\du\|_2+\|\dv\|_2)\|\dv\|_2.$$
\end{lemma}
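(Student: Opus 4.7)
The identity I would exploit is the telescoping decomposition
\[
\ip{\u}{\vo}^2\vo-\ip{\u}{\v}^2\v \;=\; \ip{\u}{\vo}^2(\vo-\v) \;+\; \bigl(\ip{\u}{\vo}-\ip{\u}{\v}\bigr)\bigl(\ip{\u}{\vo}+\ip{\u}{\v}\bigr)\v,
\]
which rewrites the left-hand side as two terms whose norms I can each bound by a product of small quantities. The first term equals $-\ip{\u}{\vo}^2\dv$ directly, and the difference $\ip{\u}{\vo}-\ip{\u}{\v}$ in the second term simplifies to $-\ip{\u}{\dv}$ since $\v = \vo+\dv$.

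The orthogonality $\uo\perp\vo$ is the key structural input. It gives $\ip{\u}{\vo}=\ip{\uo+\du}{\vo}=\ip{\du}{\vo}$, so $|\ip{\u}{\vo}|\leq \|\du\|_2$ by Cauchy--Schwarz. Combined with $|\ip{\u}{\dv}|\leq \|\dv\|_2$ (using $\|\u\|=1$), this yields
\[
\bigl\|\ip{\u}{\vo}^2\dv\bigr\|_2 \;\leq\; \|\du\|_2^2\,\|\dv\|_2 \;\leq\; \|\du\|_2\,\|\dv\|_2,
\]
the last step using the hypothesis $\|\du\|_2\leq 1$. For the second term I would expand $\ip{\u}{\v}=\ip{\uo}{\dv}+\ip{\du}{\vo}+\ip{\du}{\dv}$, so that
\[
|\ip{\u}{\vo}+\ip{\u}{\v}| \;\leq\; 2|\ip{\du}{\vo}|+|\ip{\uo}{\dv}|+|\ip{\du}{\dv}| \;\leq\; 2\|\du\|_2+\|\dv\|_2+\|\du\|_2\|\dv\|_2,
\]
and then use $\|\du\|_2,\|\dv\|_2\leq 1$ to fold the cross term into a clean bound of the form $2(\|\du\|_2+\|\dv\|_2)$. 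Multiplying by $|\ip{\u}{\dv}|\cdot\|\v\|_2\leq \|\dv\|_2$ gives a contribution $2(\|\du\|_2+\|\dv\|_2)\|\dv\|_2$ from the second term.

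Summing the two contributions yields $\|\ip{\u}{\vo}^2\vo-\ip{\u}{\v}^2\v\|_2 \leq 3(\|\du\|_2+\|\dv\|_2)\|\dv\|_2$, which is even slightly sharper than the stated bound of $4(\|\du\|_2+\|\dv\|_2)\|\dv\|_2$. This is essentially a direct calculation rather than a deep argument: I do not foresee a genuine obstacle, but the one point that requires a little care is making sure each term in the expansion of $\ip{\u}{\v}$ is bounded by a quantity that is either $\|\du\|_2$ or $\|\dv\|_2$ (not, for instance, $1$), so that the final bound scales linearly in $\|\dv\|_2$. Using $\uo\perp\vo$ exactly once in the first coordinate $\ip{\u}{\vo}=\ip{\du}{\vo}$ is what makes this possible; without orthogonality the first term would only be $O(\|\dv\|_2)$, not $O(\|\du\|_2^2\|\dv\|_2)$.
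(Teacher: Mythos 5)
Your proof is correct, and it follows essentially the same route as the paper's: both are direct algebraic computations that expand using $\v=\vo+\dv$, invoke orthogonality to reduce $\ip{\u}{\vo}$ to $\ip{\du}{\vo}$, and ensure every resulting term carries a factor of $\|\dv\|_2$. The one cosmetic difference is the bookkeeping: the paper rewrites the left-hand side as $\ip{\u}{\v}^2\dv-\ip{\u}{\dv}^2\vo-2\ip{\u}{\vo}\ip{\u}{\dv}\vo$ and bounds three terms, whereas you use a telescoping/difference-of-squares factorization into two terms, which is slightly cleaner and yields the marginally sharper constant $3$ in place of $4$.
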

\begin{proof}
Note that,
\begin{equation}
\ip{\u}{\v}^2=(\ip{\u}{\vo}+\ip{\u}{\dv})^2=\ip{\u}{\vo}^2+\ip{\u}{\dv}^2+2\ip{\u}{\vo}\ip{\u}{\dv}. 
  \label{eq:r2_diff1}
\end{equation}
Hence, 
\begin{align}
  \|\ip{\u}{\vo}^2\vo-\ip{\u}{\v}^2\v\|_2&= \|\ip{\u}{\v}^2\dv-\ip{\u}{\dv}^2\vo-2\ip{\u}{\vo}\ip{\u}{\dv}\vo\|_2\nonumber,\\
&\leq \ip{\u}{\v}^2\|\dv\|_2+\|\dv\|^2+2|\ip{\u}{\vo}|\|\dv\|_2.\label{eq:r2_diff2}
\end{align}
Now, $\ip{\u}{\vo}=\ip{\du}{\vo}\leq \|\du\|_2$. Also, $\ip{\u}{\v}^2\leq \ip{\u}{\v}=(\ip{\du}{\vo}+\ip{\uo}{\dv}+\ip{\du}{\dv})\leq 2(\|\du\|+\|\dv\|)$. 
Lemma now follows by combining the above observations with \eqref{eq:r2_diff2}. 
  %Then, $\ip{\u}{\vo}^2=\ip{d_u}{\vo}^2\leq \|d_u\|^2,$ as $\uo \perp \vo$. Similarly, $\ip{\u}{\v}^2=(\ip{d_u}{\vo}+\ip{\uo}{d_v}+\ip{d_u}{d_v})^2\leq 8($
\end{proof}

Now, we bound the third error term in \eqref{eq:r2_update}. Note that although the two individual terms ($(\ip{\u_1}{\uo_2}^2B-F)\uo_2$ and $(\ip{\u_1}{\u_2}^2B-G)\v$) are both small, still it is critical to bound the difference as the individual terms can be as large as a constant, even when $\u_1=\uo_1$ and $\u_2=\uo_2$. However, the difference goes down linearly with $\|\u_2-\uo_2\|_2$. 
\begin{lemma}\label{lem:r2_err4}
Let $B, C, F, G$ be defined as in \eqref{eq:bcfg}. Also, let the assumptions of Theorem~\ref{thm:r2} hold. Also, let $p\geq C \frac{\mu^3\log^2 n}{\gamma^2n^{3/2}}$, where $C>0$ is a global constant. 
Then, the following holds with probability $\geq 1-4/n^9$: 
  $$\|(\ip{\u_1}{\uo_2}^2B-F)\uo_2-(\ip{\u_1}{\u_2}^2B-G)\u_2\|_2\leq 8\gamma \|\u_2-\uo_2\|_2.$$
\end{lemma}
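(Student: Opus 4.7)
The strategy is to rewrite the left-hand side so that an explicit factor of $\dv := \u_2-\uo_2$ appears in every term; applying Lemma~\ref{lem:ip_conc} to each summand of $F$ or $G$ in isolation would only give $O(\gamma p)$ and miss the crucial $\|\u_2-\uo_2\|_2$ factor. First I would substitute $\u_2=\uo_2+\dv$ into the second product and group common factors to obtain
\begin{equation*}
D \;=\; \bigl[(\ip{\u_1}{\uo_2}^2-\ip{\u_1}{\u_2}^2)B + (G-F)\bigr]\uo_2 \;-\; (\ip{\u_1}{\u_2}^2 B - G)\dv.
\end{equation*}
The second piece already carries $\dv$ explicitly, and each diagonal entry $\ip{\u_1}{\u_2}^2B_{ii}-G_{ii}$ has mean zero since $\E[B_{ii}]=p$ and $\E[G_{ii}]=p\ip{\u_1}{\u_2}^2$. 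Under the sampling hypothesis $p\ge C\mu^3\log^2 n/(\gamma^2 n^{3/2})$, Lemma~\ref{lem:ip_conc} bounds each entry by $\gamma p$, so this piece contributes at most $\gamma p\|\dv\|_2$ to the total.

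For the first piece I would expand $\ip{\u_1}{\uo_2}^2-\ip{\u_1}{\u_2}^2 = -2\ip{\u_1}{\uo_2}\ip{\u_1}{\dv} - \ip{\u_1}{\dv}^2$ and $G_{ii}-F_{ii}=2M_{ii}+N_{ii}$, where $M_{ii}:=\sum_{jk}\delta_{ijk}\u_1(j)\u_1(k)\uo_2(j)\dv(k)$ and $N_{ii}:=\sum_{jk}\delta_{ijk}\u_1(j)\u_1(k)\dv(j)\dv(k)$. The $i$th diagonal entry then rearranges as
\begin{equation*}
2\bigl[M_{ii} - \ip{\u_1}{\uo_2}\ip{\u_1}{\dv}\,B_{ii}\bigr] \;+\; \bigl[N_{ii} - \ip{\u_1}{\dv}^2\,B_{ii}\bigr],
\end{equation*}
and both brackets have mean zero: $\E[M_{ii}]=p\ip{\u_1}{\uo_2}\ip{\u_1}{\dv}$ and $\E[N_{ii}]=p\ip{\u_1}{\dv}^2$. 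Splitting $M_{ii}-\ip{\u_1}{\uo_2}\ip{\u_1}{\dv}B_{ii} = (M_{ii}-p\ip{\u_1}{\uo_2}\ip{\u_1}{\dv}) + \ip{\u_1}{\uo_2}\ip{\u_1}{\dv}(p-B_{ii})$ and applying Lemma~\ref{lem:ip_conc} to each summand yields $O(\gamma p\|\dv\|_2)$; the $N$-bracket is similarly $O(\gamma p\|\dv\|_2^2)$. Since $\uo_2$ is a unit vector, multiplication leaves the bound unchanged, and assembling the two pieces together with constants tuned gives the claimed $8\gamma\|\dv\|_2$ (up to the $p$-normalization implicit in the paper's convention).

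The \textbf{main obstacle} is ensuring that the concentration bound from Lemma~\ref{lem:ip_conc} applied to $M_{ii}$ scales as $\|\uo_2\|\,\|\dv\|_2 = \|\dv\|_2$, rather than as a universal $\gamma p$ independent of the test vectors. This is essential because $\dv$ is arbitrary and is not assumed to be incoherent, so the concentration bound must exploit only the incoherence of $\u_1$ and depend on the $\ell_2$ norms of the other two ``test directions''. Verifying that Lemma~\ref{lem:ip_conc} indeed delivers this bilinear scaling, and that the resulting bound holds uniformly in $i\in[n]$ after a union bound, is the technical linchpin; the algebraic manipulations above are otherwise routine.
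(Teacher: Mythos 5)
Your algebraic decomposition is exactly the paper's: substituting $\u_2 = \uo_2 + \du_2$, splitting off the $(\ip{\u_1}{\u_2}^2 B - G)\du_2$ piece, and expanding $G - F$ via cross terms and $\ip{\u_1}{\u_2}^2 - \ip{\u_1}{\uo_2}^2$ via \eqref{eq:r2_diff1} are precisely the steps in \eqref{eq:r2_diff3}--\eqref{eq:r2_diff4}. So the high-level route is the same, and the obstacle you flag --- getting a concentration bound that scales with $\|\du_2\|_2$ even though $\du_2$ is not incoherent --- is the right one to worry about.

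However, the lemma you lean on, Lemma~\ref{lem:ip_conc}, does \emph{not} deliver that bilinear scaling: it assumes \emph{both} vectors ($\u$ and $\uo$) are $\mu$-incoherent and returns a uniform bound $\gamma$, with no factor of the small vector's norm. Applied literally to $M_{ii}$ (which pairs $\u_1$ with the possibly peaky $\du_2$) its hypotheses fail, and even if they held the conclusion would not carry the $\|\du_2\|_2$ factor you need. The statement that actually gives you what you want is Lemma~\ref{lem:ip_conc1}: there $\u$ and $a$ must be $O(\mu/\sqrt n)$-bounded but $b$ is unconstrained, and the deviation is bounded by $\gamma\|b\|_2$. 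Setting $a=\uo_2$ (incoherent) and $b=\du_2$ (arbitrary) is exactly the role assignment needed for $M_{ii}$, and the union bound over $i\in[n]$ that you propose is exactly the argument of Lemma~\ref{lem:r2_err3}. In the paper these two ingredients (entrywise concentration with the $\gamma\|b\|_2$ scaling, plus the $\max_i$ union bound) are packaged as Lemma~\ref{lem:r2_err3}, and the terms multiplying $\uo_2$ are handled by the vector-Bernstein bound of Lemma~\ref{lem:r2_err2}. So replace the reference to Lemma~\ref{lem:ip_conc} by Lemma~\ref{lem:ip_conc1} (or directly invoke Lemmas~\ref{lem:r2_err2}--\ref{lem:r2_err3}), and your outline becomes the paper's proof.
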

\begin{proof}
Let $\u_2=\uo_2+\du_2$ and $\u_1=\uo_1+\du_1$. Then, 
\begin{equation}\label{eq:r2_diff3}(\ip{\u_1}{\u_2}^2B-G)\u_2=(\ip{\u_1}{\u_2}^2B-G)\uo_2+(\ip{\u_1}{\u_2}^2B-G)\du_2.\end{equation}
Now,
\begin{align}
G_{ii}&=\sum_{jk}\delta_{ijk}\u_1(j)\u_1(k)\u_2(j)\u_2(k)=\sum_{jk}\delta_{ijk} \u_1(j)\u_1(k) (\uo_2(j)+\du_2(j))(\uo_2(k)+\du_2(k))\nonumber\\
&=\sum_{jk}\delta_{ijk}\u_1(j)\u_1(k)\left(\uo_2(j)\uo_2(k)+\du_2(j)\uo_2(k)+\uo_2(j)\du_2(k)+\du_2(j)\du_2(k)\right)=F_{ii}+D^1_{ii}+D^2_{ii}+D^3_{ii}.\label{eq:r2_diff4}
\end{align}
Hence, using  \eqref{eq:r2_diff1}, and \eqref{eq:r2_diff4}, we have: 
  \begin{multline}
    (\ip{\u_1}{\u_2}^2B-G)=(\ip{\u_1}{\uo_2}^2B-F)+(\ip{\u_1}{\uo_2}\ip{\u_1}{\du_2}B-D^1)\\+(\ip{\u_1}{\uo_2}\ip{\u_1}{\du_2}B-D^2)+(\ip{\u_1}{\du_2}^2B-D^3)
  \end{multline}
Combining the above equation with \eqref{eq:r2_diff3}, we get: 
\begin{multline}
  (\ip{\u_1}{\u_2}^2B-G)\u_2-(\ip{\u_1}{\uo_2}^2B-F)\uo_2= (\ip{\u_1}{\uo_2}\ip{\u_1}{\du_2}B-D^1)\uo_2+(\ip{\u_1}{\uo_2}\ip{\u_1}{\du_2}B-D^2)\uo_2\\+(\ip{\u_1}{\du_2}^2B-D^3)\uo_2- (\ip{\u_1}{\u_2}^2B-G)\du_2. 
\end{multline}
Lemma now follows using Lemma~\ref{lem:r2_err2}, \ref{lem:r2_err3}, and the above equation. %Hence, using Lemma~\ref{lem:r2_err2} and \ref{lem:r2_err3}, we get: 
%\begin{equation}
%  \|(\ip{\u}{\v}^2B-G)\v-(\ip{\u}{\vo}^2B-F)\vo\|_2\leq 8\gamma \|\dv\|_2.
%\end{equation}
\end{proof}

We now present a few technical lemmas that are critical to our proofs of the above given lemmas. 
\begin{lemma}\label{lem:ip_conc}
  Let $\u, \uo\in \R^n$ be $\mu$-incoherent unit vectors. Also, let $\delta_{jk}, 1\leq j\leq n, 1\leq k\leq n$ be i.i.d. Bernoulli random variables with $\delta_{jk}=1$ w.p. $p\geq C\mu^4 \log^3n/(\gamma^2\cdot n^2)$. 

Then,  the following holds with probability $\geq 1-1/n^{10}$: 
$$|\frac{1}{p}\sum_{jk} \delta_{jk}\u(j)\uo(j) \u(k) \uo(k)-\ip{\u}{\uo}^2|\leq \gamma .$$
where $\gamma\leq C/\log n$, where $C>0$ is a global constant. 
\end{lemma}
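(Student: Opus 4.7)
The plan is a direct application of Bernstein's inequality to the sum $Y \equiv \frac{1}{p}\sum_{j,k}\delta_{jk}\,a_{jk}$, where $a_{jk} \equiv u(j)u^*(j)u(k)u^*(k)$. The first observation is that the $\delta_{jk}$'s are independent Bernoulli$(p)$ variables, so the summands $\delta_{jk}a_{jk}/p$ are independent with mean $a_{jk}$, and $\mathbb{E}[Y] = \sum_{j,k}a_{jk} = \big(\sum_j u(j)u^*(j)\big)^2 = \langle u,u^*\rangle^2$. This reduces the problem to a scalar concentration argument.

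Next I would bound the two Bernstein parameters using $\mu$-incoherence. For the sup-norm bound, $\mu$-incoherence gives $|u(j)|,|u^*(j)| \leq \mu/\sqrt{n}$, so each centered summand is bounded by $M := 2\mu^4/(p n^2)$. For the variance I would compute
\begin{equation*}
\sum_{j,k}\mathrm{Var}\Big(\tfrac{1}{p}\delta_{jk}a_{jk}\Big) \;\leq\; \frac{1}{p}\sum_{j,k}a_{jk}^2 \;=\; \frac{1}{p}\Big(\sum_j u(j)^2 u^*(j)^2\Big)^2,
\end{equation*}
and then estimate $\sum_j u(j)^2 u^*(j)^2 \leq (\mu^2/n)\sum_j u(j)^2 = \mu^2/n$ again by incoherence (also using $\|u\|_2=1$). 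This yields a variance bound of $\sigma^2 \leq \mu^4/(p n^2)$, the same order as $M$.

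Bernstein's inequality then gives
\begin{equation*}
\Pr\big(|Y - \mathbb{E}[Y]| > \gamma\big) \;\leq\; 2\exp\Bigg(-\frac{\gamma^2/2}{\sigma^2 + M\gamma/3}\Bigg) \;\leq\; 2\exp\Big(-c\,\frac{\gamma^2\,p\,n^2}{\mu^4}\Big),
\end{equation*}
where I used the assumption $\gamma \leq C/\log n$ (and hence in particular $\gamma = O(1)$) to absorb the $M\gamma$ term into a constant times $\sigma^2$. Substituting the hypothesis $p \geq C\mu^4\log^3 n/(\gamma^2 n^2)$ makes the exponent at least $c\log^3 n \geq 10\log n$ for $n$ large, giving the desired failure probability $\leq n^{-10}$ with room to spare.

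The one subtle point to get right is the variance calculation: one must resist the naive bound $\sum_{jk}a_{jk}^2 \leq (\mu^4/n^2)\cdot n^2 = \mu^4$ (which would force $p \gtrsim 1$) and instead factor the sum and spend one factor of $\mu^2/n$ per row, exploiting $\sum_j u(j)^2=1$. This is the only place the proof is genuinely nontrivial; everything else is bookkeeping. The somewhat wasteful $\log^3 n$ (rather than the $\log n$ that Bernstein alone would need) presumably accommodates the $\gamma\leq C/\log n$ regime and leaves slack for the union-bound arguments in the companion lemmas (e.g.\ Lemma~\ref{lem:r2_err2}, \ref{lem:r2_err4}) where this estimate is invoked.
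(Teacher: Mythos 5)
Your proof is correct and takes essentially the same route as the paper's: apply Bernstein's inequality to the centered sum, using incoherence for the $\|\cdot\|_\infty$ bound $O(\mu^4/(pn^2))$ on each summand and the factorization $\sum_{j,k}a_{jk}^2 = \big(\sum_j u(j)^2u^*(j)^2\big)^2 \leq \mu^4/n^2$ for the variance, then substitute $p\geq C\mu^4\log^3 n/(\gamma^2 n^2)$. The paper's proof is the same calculation written more tersely; your discussion of why the naive variance bound would fail and why the $\log^3 n$ has slack is a correct gloss on what is implicit there.
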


\begin{lemma}\label{lem:ip_conc1}
  Let $\u\in \R^n$ be $\mu$-incoherent unit vectors. Also, let $a, b\in \R^n$ be s.t. $|a_i|\leq \frac{\mu}{\sqrt{n}}$ and $\|a\|_2\leq 1$.  Also, let $\delta_{jk}, 1\leq j\leq n, 1\leq k\leq n$ be i.i.d. Bernoulli random variables with $\delta_{jk}=1$ w.p. $p\geq \frac{C\mu^3}{\gamma^2 n^{1.5}}$. 

Then,  the following holds with probability $\geq 1-1/n^{10}$: 
$$|\frac{1}{p}\sum_{jk} \delta_{jk}\u({j})a({j}) \u(k) b(k)-\ip{\u}{a}\ip{\u}{b}|\leq \gamma \|b\|_2.$$
where $\gamma\leq C/\log n$, where $C>0$ is a global constant. 
\end{lemma}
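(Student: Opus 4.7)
The plan is to apply Bernstein's inequality to the independent, mean-zero summands $Y_{jk} \equiv \frac{1}{p}(\delta_{jk}-p)\,\u(j)a(j)\u(k)b(k)$ obtained by centering each indicator. First I would note that the mean factorizes,
\begin{equation*}
\E\Bigl[\tfrac{1}{p}\sum_{jk}\delta_{jk}\u(j)a(j)\u(k)b(k)\Bigr] \;=\; \Bigl(\sum_j \u(j)a(j)\Bigr)\Bigl(\sum_k \u(k)b(k)\Bigr) \;=\; \ip{\u}{a}\,\ip{\u}{b},
\end{equation*}
so the claim reduces to a deviation bound on the independent mean-zero sum $\sum_{jk}Y_{jk}$, to which Bernstein applies directly.

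Next I would compute the two inputs to Bernstein, carefully exploiting the asymmetric hypotheses. Writing $X_{jk} \equiv \u(j)a(j)\u(k)b(k)$, the almost-sure bound uses incoherence on all three of the factors $\u(j)$, $a(j)$, $\u(k)$:
\begin{equation*}
|X_{jk}| \;\leq\; \tfrac{\mu}{\sqrt{n}}\cdot\tfrac{\mu}{\sqrt{n}}\cdot\tfrac{\mu}{\sqrt{n}}\cdot |b(k)| \;\leq\; \tfrac{\mu^3\|b\|_2}{n^{3/2}},
\end{equation*}
yielding $|Y_{jk}|\leq M := \mu^3\|b\|_2/(n^{3/2}p)$. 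The variance proxy factors along $j$ and $k$:
\begin{equation*}
\sum_{jk}\mathrm{Var}(Y_{jk}) \;\leq\; \tfrac{1}{p}\Bigl(\sum_j \u(j)^2 a(j)^2\Bigr)\Bigl(\sum_k \u(k)^2 b(k)^2\Bigr) \;\leq\; \tfrac{1}{p}\cdot\tfrac{\mu^2}{n}\cdot\tfrac{\mu^2\|b\|_2^2}{n} \;=\; \tfrac{\mu^4\|b\|_2^2}{n^2 p},
\end{equation*}
where the $j$-sum is bounded by pulling out $|a(j)|\leq \mu/\sqrt{n}$ and using $\|\u\|_2=1$, while the $k$-sum is bounded by pulling out $|\u(k)|\leq \mu/\sqrt{n}$ and keeping $\|b\|_2^2$.

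Substituting these into Bernstein's inequality with deviation $t=\gamma\|b\|_2$ yields a tail bound of the form
\begin{equation*}
\prob\Bigl(\bigl|\tfrac{1}{p}\textstyle\sum_{jk}\delta_{jk}X_{jk} - \ip{\u}{a}\ip{\u}{b}\bigr|>\gamma\|b\|_2\Bigr) \;\leq\; 2\exp\!\Bigl(-c\,\min\!\Bigl(\tfrac{\gamma^2 n^2 p}{\mu^4},\; \tfrac{\gamma n^{3/2}p}{\mu^3}\Bigr)\Bigr).
\end{equation*}
Under $p \geq C\mu^3/(\gamma^2 n^{3/2})$ and $\gamma \leq C'/\log n$, the second argument of the minimum becomes $\Omega(1/\gamma) \geq \Omega(\log n)$, while the first is $\Omega(\sqrt{n}/\mu)$ and is even larger for reasonable $\mu$; choosing $C$ large enough then delivers the stated $1/n^{10}$ failure probability.

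The main obstacle is that the almost-sure factor $\mu^3/n^{3/2}$ really is the best available: since $b$ is not assumed incoherent, one can only use $|b(k)|\leq \|b\|_2$, losing a $\sqrt{n}/\mu$ factor compared with Lemma \ref{lem:ip_conc} where a fourth incoherent factor is present. This loss is exactly what forces the sampling threshold up to $p\gtrsim \mu^3/n^{3/2}$ here, rather than the sharper $p\gtrsim \mu^4/n^2$ of Lemma \ref{lem:ip_conc}; the rest is routine Bernstein bookkeeping.
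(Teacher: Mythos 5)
Your proposal is correct and follows essentially the same route as the paper: both apply Bernstein's inequality directly to the centered summands, with the same almost-sure bound $\mu^3\|b\|_2/(pn^{3/2})$ and (up to a harmless relaxation on the paper's side, which replaces $\mu^4/n^2$ by the larger $\mu^3/n^{3/2}$ using $\mu\leq\sqrt{n}$) the same variance bound, then set $t=\gamma\|b\|_2$. The only cosmetic difference is that you retain the tighter variance term and carry a two-term minimum in the exponent, while the paper collapses it into a single term.
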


\begin{lemma}\label{lem:r2_err2}
 Let $\u$ be a fixed unit vector and let $a, b, c$ be fixed vectors in $\R^n$. Also, let all $\u, a, b, c \in \R^n$ be s.t. their $L_\infty$ norm is bounded by $\frac{\mu}{\sqrt{n}}$ and $L_2$ norm is bounded by $1$. Also, let $p\geq \frac{C \mu^{3}(\log^2 n)}{\gamma^2\cdot n^{3/2}}$, where $C>0$ is a global constant. 
Then the following holds (w.p. $\geq 1-2/n^{10}$): $$ \|(\ip{\u}{a}\ip{\u}{b}B-R)c\|_2\leq  \gamma\sqrt{1-\ip{\u}{a}^2\ip{\u}{b}^2},$$
where $B, R$ are both diagonal matrices with $B(i,i)=\frac{1}{p}\sum_{jk}\delta_{ijk}(\u(j))^2(\u(k))^2$, and $R(i,i)=\frac{1}{p}\sum_{jk}\delta_{ijk}\u(j)\u(k) a(j) b(k)$. 
\end{lemma}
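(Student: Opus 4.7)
\textbf{Proof plan for Lemma~\ref{lem:r2_err2}.}
The plan is to decompose the quantity $\ip{u}{a}\ip{u}{b}\,B_{ii}-R_{ii}$ into a small number of mean--zero random sums whose variances scale in the right way with $1-\ip{u}{a}^2$ and $1-\ip{u}{b}^2$, and then apply a Bernstein--type concentration bound to each sum. The key algebraic step is the orthogonal decomposition $a=\ip{u}{a}\,u + a^\perp$ and $b=\ip{u}{b}\,u + b^\perp$, where $a^\perp\perp u$, $b^\perp\perp u$, so that $\|a^\perp\|_2^2 = 1-\ip{u}{a}^2$, $\|b^\perp\|_2^2 = 1-\ip{u}{b}^2$, and (using the $\mu$--incoherence of $u,a,b$) $\|a^\perp\|_\infty,\|b^\perp\|_\infty \le 2\mu/\sqrt{n}$. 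Substituting into $a(j)b(k) = \ip{u}{a}\ip{u}{b}\,u(j)u(k) + \ip{u}{a}\,u(j)b^\perp(k) + \ip{u}{b}\,a^\perp(j)u(k) + a^\perp(j)b^\perp(k)$ and rearranging gives
\[
\ip{u}{a}\ip{u}{b}\,B_{ii} - R_{ii} \;=\; -\ip{u}{a}\,S_1(i) \;-\; \ip{u}{b}\,S_2(i) \;-\; S_3(i),
\]
where $S_1(i)=\tfrac1p\sum_{jk}\delta_{ijk} u(j)^2 u(k)\,b^\perp(k)$, $S_2(i)=\tfrac1p\sum_{jk}\delta_{ijk} u(j)a^\perp(j)u(k)^2$, and $S_3(i)=\tfrac1p\sum_{jk}\delta_{ijk} u(j)a^\perp(j)u(k)b^\perp(k)$. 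Each of $S_1(i),S_2(i),S_3(i)$ is mean--zero because $\ip{u}{a^\perp}=\ip{u}{b^\perp}=0$.

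Next, I would apply Bernstein's inequality to each $S_k(i)$. For $S_1(i)$ with summands $X_{jk}=u(j)^2u(k)b^\perp(k)$, one has the almost--sure bound $|X_{jk}|\le 2\mu^4/n^2$ and the variance proxy $\tfrac1p\sum_{jk}X_{jk}^2 \le (\mu^4/n^2p)(1-\ip{u}{b}^2)$, since $\sum_j u(j)^4 \le \mu^2/n$ and $\sum_k u(k)^2 b^\perp(k)^2 \le (\mu^2/n)\|b^\perp\|_2^2$. For the choice $p\ge C\mu^3\log^2n/(\gamma^2 n^{3/2})$, plugging into Bernstein and taking a union bound over $i\in[n]$ gives, with probability at least $1-1/n^{11}$, the uniform bound $\max_i|S_1(i)|\le (\gamma/9)\sqrt{1-\ip{u}{b}^2}$. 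The same argument, with the roles of $a^\perp,b^\perp$ and $u$ swapped, yields $\max_i|S_2(i)|\le (\gamma/9)\sqrt{1-\ip{u}{a}^2}$ and $\max_i|S_3(i)|\le (\gamma/9)\sqrt{(1-\ip{u}{a}^2)(1-\ip{u}{b}^2)}$.

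Combining the three bounds and using $\|c\|_\infty\le \mu/\sqrt n$, $\|c\|_2\le 1$ together with the fact that $B,R$ and hence $\ip{u}{a}\ip{u}{b}B-R$ are diagonal, I obtain
\[
\|(\ip{u}{a}\ip{u}{b}B-R)c\|_2 \;\le\; \max_i|\ip{u}{a}\ip{u}{b}B_{ii}-R_{ii}|\cdot\|c\|_2 \;\le\; \tfrac{\gamma}{9}\bigl(|\ip{u}{a}|\sqrt{1-\ip{u}{b}^2}+|\ip{u}{b}|\sqrt{1-\ip{u}{a}^2}+\sqrt{(1-\ip{u}{a}^2)(1-\ip{u}{b}^2)}\bigr).
\]
Each of the three terms in parentheses is bounded above by $\sqrt{1-\ip{u}{a}^2\ip{u}{b}^2}$: the first two because $1-\ip{u}{a}^2\ip{u}{b}^2 \ge \max(1-\ip{u}{a}^2,\,1-\ip{u}{b}^2)$, and the third because $(1-x)(1-y)\le 1-xy$ for $x,y\in[0,1]$ (since $x+y\ge 2xy$). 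Collecting constants and re-absorbing into $\gamma$ yields the claimed bound.

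The main obstacle I anticipate is making the Bernstein step give a bound that scales as $\sqrt{1-\ip{u}{b}^2}$ (rather than a constant) when $b$ is almost parallel to $u$: this requires tracking variance factors carefully in the regime where the max-summand term $M=2\mu^4/(n^2 p)$ may not be negligible compared to the target deviation $\gamma_1\sqrt{1-\ip{u}{b}^2}$. In that degenerate regime, however, $M$ itself is of order $\mu\gamma^2/(\sqrt n\log^2 n)$ and can be absorbed into the tolerance $\gamma$ (tightening $C$ as needed), so the variance--driven regime dominates. The decomposition via $a^\perp,b^\perp$ is exactly what surfaces the $1-\ip{u}{a}^2$ and $1-\ip{u}{b}^2$ factors in the variance proxy, which is the critical ingredient for the proof to go through.
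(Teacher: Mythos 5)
Your route differs from the paper's in an interesting but non--essential way: the paper writes $(\ip{\u}{a}\ip{\u}{b}B-R)c=\sum_{ijk}Z_{ijk}$ as a single sum of mean--zero random vectors in $\reals^n$ (each a scalar multiple of $\e_i$) and applies one vector/matrix Bernstein bound, whereas you bound $\max_i|\ip{\u}{a}\ip{\u}{b}B_{ii}-R_{ii}|$ via three scalar Bernstein applications plus a union bound over $i$, and then use the elementary fact that for diagonal $D$, $\|Dc\|_2\le\max_i|D_{ii}|\,\|c\|_2$. Both are legitimate. The algebraic heart of your proposal, namely the orthogonal splitting $a=\ip{\u}{a}\u+a^\perp$, $b=\ip{\u}{b}\u+b^\perp$, is exactly what is needed (and is also what is implicitly used by the paper to justify $|\ip{\u}{a}\ip{\u}{b}\u(j)\u(k)-a(j)b(k)|\lesssim\sqrt{1-\ip{\u}{a}^2\ip{\u}{b}^2}$), and your variance estimate $\sum_k\u(k)^2 b^\perp(k)^2\le(\mu^2/n)(1-\ip{\u}{b}^2)$ correctly surfaces the $(1-\ip{\u}{b}^2)$ factor.

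There is, however, a genuine gap in how you control the Bernstein max--term. You set $M=\max_{j,k}|X_{jk}|\le 2\mu^4/n^2$ by bounding $|b^\perp(k)|\le\|b^\perp\|_\infty\le 2\mu/\sqrt{n}$, and you recognize that in the degenerate regime the resulting additive term $O(M\log n)\sim\mu\gamma^2/(\sqrt{n}\log n)$ is not controlled by the target $\gamma\sqrt{1-\ip{\u}{b}^2}$. But your proposed resolution (``absorb $M$ into the tolerance $\gamma$'') cannot work: the $M\log n$ contribution is an additive constant that does not scale with $\sqrt{1-\ip{\u}{b}^2}$, and when $1-\ip{\u}{b}^2$ is much smaller than $\mu^2\gamma^2/(n\log^2 n)$ no retuning of $\gamma$ makes $M\log n\le\gamma\sqrt{1-\ip{\u}{b}^2}$. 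The correct fix is to also use the $L_2$ bound $|b^\perp(k)|\le\|b^\perp\|_2=\sqrt{1-\ip{\u}{b}^2}$ in the max--term, yielding $M\le(\mu^3/(p\,n^{3/2}))\sqrt{1-\ip{\u}{b}^2}$ (and analogously for $S_2,S_3$). Then $M\log n\lesssim(\gamma^2/\log n)\sqrt{1-\ip{\u}{b}^2}$ under the stated $p$, the max--term is dominated, and your argument closes. This is precisely the role of the factor $\sqrt{1-\ip{\u}{a}^2\ip{\u}{b}^2}$ in the paper's own bound on $\|Z_{ijk}-\E[Z_{ijk}]\|_2$.
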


\begin{lemma}\label{lem:r2_err3}
 Let $\u$ be a fixed unit vector and let $a, b$ be fixed vectors in $\R^n$. Also, let all $\u$ be $\mu$-incoherent unit vectors, and $a$ be such that $\|a\|_\infty\leq \frac{\mu}{\sqrt{n}}$ and $\|a\|_2\leq 1$. % that are , b \in \R^n$ be s.t. their $L_\infty$ norm is bounded by $\frac{\mu}{\sqrt{n}}$ and $L_2$ norm is bounded by $1$. 
 Also, let $p\geq \frac{C \mu^{3}(\log^2 n)}{\gamma^2\cdot n^{3/2}}$, where $C>0$ is a global constant. 
Then the following holds (w.p. $\geq 1-2/n^{10}$): $$ \|(\ip{\u}{a}\ip{\u}{b}B-R)\|_2\leq  2\gamma\|b\|_2$$
where $B, R$ are  diagonal matrices s.t. $B(i,i)=\frac{1}{p}\sum_{jk}\delta_{ijk}(\u(j))^2(\u(k))^2$,  $R(i,i)=\frac{1}{p}\sum_{jk}\delta_{ijk}\u(j)\u(k) a(j) b(k)$. 
\end{lemma}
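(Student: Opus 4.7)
\textbf{Proof proposal for Lemma~\ref{lem:r2_err3}.} The plan is to exploit the fact that $B$ and $R$ are diagonal, so the spectral norm reduces to $\max_i |\ip{\u}{a}\ip{\u}{b}B_{ii}-R_{ii}|$, and then bound each diagonal entry separately using the scalar concentration lemmas already proved (Lemma~\ref{lem:ip_conc} and Lemma~\ref{lem:ip_conc1}), finishing with a union bound over $i\in[n]$.

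First I would fix an arbitrary index $i\in[n]$ and write the telescoping decomposition
\begin{equation*}
\ip{\u}{a}\ip{\u}{b}B_{ii}-R_{ii} \;=\; \ip{\u}{a}\ip{\u}{b}\,\bigl(B_{ii}-1\bigr) \;-\; \bigl(R_{ii}-\ip{\u}{a}\ip{\u}{b}\bigr),
\end{equation*}
noting that $\E[B_{ii}]=\|\u\|_2^4=1$ and $\E[R_{ii}]=\ip{\u}{a}\ip{\u}{b}$ (the diagonal pieces $j=k$ or $j=i$ contribute a lower order term absorbable into the stated constants). For fixed $i$, the Bernoullis $\{\delta_{ijk}\}_{j,k}$ are i.i.d.\ with parameter $p$, so they play the same role as $\{\delta_{jk}\}$ in the hypothesis of Lemmas~\ref{lem:ip_conc} and \ref{lem:ip_conc1}.

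Next I would apply Lemma~\ref{lem:ip_conc} (with both vectors taken to be $\u$, which is $\mu$-incoherent and unit norm) to obtain $|B_{ii}-1|\leq \gamma$, and apply Lemma~\ref{lem:ip_conc1} (with the given $a,b$, which fulfill its hypotheses) to obtain $|R_{ii}-\ip{\u}{a}\ip{\u}{b}|\leq \gamma\,\|b\|_2$, each with probability at least $1-n^{-11}$ once the sample probability is taken to be $p\geq C\mu^{3}(\log^{2}n)/(\gamma^{2}n^{3/2})$ as assumed (the extra $\log^2 n$ factor over the base hypothesis of Lemma~\ref{lem:ip_conc1} buys high enough probability for the union bound to succeed). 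Combining via the triangle inequality and using $|\ip{\u}{a}|\leq \|a\|_2\leq 1$ and $|\ip{\u}{b}|\leq \|b\|_2$ (Cauchy--Schwarz, since $\|\u\|_2=1$),
\begin{equation*}
|\ip{\u}{a}\ip{\u}{b}B_{ii}-R_{ii}| \;\leq\; |\ip{\u}{a}|\,|\ip{\u}{b}|\,\gamma \;+\; \gamma\,\|b\|_2 \;\leq\; 2\gamma\,\|b\|_2.
\end{equation*}
A union bound over $i\in[n]$ loses only a factor of $n$ in the failure probability, yielding the stated $1-2/n^{10}$ bound (up to adjustment of constants in $C$), and since the matrix in question is diagonal, this pointwise bound is also the bound on its operator norm.

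The only mildly delicate step is keeping the two error terms coherent: the $B_{ii}$ deviation gets multiplied by the potentially large factor $|\ip{\u}{a}\ip{\u}{b}|$, so we must use the unit-norm bounds to keep it controlled by $\|b\|_2$ rather than by something dimension-dependent. I expect no essentially new probabilistic argument is required beyond the scalar concentration lemmas that have already been invoked earlier in the section; the work here is really just bookkeeping plus a union bound.
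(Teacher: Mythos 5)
Your proposal is correct and mirrors the paper's own argument: both reduce the operator norm of the diagonal matrix to a per-entry bound, use Lemma~\ref{lem:ip_conc} for $|B_{ii}-1|\le\gamma$ and Lemma~\ref{lem:ip_conc1} for $|R_{ii}-\ip{\u}{a}\ip{\u}{b}|\le\gamma\|b\|_2$, then combine by the triangle inequality (with $|\ip{\u}{a}\ip{\u}{b}|\le\|b\|_2$) and a union bound over $i$. The only difference is that you make the telescoping decomposition and the probability bookkeeping explicit, which is a welcome clarification but not a new argument.
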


\subsection{Proofs of Technical Lemmas}
\renewcommand{\ui}{u(i)}
\renewcommand{\uj}{u(j)}
\renewcommand{\uk}{u(k)}

\begin{proof}[Proof of Lemma~\ref{lem:ip_conc}]
  Let $X_{jk}=\frac{1}{p} \delta_{jk}\u(j)\uo(j) \u(k) \uo(k).$ Note that, $|X_{jk}|\leq \frac{\mu^4}{p n^2}$. Also, $$\E[\sum_{jk}X_{jk}^2]=\frac{1}{p}\sum_{jk}(\u(j))^2(\uo(j))^2 (\u(k))^2 (\uo(k))^2\leq \frac{\mu^4}{p n^2}.$$
Hence, using Bernstein's inequality, we have: 
$$Pr(|\sum_{jk}X_{jk}-\E[\sum_{jk}X_{jk}]|>t)\leq \exp(-\frac{p n^2}{\mu^4}\cdot \frac{t^2/2}{1+t/3}).$$
Lemma now follows by selecting $t=C/\log n$. 
\end{proof}
\begin{proof}[Proof of Lemma~\ref{lem:ip_conc1}]
  Let $X_{jk}=\frac{1}{p} \delta_{jk}\u(j)a(j) \u(k) b(k).$ Note that, $|X_{jk}|\leq \frac{\mu^3\|b\|_2}{p n^{1.5}}$. Also, $$\E[\sum_{jk}X_{jk}^2]=\frac{1}{p}\sum_{jk}(\u(j))^2(a(j))^2 (\u(k))^2 (b(k))^2\leq \frac{\mu^4\|b\|^2}{p n^2}\leq \frac{\mu^3\|b\|^2}{p n^{1.5}}.$$
Hence, using Bernstein's inequality, we have: 
$$Pr(|\sum_{jk}X_{jk}-\E[\sum_{jk}X_{jk}]|>t)\leq \exp(-\frac{p n^{1.5}}{\mu^3}\cdot \frac{t^2/2}{\|b\|_2^2+\|b\|_2t/3}).$$
Lemma now follows by selecting $t=\gamma \|b\|_2$. 
\end{proof}
\begin{proof}[Proof of Lemma~\ref{lem:r2_err2}]
\begin{align}
(\ip{\u}{a}\ip{\u}{b}B-R)c= \frac{1}{p}\sum_{ijk}\delta_{ijk}c_i (\ip{\u}{a}\ip{\u}{b}(\uj)^2(\uk)^2-\uj\uk a(j) b(k)) \e_i=\sum_{ijk}Z_{ijk},
\end{align}
where $Z_{ijk}= \frac{1}{p}\delta_{ijk}c_i (\ip{\u}{a}\ip{\u}{b}(\uj)^2(\uk)^2-\uj\uk a(j) b(k)) \e_i$. Note that, 
$$\|Z_{ijk}-\E[Z_{ijk}]\|_2\leq \frac{2}{p}c_i \uj\uk\sqrt{1-\ip{\u}{a}^2\ip{\u}{b}^2}\leq \gamma\sqrt{1-\ip{\u}{a}^2\ip{\u}{b}^2}, $$
as $p\geq \frac{C \mu^{3}(\log^2 n)}{\gamma\cdot n^{3/2}}$. 
Also, 
$$\|\sum_{ijk}\E[Z_{ijk}^TZ_{ijk}]\|_2=\| \frac{1}{p}\sum_{ijk}c_i^2(\uj)^2(\uk)^2(\ip{\u}{a}\ip{\u}{b}\uj\uk-a(j)b(k))^2\|_2\leq \frac{1}{p}\frac{\mu^4}{n^2}(1-\ip{\u}{a}^2\ip{\u}{b}^2).$$
Hence, for $p$ and $\gamma$ mentioned above, we have: 
$$\|\sum_{ijk}\E[Z_{ijk}^TZ_{ijk}]\|_2\leq \gamma(1-\ip{\u}{a}^2\ip{\u}{b}^2).$$
Lemma now follows by using Bernstein's inequality and the fact that $\sum_{ijk} Z_{ijk}=0$. 
%Using Bernstein inequality, we have (w.p. $\geq 1-2/n^9$): 
%\begin{equation}\label{eq:r1_4}\|(\ip{\u}{a}\ip{\u}{b}B-R)c\|_2\leq \gamma\sqrt{1-\ip{\u}{a}^2\ip{\u}{b}^2}.\end{equation}
%Hence proved. 
%Using, \eqref{eq:r1_3}, \eqref{eq:r1_4}, we can bound the error term as: 
%$$\|B^{-1}(\ip{\u}{\uo}^2B-C)\uo\|\leq 2\gamma \sqrt{1-\ip{\u}{\uo}^4}.$$  
\end{proof}
\begin{proof}[Proof of Lemma~\ref{lem:r2_err3}]
  Consider the $i$-th element of the diagonal matrix $(\ip{\u}{a}\ip{\u}{b}B-R)= \ip{\u}{a}\ip{\u}{b}B(i,i)-R(i,i)$. Now, using Lemma~\ref{lem:ip_conc}, $|B(i,i)|\leq 1+\gamma$ w.p. $\geq 1-1/n^{10}$. Similarly, using Lemma~\ref{lem:ip_conc1}, $|R(i,i)-\ip{\u}{a}\ip{\u}{b}|\leq \gamma\|b\|_2$. Hence, w.p. $\geq 1-1/n^{10}$, we have: 
$$|\ip{\u}{a}\ip{\u}{b}B(i,i)-R(i,i)|\leq 2\gamma\|b\|_2. $$
Lemma now follows by observing that $\|(\ip{\u}{a}\ip{\u}{b}B-R)\|_2=\max_i |\ip{\u}{a}\ip{\u}{b}B(i,i)-R(i,i)|$ and using the above mentioned bound with union bound. 
\end{proof}

\renewcommand{\ui}{\u_i}
\renewcommand{\uj}{\u_j}
\renewcommand{\uk}{\u_k}

\subsection{Proof of Theorem \ref{thm:rk} and general rank-$r$ analysis of alternating minimization}
\label{app:rkk}

\begin{proof}
We prove the theorem by showing the following for all $q$: 
$$\soq\left((\sdq)^{t+1}+\|\duq^{t+1}\|_2\right) \;\leq\; \frac{1}{2}d_\infty([U, \Sigma], [U^*, \Sigma^*]).$$
The update for $\widehat{\u_q}^{t+1}$ is given by: 
\begin{multline}
  \label{eq:rk_k_update}
  \widehat{\uq}^{t+1}(i)=\frac{\sum_{jk}\delta_{ijk}\soq\cdot\uqj\uqk\uoqj\uoqk}{\sum_{jk}\delta_{ijk}\uqj^2\uqk^2}\uoqi\\+\frac{\sum_{\ell\neq q}\sum_{jk}\delta_{ijk}\uqj\uqk(\sol\cdot\uoli\uolj\uolk-\sll\cdot\uli\ulj\ulk)}{\sum_{jk}\delta_{ijk}\uqj^2\uqk^2}.
\end{multline}
It can be written as a vector update, 
\begin{align}
  \widehat{\uq}^{t+1} = \soq\ip{\uq}{\uoq}^2 \uoq - B^{-1}(\soq\ip{\uq}{\uoq}^2 B - \soq C)\uoq+\sum_{\ell\neq q}\left(\sol\ip{\uq}{\uol}^2\uol-\sll\ip{\uq}{\ul}^2\ul\right)  \nonumber\\+ \sum_{\ell\neq q}B^{-1}\left(\sol\cdot (\ip{\uq}{\uol}^2B-F_\ell)\uol-\sll\cdot (\ip{\uq}{\ul}^2B-G_\ell)\ul\right),
\end{align}
where $B$, $C$, $F_\ell$, $G_\ell$ are all diagonal matrices, s.t.,
\begin{align}
  B(i,i)&=\sum_{jk}\delta_{jk}\uqj^2\uqk^2, \text{  } C(i,i)=\sum_{jk}\delta_{jk}\uqj\uoqj\uqk\uoqk, \nonumber\\
F_{\ell}(i,i)&=\sum_{jk}\delta_{ijk}\uqj\uqk\uolj\uolk, \text{ and } G_{\ell}(i,i)=\sum_{jk}\delta_{ijk}\uqj\uqk\ulj\ulk. \label{eq:rk_bcfg}
\end{align}
We decompose the error terms $ \widehat{\uq}^{t+1} -\soq \uoq = \err_q^0 + \sum_{\ell\neq q} (\err_\ell^1 + \err_\ell^2)$ 
and provide upper bounds for each, where 
\begin{align}
\err^0_q&\;\equiv \;\soq(\ip{\uq}{\uoq}^2-1)\uoq-\soq B^{-1}(\ip{\uq}{\uoq}^2 B - C)\uoq,\nonumber\\
\err^1_\ell&\;\equiv\;\sol\ip{\uq}{\uol}^2\uol-\sll\ip{\uq}{\ul}^2\ul,\nonumber\\
\err^2_\ell&\;\equiv\;B^{-1}\left(\sol\cdot (\ip{\uq}{\uol}^2B-F_\ell)\uol-\sll\cdot (\ip{\uq}{\ul}^2B-G_\ell)\ul\right).
\end{align}
Using Lemma~\ref{lem:r2_err2}, we have for all $p$ satisfying $p\geq (C \mu^3 (\log n)^2)/(\gamma^2\,n^{3/2})$, with probability at least $1-2/n^{10}$: 
\begin{equation}
  \label{eq:rk_err0}
  \|\err^0_q\|_2 \;\leq\; \soq\left(\sqrt{1-\ip{\uq}{\uoq}^2}+2\gamma \right)\sqrt{1-\ip{\uq}{\uoq}^2}\leq \soq\left(\|\duq\|_2+2\gamma\right)\|\duq\|_2. 
\end{equation}
Eventually, we set $\gamma\leq \frac{1}{1600 r}\cdot \frac{\sigma^*_{min}}{\sigma^*_{max}}$ to prove the theorem.
Using Lemma~\ref{lem:rk_err1}, we have (w.p. $\geq 1-1/n^8$): 
\begin{equation}
  \label{eq:rk_err1}
  \sum_{\ell\neq q}\|\err^1_\ell\|_2 \;\leq\; 8\sum_{\ell\neq q}(\|\duq\|_2+\|\dul\|_2)\cdot \sol\cdot (\|\dul\|_2+\sdl).%\leq \frac{1}{25\cdot r}\sum_{\ell\neq q}\|\dul\|_2. 
\end{equation}
Using Lemma~\ref{lem:rk_err2}, we get (w.p. $\geq 1-1/n^8$): 
\begin{equation}
  \label{eq:rk_err2}
  \sum_{\ell\neq q}\|\err^2_\ell\|_2 \;\leq\; 16\gamma \sum_{\ell\neq q}\sol\cdot (\sdl+\|\dul\|_2).
\end{equation}
Using \eqref{eq:rk_k_update}, \eqref{eq:rk_err0}, \eqref{eq:rk_err1}, \eqref{eq:rk_err2}, we have (w.p. $\geq 1-3/n^8$): 
\begin{equation}
  \label{eq:rk_up1}
  \widehat{\uq}^{t+1} = \sq^{t+1}\uq^{t+1}=\soq \uoq+\err_q,
\end{equation}
where, 
\begin{equation}
\|\err_q\|_2 \;\leq\; \soq\left(\|\dul\|_2+2\gamma\right)\|\duq\|_2+8\sum_{\ell\neq q}(\|\duq\|_2+\|\dul\|_2 + 2\gamma)\, \sol\, (\|\dul\|_2+\sdl)\;.
%+16\gamma \sum_{\ell\neq q}\sol\cdot (\sdl+\|\dul\|_2).
\end{equation}
Now, since $\|\dul\|_2\leq \frac{1}{1600 r}\cdot \frac{\sigma^*_{min}}{\sigma^*_{max}}, \forall \ell$, and $\gamma\leq \frac{1}{1600 r}$, we have (w.p. $\geq 1-3/n^8$): 
\begin{equation}
  \label{eq:rk_up2}
  \|\err_q\|_2 \;\leq\; \frac{\soq}{16}\cdot \frac{\sigma^*_{min}}{\sigma^*_{max}}\|\duq\|_2+\frac{1}{16}\cdot \sigma^*_{min}\cdot d_\infty([U,\ \Sigma], [U^*,\ \Sigma^*]), 
\end{equation}
%where $\ell^*=\arg\max_{\ell\neq q} \sol \left(\|\dul\|_2+\sdl\right)$. 
Using \eqref{eq:rk_up1} and \eqref{eq:rk_up2}, 
and the fact that   $|\sq^{t+1}-\soq| \leq | \sq^{t+1}\uq^{t+1}-\uoq\soq| $ for normalized vectors $\uq^{t+1}$ and $\uoq$,  
we have: 
\begin{equation}
  \label{eq:rk_up3}
  |\sq^{t+1}-\soq| \;\leq\;  \frac{\soq}{16}\|\duq\|_2+\frac{\sigma_{min}^*}{16}d_\infty([U,\ \Sigma], [U^*,\ \Sigma^*])\leq \frac{\soq}{8} d_\infty([U,\ \Sigma], [U^*,\ \Sigma^*]). 
\end{equation}
Similarly, using \eqref{eq:rk_up1} and \eqref{eq:rk_up3}, we have: 
\begin{equation}
  \label{eq:rk_up4}
  \soq\|\uq^{t+1}-\uoq\|_2 \;\leq\;  \frac{\soq}{4}d_\infty([U,\ \Sigma], [U^*,\ \Sigma^*]). 
\end{equation}
That is, 
\begin{equation}
  \label{eq:rk_up5}
  (\sdq)^{t+1}+\|\duq^{t+1}\|_2 \;\leq\; \frac{1}{2}d_\infty([U,\ \Sigma], [U^*,\ \Sigma^*]). 
\end{equation}
First part of the Theorem now follows by observing that $d_\infty([U^{t+1},\ \Sigma^{t+1}], [U^*,\ \Sigma^*])=\max_q \soq\left((\sdq)^{t+1}+\|\duq^{t+1}\|_2\right)$ and by using the above equation. 

Second part of the Theorem follows directly from Lemma~\ref{lem:rk_incoh}. 
% Lemma now follows by noting that $(\sdq)^{t}=\sdq$ and $\duq^{t}=\duq$. 
%Then, from the analysis of the previous section, we have: $\|B^{-1}(\ip{\u}{\uo}^2 B - C)\vo\|\leq 1/10 \|\u-\uo\|_2$. 

%Finally, using a similar analysis to that of $\|B^{-1}(\ip{\u}{\uo}^2 B - C)\vo$ from the previous section, we have: 
%$$\|B^{-1}(\ip{\u}{\vo}^2B-F)\vo-B^{-1}(\ip{\u}{\v}^2B-G)\v\|\leq 1/10 \|d\|.$$

%Proof now follows by combining the above mentioned inequalities with the update rule for $\u^{t+1}$. 
\end{proof}

\subsection{Technical lemmas for general rank-$r$ analysis}

\begin{lemma}\label{lem:rk_incoh}
Let $\hat{\u}_q^{t+1}$ be obtained by update \eqref{eq:rk_k_update} and let $\u_q^{t+1}=  \hat{\u}_q^{t+1}/\|\hat{\u}_q^{t+1}\|_2$. Also, let the conditions given in Theorem~\ref{thm:rk} hold. Then, w.p. $\geq 1-1/n^9$, $\u_q^{t+1}$ is $2\mu$-incoherent. 
\end{lemma}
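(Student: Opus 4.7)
The plan is to mimic the rank-two incoherence argument (Lemma~\ref{lem:r2_incoh}) but now carrying out all the bookkeeping for the $r-1$ cross components $\ell\neq q$. Starting from \eqref{eq:rk_k_update}, I would write
\[
\widehat{\uq}^{t+1}(i) \;=\; \frac{\soq\, C(i,i)}{B(i,i)}\,\uoq(i) \;+\; \sum_{\ell\neq q}\frac{\sol\, F_\ell(i,i)}{B(i,i)}\,\uol(i) \;-\; \sum_{\ell\neq q}\frac{\sll\, G_\ell(i,i)}{B(i,i)}\,\ul(i),
\]
where $B,C,F_\ell,G_\ell$ are the diagonal matrices defined in \eqref{eq:rk_bcfg}. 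Then I would bound each entry-level ratio $C(i,i)/B(i,i)$, $F_\ell(i,i)/B(i,i)$, and $G_\ell(i,i)/B(i,i)$ using Lemma~\ref{lem:ip_conc} and Lemma~\ref{lem:ip_conc1} with $\gamma$ chosen as in the main theorem. Specifically, with $p\geq C r^2(\sigma^*_{\max})^2\mu^3\log^2 n/((\sigma^*_{\min})^2 n^{3/2})$, concentration gives
\[
\tfrac{1}{p}B(i,i) \in [1-\gamma,1+\gamma], \qquad \tfrac{1}{p}C(i,i) = \ip{\uq}{\uoq}^2 \pm \gamma,
\]
and analogously $F_\ell(i,i)/p = \ip{\uq}{\uol}^2\pm \gamma$, $G_\ell(i,i)/p = \ip{\uq}{\ul}^2\pm \gamma$.

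Next I would use the distance hypothesis $d_\infty([U,\Sigma],[U^*,\Sigma^*])\le \tfrac{1}{1600r}\tfrac{\sigma^*_{\min}}{\sigma^*_{\max}}$ together with orthonormality of $\uo_\ell$'s and $2\mu$-incoherence of $\ul$'s to bound the cross inner products: $|\ip{\uq}{\uol}|\le \|\duq\|_2 + \|\dul\|_2$ and $|\ip{\uq}{\ul}|\le \|\duq\|_2 + \|\dul\|_2 + |\ip{\uoq}{\uol}|=O(1/r)$ for $\ell\neq q$, so $\ip{\uq}{\uol}^2$ and $\ip{\uq}{\ul}^2$ are both $O(1/r^2)\cdot(\sigma^*_{\min}/\sigma^*_{\max})^2$ after squaring. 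Combining these with the incoherence bounds $|\uoq(i)|,|\uol(i)|\le \mu/\sqrt n$ and $|\ul(i)|\le 2\mu/\sqrt n$ yields
\[
|\widehat{\uq}^{t+1}(i)| \;\le\; \frac{\soq(1+2\gamma)}{1-\gamma}\cdot\frac{\mu}{\sqrt n} + \sum_{\ell\neq q}\frac{O(\sigma^*_{\max}/r^2) + \gamma}{1-\gamma}\cdot\frac{2\mu}{\sqrt n}\;\le\; \soq\Bigl(1+\tfrac{1}{16}\Bigr)\frac{\mu}{\sqrt n},
\]
where the second sum is geometric/additive in $r$ but each term is $O(1/r^2)$, so the total contribution is of order $\sigma^*_{\max}/r$, which is dominated by $\soq/16$ under our parameter choice.

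Finally, from \eqref{eq:rk_up1}--\eqref{eq:rk_up2} I already know that $\|\widehat{\uq}^{t+1}\|_2 \ge \soq(1-\tfrac{1}{8})$, so dividing gives $|\u_q^{t+1}(i)|\le \frac{\soq(1+1/16)}{\soq(1-1/8)}\cdot\frac{\mu}{\sqrt n} \le \frac{2\mu}{\sqrt n}$ as required. The probability bound follows by taking a union bound over the $n$ coordinates $i$ and the $r$ cross indices $\ell$; since the concentration lemmas give $1-1/n^{10}$ per entry, the final failure probability is at most $rn/n^{10}\le 1/n^9$. The main obstacle, as in the rank-two analysis, is ensuring that the $r-1$ error terms from the cross components do not blow up the incoherence constant — this is what forces the quadratic dependence on $r$ in the sample complexity and the careful use of orthogonality in controlling $\ip{\uq}{\uol}^2$.
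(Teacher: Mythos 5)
Your proposal is correct and follows the same route as the paper: bound each coordinate of $\widehat{\uq}^{t+1}$ via the ratios $C(i,i)/B(i,i)$, $F_\ell(i,i)/B(i,i)$, $G_\ell(i,i)/B(i,i)$, apply Lemmas~\ref{lem:ip_conc} and~\ref{lem:ip_conc1} together with the $d_\infty$ hypothesis and orthogonality of the $\uo_\ell$'s to absorb the $r-1$ cross terms, then divide by $\|\widehat{\uq}^{t+1}\|_2\approx\soq$ (the lower bound coming from \eqref{eq:rk_up1}--\eqref{eq:rk_up2}, exactly as the paper invokes \eqref{eq:rk_up5}). The only nit is the final union-bound arithmetic: the concentration lemmas already cover all coordinates $i$ at once, so the correct count is over the $O(r)$ values of $\ell$ giving failure probability $O(r)/n^{10}\le 1/n^9$, not $rn/n^{10}$ which would need $r\le 1$.
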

\begin{proof}
  Using \eqref{eq:rk_k_update} and the definitions of $B$, $C$, $F_\ell$, $G_\ell$ given in \eqref{eq:rk_bcfg}, we have: 
  \begin{align}
    |\hat{\u}_q^{t+1}(i)|&\leq \soq\frac{|C(i,i)|}{|B(i,i)|}\frac{\mu}{\sqrt{n}} + \sum_\ell \sol\frac{|F_{\ell}(i,i)|}{|B(i,i)|} \frac{\mu}{\sqrt{n}} +  \sll\frac{|G_{\ell}(i,i)|}{|B(i,i)|} \frac{\mu}{\sqrt{n}},\nonumber\\
&\leq\left(\soq(1+\gamma)/(1-\gamma)+\sum_\ell \sol (\gamma+\|\dul\|_2) + 2\sum_\ell \sol\cdot (1+\sdl)\cdot (\gamma+\|\dul\|_2)\right)\mu/\sqrt{n},\nonumber\\
&\leq \soq (1+\frac{1}{100}) \cdot \frac{\mu}{\sqrt{n}}\label{eq:rk_incoh1}
  \end{align}
where the second inequality follows by bounds on $B_{ii}, C_{ii}, F_{ii}, G_{ii}$ obtained using Lemma~\ref{lem:ip_conc1} and the distance bound $d_\infty\left([\u_{1}, \u_{2}], [\uo_1, \uo_2]\right)$. 

Lemma now follows using \eqref{eq:rk_incoh1} and the bound on $|\sq^{t+1}-\soq|$ given by \eqref{eq:rk_up5}. 
\end{proof}

\begin{lemma}\label{lem:rk_err1}
Let $\dul=\uol-\ul$ and $\sdl = |\sll-\sol|/ \sol$, where $\|\dul\|_2\leq 1$. Let $\uol, \ul$, $\forall 1\leq \ell \leq r$ be unit vectors and let $\ip{\uol}{\uoq}=0$, $\forall \ell\neq q$. Then, the following holds:
  $$\|\sol\cdot\ip{\uq}{\uol}^2\uol-\sll\cdot\ip{\uq}{\ul}^2\ul\|\leq 4\sol(\|\dul\|_2+\|\duq\|_2)(\|\dul\|_2+\sdl).$$
\end{lemma}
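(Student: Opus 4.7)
The plan is to reduce the rank-$r$ error estimate to the rank-two error estimate already proven (Lemma~\ref{lem:r2_err1}), plus a singular-value perturbation correction. Adding and subtracting $\sol\ip{\uq}{\ul}^2\ul$, I decompose
\[
\sol\ip{\uq}{\uol}^2\uol - \sll\ip{\uq}{\ul}^2\ul \;=\; \sol\bigl(\ip{\uq}{\uol}^2\uol - \ip{\uq}{\ul}^2\ul\bigr) \;+\; (\sol-\sll)\ip{\uq}{\ul}^2\ul,
\]
and bound the two summands separately.

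For the first summand I apply Lemma~\ref{lem:r2_err1} with the substitution $\u\mapsto \uq$, $\vo\mapsto \uol$, $\v\mapsto \ul$; its hypotheses hold because $\uoq, \uol$ are unit vectors with $\ip{\uoq}{\uol}=0$ and $\|\dul\|_2 \leq 1$ by assumption. This yields
\[
\sol\bigl\|\ip{\uq}{\uol}^2\uol - \ip{\uq}{\ul}^2\ul\bigr\|_2 \;\leq\; 4\sol(\|\duq\|_2+\|\dul\|_2)\|\dul\|_2.
\]

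For the second summand I control $|\ip{\uq}{\ul}|$ via the orthogonality $\ip{\uoq}{\uol}=0$ and Cauchy--Schwarz:
\[
|\ip{\uq}{\ul}| \;\leq\; |\ip{\uq}{\ul-\uol}| + |\ip{\uq-\uoq}{\uol}| + |\ip{\uoq}{\uol}| \;\leq\; \|\dul\|_2 + \|\duq\|_2,
\]
so $\|(\sol-\sll)\ip{\uq}{\ul}^2\ul\|_2 \leq \sol\sdl(\|\duq\|_2+\|\dul\|_2)^2$. Since each of $\|\duq\|_2$ and $\|\dul\|_2$ is the norm of a difference of unit vectors, $\|\duq\|_2+\|\dul\|_2 \leq 4$, and the second summand contributes at most $4\sol\sdl(\|\duq\|_2+\|\dul\|_2)$. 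Summing the two bounds gives $\sol(\|\duq\|_2+\|\dul\|_2)(4\|\dul\|_2 + 4\sdl) = 4\sol(\|\duq\|_2+\|\dul\|_2)(\|\dul\|_2+\sdl)$, as claimed.

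I do not anticipate any substantive obstacle: the reduction to Lemma~\ref{lem:r2_err1} is the only conceptual move, and the remaining steps are elementary bookkeeping. The structural point worth flagging is that the only property of the rank-$r$ CP decomposition used here is the pairwise orthogonality $\ip{\uoq}{\uol}=0$; consequently, the cross-term in the $q$-th update involving any single factor $\uol$ with $\ell\neq q$ decouples cleanly into a rank-two vector-error piece (handled by Lemma~\ref{lem:r2_err1}) plus a scalar-perturbation piece, which is precisely the decomposition above.
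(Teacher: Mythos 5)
Your proof is correct and takes essentially the same route as the paper: you add and subtract $\sol\ip{\uq}{\ul}^2\ul$ to split the error into a vector-perturbation term (handled by Lemma~\ref{lem:r2_err1}) and a singular-value perturbation term $(\sol-\sll)\ip{\uq}{\ul}^2\ul$, exactly as the paper does. The only cosmetic differences are that the paper re-derives the Lemma~\ref{lem:r2_err1} bound inline rather than citing it, and bounds the second term via $\ip{\uq}{\ul}^2\leq|\ip{\uq}{\ul}|\leq 2(\|\duq\|_2+\|\dul\|_2)$ rather than your $(\|\duq\|_2+\|\dul\|_2)^2\leq 4(\|\duq\|_2+\|\dul\|_2)$; both close the same way.
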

\begin{proof}
Let $\sll=\sol+\Delta^{\sigma}_\ell$. 

Now, 
\begin{equation}
\ip{\uq}{\ul}^2=(\ip{\uq}{\uol}+\ip{\uq}{\dul})^2=\ip{\uq}{\uol}^2+\ip{\uq}{\dul}^2+2\ip{\uq}{\uol}\ip{\uq}{\dul}. 
  \label{eq:rk_diff1}
\end{equation}
Hence, 
\begin{align}
  \|\ip{\uq}{\uol}^2\uol-\ip{\uq}{\ul}^2\ul\|_2&= \|\ip{\uq}{\ul}^2\dul-\ip{\uq}{\dul}^2\uol-2\ip{\uq}{\uol}\ip{\uq}{\dul}\uol\|_2\nonumber,\\
&\leq \ip{\uq}{\ul}^2\|\dul\|_2+\|\dul\|^2+2|\ip{\uq}{\uol}|\|\dul\|_2.\label{eq:rk_diff2}
\end{align}
Now, $\ip{\uq}{\uol}=\ip{\duq}{\uol}\leq \|\duq\|_2$. Also, $\ip{\uq}{\ul}^2\leq \ip{\uq}{\ul}=(\ip{\duq}{\uol}+\ip{\uoq}{\dul}+\ip{\duq}{\dul})\leq 2(\|\duq\|+\|\dul\|)$. 
Using the above observations with \eqref{eq:rk_diff2}, we have: 
$$\|\sol\cdot\ip{\uq}{\uol}^2\uol-\sll\cdot\ip{\uq}{\ul}^2\ul\|\leq \sol\|\ip{\uq}{\uol}^2\uol-\ip{\uq}{\ul}^2\ul\|_2+\sol\cdot \sdl\ip{\uq}{\ul}^2. $$
Lemma now follows by combining the above equation with the above given bound on $\ip{\uq}{\ul}^2$. 
  %Then, $\ip{\u}{\vo}^2=\ip{d_u}{\vo}^2\leq \|d_u\|^2,$ as $\uo \perp \vo$. Similarly, $\ip{\u}{\v}^2=(\ip{d_u}{\vo}+\ip{\uo}{d_v}+\ip{d_u}{d_v})^2\leq 8($
\end{proof}

\begin{lemma}\label{lem:rk_err2}
Let $\ul$, $\dul,\ \sdl,\ \forall \ell$ be as defined in Theorem~\ref{thm:rk} and let $B, F_\ell, G_\ell$ be as defined in \eqref{eq:rk_bcfg}. Also, let $T$ and $p$ satisfy assumptions of Theorem~\ref{thm:rk}. Then, the following holds with probability $\geq 1-4/n^9$: 
  $$\|\sol\cdot(\ip{\uq}{\uol}^2B-F_\ell)\uol-\sll\cdot(\ip{\uq}{\ul}^2B-G_\ell)\ul\|_2\leq 8\gamma \cdot \sol\cdot (\sdl +\|\dul\|_2).$$
\end{lemma}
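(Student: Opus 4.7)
The plan is to mimic the proof of Lemma~\ref{lem:r2_err4} in the rank-two case, but with the additional task of separating the vector-perturbation $\dul = \ul - \uol$ from the singular-value-perturbation $\sdl$. To that end, I would first use the algebraic splitting
\begin{align*}
\sol(\ip{\uq}{\uol}^2B-F_\ell)\uol - \sll(\ip{\uq}{\ul}^2B-G_\ell)\ul
&= \sol\,\bigl[(\ip{\uq}{\uol}^2B-F_\ell)\uol - (\ip{\uq}{\ul}^2B-G_\ell)\ul\bigr]\\
&\quad + (\sol-\sll)\,(\ip{\uq}{\ul}^2B-G_\ell)\ul,
\end{align*}
which isolates a vector-perturbation piece scaling with $\sol$ and a singular-value piece scaling with $|\sll-\sol| \leq \sol \sdl$.

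For the vector-perturbation bracketed piece, I would substitute $\ul = \uol + \dul$ into the definitions \eqref{eq:rk_bcfg}, yielding $G_\ell = F_\ell + D^1 + D^2 + D^3$ with $D^1, D^2, D^3$ obtained by replacing one or two copies of $\uol$ in the defining sum of $F_\ell$ by $\dul$. Analogously, $\ip{\uq}{\ul}^2$ expands into $\ip{\uq}{\uol}^2$ plus three cross-terms involving $\dul$. Collecting terms exactly as in the proof of Lemma~\ref{lem:r2_err4}, the bracketed expression reduces to a sum of matrices of the generic form $(\ip{\uq}{a}\ip{\uq}{b}B - R)c$ with $a,b \in \{\uol,\dul/\|\dul\|_2\}$ and $c \in \{\uol, \dul\}$, each of which contains at least one factor of $\dul$, plus a residual term $(\ip{\uq}{\ul}^2 B - G_\ell)\dul$. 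Invoking Lemma~\ref{lem:r2_err2} on the first type (after normalizing $\dul$ to unit length and pulling out the explicit scalar $\|\dul\|_2$) and Lemma~\ref{lem:r2_err3} on the residual's operator norm produces a bound of order $\gamma \sol \|\dul\|_2$.

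The singular-value piece can be handled in one shot: by definition $|\sol-\sll| \leq \sol \sdl$, and Lemma~\ref{lem:r2_err3} applied with $a=b=\ul$ and $\u=\uq$ (using the $2\mu$-incoherence of $\ul$ guaranteed by Theorem~\ref{thm:rk}) yields $\|\ip{\uq}{\ul}^2 B - G_\ell\|_2 \leq 2\gamma$, so this piece is bounded by $2\gamma \sol \sdl$. The triangle inequality then combines the two to yield the claimed $8\gamma \sol(\sdl + \|\dul\|_2)$ after absorbing the $O(1)$ constants. The main obstacle is purely bookkeeping: one must verify at each invocation that the vectors supplied to Lemmas~\ref{lem:r2_err2} and \ref{lem:r2_err3} satisfy the required $\mu/\sqrt{n}$ entry-bound (which holds for $\uol,\ul$ by incoherence and for $\dul/\|\dul\|_2$ up to a constant factor absorbable into $\mu$), and then to take a union bound over the $O(1)$ failure events from the constituent applications. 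A subtle point is that one cannot afford a contribution to the error that scales only with $\gamma$ (with no factor of $\|\dul\|_2$ or $\sdl$); the splitting above is engineered so that every such ``pure'' $\gamma$ contribution is multiplied by $|\sol - \sll|$.
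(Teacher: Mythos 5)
Your decomposition and overall strategy match the paper's proof: you isolate the vector-perturbation piece (scaled by $\sigma_\ell^*$) from the singular-value piece (scaled by $\sigma_\ell - \sigma_\ell^*$), expand $G_\ell = F_\ell + D^1 + D^2 + D^3$ and $\ip{\uq}{\ul}^2 = \ip{\uq}{\uol}^2 + 2\ip{\uq}{\uol}\ip{\uq}{\dul} + \ip{\uq}{\dul}^2$, collect terms so that each surviving piece carries a factor of $\dul$, and then invoke the two technical concentration lemmas. That is exactly the paper's route, including the observation that the singular-value piece is killed by the factor $|\sigma_\ell - \sigma_\ell^*| \leq \sigma_\ell^*\sdl$.

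One step in your write-up would not survive closer scrutiny: for terms such as $(\ip{\uq}{\uol}\ip{\uq}{\dul}B - D^1)\uol$, you propose to apply Lemma~\ref{lem:r2_err2} after normalizing $\dul$ to unit length and pulling out $\|\dul\|_2$. But Lemma~\ref{lem:r2_err2} requires its input vectors to satisfy the $\mu/\sqrt{n}$ entry bound, and while $\dul = \uol - \ul$ itself satisfies $\|\dul\|_\infty \leq 3\mu/\sqrt{n}$ (since $\uol$ is $\mu$-incoherent and $\ul$ is $2\mu$-incoherent), the normalized vector $\dul/\|\dul\|_2$ has entry bound $3\mu/(\sqrt{n}\|\dul\|_2)$, which can be much larger than $\mu/\sqrt{n}$ when $\|\dul\|_2$ is small — precisely the regime we are in. The clean fix, which is what the paper does, is to avoid normalization entirely and instead apply Lemma~\ref{lem:r2_err3} directly with $b = \dul$ (unnormalized): its conclusion $\|(\ip{\u}{a}\ip{\u}{b}B - R)\|_2 \leq 2\gamma\|b\|_2$ already produces the desired $\|\dul\|_2$ factor without manipulating the entry bound. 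Then multiplying by $\|\uol\|_2 = 1$ (or, for the residual term, by $\|\dul\|_2$) gives the $O(\gamma\|\dul\|_2)$ contributions, and Lemma~\ref{lem:r2_err2} is reserved for the single term $(\ip{\uq}{\ul}^2 B - G_\ell)\ul$ in the singular-value piece where no $\|\dul\|_2$ factor is needed.
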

\begin{proof}
\begin{equation}\label{eq:rk_diff3}(\ip{\uq}{\ul}^2B-G_\ell)\ul=(\ip{\uq}{\ul}^2B-G_\ell)\uol+(\ip{\uq}{\ul}^2B-G_\ell)\dul.\end{equation}
Now,
\begin{align}
G_\ell(i,i)&=\sum_{jk}\delta_{ijk}\uqj\uqk\ulj\ulk=\sum_{jk}\delta_{ijk} \uqj\uqk (\uolj+\dulj)(\uolk+\dulk)\nonumber\\
&=\sum_{jk}\delta_{ijk}\uqj\uqk(\uolj\uolk+\dulj\uolk+\uolj\dulk+\dulj\dulk)\nonumber\\&=F_\ell(i,i)+D^1(i,i)+D^2(i,i)+D^3(i,i).\label{eq:rk_diff4}
\end{align}
Using  \eqref{eq:rk_diff1}, and \eqref{eq:rk_diff4}, we have: 
  \begin{multline}
    (\ip{\uq}{\ul}^2B-G_\ell)=(\ip{\uq}{\uol}^2B-F_\ell)+(\ip{\uq}{\uol}\ip{\uq}{\dul}B-D^1)\\+(\ip{\uq}{\uol}\ip{\uq}{\dul}B-D^2)+(\ip{\uq}{\dul}^2B-D^3)
  \end{multline}
Combining the above equation with \eqref{eq:rk_diff3}, we get: 
\begin{multline}
  (\ip{\uq}{\ul}^2B-G_\ell)\ul-(\ip{\uq}{\uol}^2B-F_\ell)\uol=(\ip{\uq}{\uol}\ip{\uq}{\dul}B-D^1)\uol+(\ip{\uq}{\uol}\ip{\uq}{\dul}B-D^2)\uol\\+(\ip{\uq}{\dul}^2B-D^3)\uol - (\ip{\uq}{\ul}^2B-G_\ell)\dul. 
\end{multline}

Hence, using Lemma~\ref{lem:r2_err2} and \ref{lem:r2_err3}, we get: 
\begin{equation}
  \sol\|(\ip{\uq}{\ul}^2B-G_\ell)\ul-(\ip{\uq}{\uol}^2B-F_\ell)\uol\|_2\leq 8\gamma \sol\|\dul\|_2.\label{eq:rk_diff5}
\end{equation}

Similarly, using Lemma~\ref{lem:r2_err2}, we have: 
\begin{equation}
  \label{eq:rk_diff6}
  \sdl\cdot \sol \|(\ip{\uq}{\ul}^2B-G_\ell)\ul\|_2\leq \gamma \cdot \sdl \cdot \sol. 
\end{equation}
Lemma now follows by combining \eqref{eq:rk_diff5}, \eqref{eq:rk_diff6}, and by using triangular inequality. 
\end{proof}

\subsection{Proof of Lemma \ref{lem:tensordist}}
\label{app:tensordist}
\begin{proof}
	\begin{align*}
&	\| \sigma_a(u_a\otimes u_a \otimes u_a)-\sigma^*_a(u_a^*\otimes u_a^* \otimes u_a^*) \|_F \\
& \hspace{1cm} \leq \; \tepsilon\sigma_a^*+ \sigma_a^* \|(u_a\otimes u_a \otimes u_a)-(u_a^*\otimes u_a^* \otimes u_a^*)\|_F\\
& \hspace{1cm} \leq \; \tepsilon\sigma_a^*+ \sigma_a^*
		\Big( \|(u_a-u_a^*)\otimes u_a^* \otimes u_a^*\|_F+\|u_a\otimes (u_a-u_a^*) \otimes u_a^*\|_F - \|u_a\otimes u_a \otimes (u_a-u^*_a)\|_F\Big)\\
& \hspace{1cm} \leq \; 4\,\tepsilon\,\sigma_a^* \;.
	\end{align*}
Similarly, applying Cauchy-Schwartz,we get for $a\neq b$,  
	\begin{align*}
&	\langle \sigma_a(u_a\otimes u_a \otimes u_a)-\sigma^*_a(u_a^*\otimes u_a^* \otimes u_a^*), 	
		\sigma_b(u_b\otimes u_b \otimes u_b)-\sigma^*_b(u_b^*\otimes u_b^* \otimes u_b^*) \rangle \\
& \hspace{1cm} \leq \; 16\,\tepsilon^2\,\sigma_a^*\sigma_b^*\;.
	\end{align*}
It follows that 
	\begin{eqnarray*}
	\|T-\hT\|_F^2 &=& \sum_{a,b\in[r]} \langle \sigma_a(u_a\otimes u_a \otimes u_a)-\sigma^*_a(u_a^*\otimes u_a^* \otimes u_a^*) ,
		\sigma_b(u_b\otimes u_b \otimes u_b)-\sigma^*_b(u_b^*\otimes u_b^* \otimes u_b^*) \rangle\\
	&\leq& 16 \,\tepsilon^2\, (\sum_a\sigma^*_a)^2 \;\; \leq\; (4 \,\tepsilon\, r\,\sigma_{\rm max}^*)^2 \;\leq\;\; 
	(4 \,\tepsilon\, r^{1/2}\,\|T\|_F)^2\;.
	\end{eqnarray*}
\end{proof}

%%% Local Variables: 
%%% mode: latex
%%% TeX-master: "tensorcomp"
%%% End: 

\end{document}